\documentclass[runningheads]{llncs}
\usepackage[T1]{fontenc}
\usepackage{xcolor,colortbl}
\usepackage[prefix=]{xcolor-material}
\usepackage{todonotes}

\newcommand{\Land}{\bigwedge}

\newcommand{\pmaps}{\hookrightarrow}

\newcommand{\booleans}{\mathbb{B}}
\newcommand{\reals}{\mathbb{R}}

\newcommand{\naturals}{\mathbb{N}}

\newcommand{\xcount}{\ensuremath{\mathsf{XCount}}}

\newcommand{\exactcount}{\ensuremath{\mathsf{ExactCount}}}

\newcommand{\bitmask}{\mathsf{BitMasks}}
\newcommand{\bitmaskgen}{\mathsf{BitMaskGen}}
\newcommand{\glbmask}{\mathsf{GlobalMaskSet}}
\newcommand{\probs}{\mathsf{Probs}}
\newcommand{\thresh}{\mathsf{Thresh}}
\newcommand{\EncodeADD}{\mathsf{EncodeADD}}
\newcommand{\ToBDD}{\mathsf{ToBDD}}
\newcommand{\ganak}{\mathsf{Ganak}}
\newcommand{\approxmc}{\mathsf{ApproxMC}}
\newcommand{\xbase}{\mathsf{ADD\text{-}baseline}}
\newcommand{\sol}{\mathsf{Sol}}
\newcommand{\dist}{\mathsf{Dist}}

\newcommand{\ToADD}{\mathsf{ToADD}}

\newcommand{\DiffBDD}{\varphi}
\newcommand{\DiffADD}{\textsc{DiffADD}}
\newcommand{\DiffSum}{\textsc{DiffSum}}

\usepackage{amsmath,amssymb}
\usepackage{algorithm}
\usepackage{algpseudocode}

\algtext*{EndIf}
\algtext*{EndFor}
\algtext*{EndWhile}
\algtext*{EndRepeat}

\usepackage{graphicx}
\usepackage{subcaption}
\usepackage{forest}

\usepackage{float}

\usepackage{placeins}

\usepackage{caption}
\usepackage{graphicx}
\usepackage{booktabs}
\usepackage{multirow}
\usepackage{wrapfig}
\begin{document}
%
\title{Quantifying Sensitivity for Tree Ensembles: A symbolic and compositional approach}
\author{Ajinkya Naik\inst{1}\thanks{Corresponding author.} \and Chaitanya Garg\inst{1} \and S. Akshay\inst{1} \and Ashutosh Gupta\inst{1} \and Kuldeep S. Meel\inst{2}\thanks{The last three authors' names are diplayed in alphabetical order.  Part of this work was done while Meel was visiting IIT Bombay. 
}}
\institute{Indian Institute of Technology Bombay, India \and University of Toronto, Canada}

%
\titlerunning{Quantifying Sensitivity for Tree Ensembles}
%
%
\authorrunning{A. Naik, C. Garg, S. Akshay, A. Gupta, K.S. Meel}
%
%
\maketitle              

\begin{abstract}
Decision tree ensembles (DTE) are a popular model for a wide range of AI classification tasks, used in multiple safety critical domains, and hence verifying properties on these models has been an active topic of study over the last decade. 
One such verification question is the problem of sensitivity, which asks, given a DTE, whether a small change in subset of features can lead  to misclassification of the input. 

In this work, our focus is to build a quantitative notion of sensitivity, tailored to DTEs, by discretizing the input space of the model and enumerating the regions which are susceptible to sensitivity. We propose a novel algorithmic technique that can perform this computation efficiently, within a certified error and confidence bound. Our approach is based on encoding the problem as an algebraic decision diagram (ADD), and further splitting it into subproblems that can be solved efficiently and make the computation compositional and scalable. We evaluate the performance of our technique over benchmarks of varying size in terms of number of trees and depth, comparing it against the performance of model counters over the same problem encoding. Experimental results show that our tool $\xcount$ achieves significant speedup over other approaches and can scale well with the increasing sizes of the ensembles.

\keywords{Individual fairness \and Decision tree ensembles \and Algebraic Decision Diagrams \and Model counting}

\end{abstract}
\section{Introduction}
\label{sec:introduction}
Artificial Intelligence (AI)  models are now ubiquitous in their use in almost all walks of life. While complex models such as transformers, variants of neural networks, and the like, now scale to millions of variables, a major concern has been the difficulty in interpretability and verifiability of such models. At the other end of the spectrum, decision trees are classical models which are known to be interpretable, but are not really scalable. A middle path that has proven to be effective is to train collections of decision trees, called Decision Tree Ensembles (DTEs), that tread the fine line between expressivity and scalability. Since the seminal paper of~\cite{xgboost}, introduced the XGBOOST training algorithm that has since been used, adapted, analyzed and improved over the last decade in different applications and settings~\cite{survey2023}. Today different training algorithms for DTEs including XGBOOST, random forests~\cite{sklearn}, LGBMs~\cite{lgbm} etc are routinely used in many applications~\cite{telecom,WaterResources,health1}, especially in the financial sector~\cite{banking,bank2} due to the ease of modeling tabular data using DTEs. 

Given its many applications, a rich line of work has considered formal verfication of different properties over DTEs~\cite{kantchelian,Calzavara2022,veritas2021,chen,ChenZS0BH19,veritas_multi,WangZCBH20,Ignatiev22,Ignatiev25}. These properties include, evasion, local robustness, sensitivity, fairness, explainability and more. A recent work in~\cite{ahmad2025sensitivity} considered the problem of sensitivity, which asks if a given DTE is sensitive to a set of features, i.e., does there exist a pair of inputs (called a sensitive pair) such that they differ only on sensitive features (by potentially a small margin), but their outputs are classified differently (with a high confidence). Sensitivity is closely related to individual fairness or causal discrimination~\cite{fairness,dwork2012fairness,KDD25}, since any such sensitive pair is a witness to unfairness, when sensitive features are seen as protected features. This problem is quite different from the well-studied local robustness problem, which asks if a model is robust around a given instance. The main difference is that while local robustness can be solved by searching around the neighborhood of a given input, sensitivity (much like the harder global robustness problem), requires searching across the entire input space (of a dimension that is related to the number of non-sensitive features). In~\cite{ahmad2025sensitivity}, the authors show NP-hardness of the problem and use a pseudo-Boolean encoding showing that their resulting tool could check and produce sensitive pairs for reasonably large DTE benchmarks. In~\cite{varshney2026data}, an optimized MILP encoding was proposed for solving this problem, that not only extends to larger and multi-class tree ensembles, but also finds sensitive pairs that are close to the data-distribution. However, a shortcoming in both these works, was that they only check presence of a single sensitive pair, which does not necessarily capture sensitivity of the model. Indeed, what is needed is to {\em quantify the sensitivity} of a DTE, ie., to count the number of the sensitive pairs (or estimate the volume of the sensitive region) in a DTE. In fact, it would be even more useful to count the number of input regions for which we witness sensitivity of the model, i.e., existence of sensitive pairs, one inside this region and another outside, as this captures a quantitative measure of sensitivity of the model. This is the problem that we address in this paper.

Quantitative verification problems are evidently harder to tackle than plain checking existence or satisfiability. Even for the simple setting of Boolean formulae, this is complexity-theoretically captured by the difference between $\# P$-hardness vs NP-hardness, with real difference being seen in tools; SAT-solvers routinely scale to millions of variables, while model counters (that count the number of satisfying assignments) are much more conservative. Most exact model counters, including the state-of-the-art d4~\cite{LMB24} and Ganak~\cite{SM2025,SRSM19} work by knowledge compilation, i.e., compiling the formula into a form in which counting is easy. Approximate model counters such as ApproxMC~\cite{YM23,SGM20} often scale better at the cost of providing PAC-guarantees rather than exact guarantees. Recent work has considered quantitative notions of robustness for neural networks~\cite{DowningBultan24,fairquant25}, but to the best of our knowledge, we do not know any work that deals with quantitative versions of sensitivity or fairness in decision tree ensembles.

While moving from counting models in Boolean formulae to counting sensitive pairs or regions in DTEs, there are three questions that immediately present themselves. First, what should be the quantitative notion of sensitivity in DTEs? Second, what could be a knowledge representation form for such a quantitative version of sensitivity? Third, can we obtain algorithms at scale at the cost of getting an approximate count with formal guarantees? In this paper, we address these three questions and develop a new compositional algorithm that combines ideas from exact and approximate model counting, knowledge compilation and sensitivity verification tailored for decision tree ensembles. We implement the resulting algorithm and present experimental results to show the effectiveness of our solution. Our contributions are as follows:
\begin{enumerate}
\item We exploit the inherent structure of DTEs to formulate a discretized quantitative version of the problem of counting sensitive regions. We can then use Algebraic decision diagrams (ADDs)~\cite{bahar1997algebric}, a quantitative extension of Binary decision diagrams (BDDs)~\cite{bryant1986graph} as a natural model to obtain the exact count. However, compiling a DTE to ADD is rather cumbersome and does not lead to efficient algorithms. 
\item As our main algorithmic contribution, we develop a novel compositional technique, that subdivides the problem into subproblems for each of which we can obtain the count separately and in parallel. However, these counts may overlap, and the critical difficulty we have to overcome is to obtain a correct total count from overlapping subproblem counts. Our key contribution is to design a splitting and merging strategy that gives an approximate count with strong theoretical soundness guarantees.
  To do this, we adapt a technique to approximate the volume of the union of sets in the streaming model from database theory~\cite{MVC21,KuldeepIJCAI23}.
\item We implement our algorithm and perform experiments on a large suite of 3194 benchmark instances coming from 10 different datasets, with trees ranging from 10 to 100, depth varying from 3 to 6, with at least 8 features. We compare our approach against the ADD-monolithic baseline, the state-of-the-art tools for exact and approximate model counting. Our experimental results show that outperform the other approaches by an order of magnitude, while maintaining errors within the prescribed margin.
\item We also provide an application and validation of our counting approach and model, by showing that it can be used to check effectiveness of regularization techniques in DTEs. More precisely, we validate that when a given DTE is regularized, the count of sensitivity decreases. 
\end{enumerate}

Our paper is structured as follows. In Section~\ref{sec:preliminaries}, we start with some preliminaries followed by the problem statement and example in Section~\ref{sec:problem}. Our algorithm is presented in Section~\ref{sec:algorithm} and we present our experimental results in Section~\ref{sec:experiments}. Finally, we provide a case-study regarding utility of counting sensitivity in DTE which also helps establish validity of our results in Section~\ref{sec:motivation} followed by a brief conclusion in Section~\ref{sec:conclusion}. Several additional details and further experiments are given in the Appendix.
\medskip

\noindent{\it Related Work.} For neural networks, the question of global robustness and individual fairness was dealt with recently in~\cite{athavale24}, where the authors consider Deep Neural Networks (DNNs) and focus on a linear approximation of the softmax, extending a state-of-the-art local robustness neural network verifier to handle global fairness. An approach using an SMT-based approach was developed in~\cite{fairify23}, where the authors exploit the fact that several neurons in the network remain inactive to prune the network and make it amenable for formal verification. Neither of these approaches consider quantitative notions. In a very recent work~\cite{fairquant25} the authors quantify the degree of fairness in DNNs by computing the percentages of inputs whose classification outputs can be certified as fair or falsified as unfair. But their method is incomplete as it uses counter-example guided refinement and stops after some number of iterations declaring the remaining inputs as undecided. Sound but incomplete approaches for arbitrary ML models were also considered in~\cite{UAI20} for linear and kernelized polynomial models using semi-definite programming for verification. Different from all these works, our approach is deeply embedded in the DTE formalism, uses ADD based compilation and probabilistic reasoning to remain sound and complete with Probably Approximately Correct (PAC) guarantees.

As mentioned earlier, for DTEs, the literature has mainly focussed on local robustness verification, with different techniques such as abstract-interpretation~\cite{Ranzato_Zanella_2020}, dynamic programming based~\cite{WangZCBH20} and clique-based approaches~\cite{veritas2021,veritas_multi}. For sensitivity and individual fairness~\cite{ahmad2025sensitivity} uses a pseudo-Boolean encoding, while ~\cite{varshney2026data} uses an MILP encoding.
 However, none of these approaches consider quantitative notions of sensitivity and hence are rather different from our work. 

Finally, normal forms including BDDs, ADDs and variants of decomposable Negation normal forms have often been explored for model counting in the literature~e.g.,\cite{DBLP:conf/aaai/DudekPV20,ColnetIJCAI24}. Recent work for DNF model counting~\cite{KuldeepIJCAI23} built on ideas of approximating the volume of union of sets from~\cite{MVC21} while also using compositionality. Some model counting based ideas were exploited for group fairness verification in~\cite{GBM22}, but these do not lift to sensitivity.

\section{Preliminaries and Notation}
\label{sec:preliminaries}
In this section, we establish the notations used throughout the paper. We denote the set of real numbers by $\reals$, natural numbers by $\naturals$ and Booleans \{0,1\} by $\booleans$. Vectors are represented by bold lowercase letters (e.g., $\mathbf{x}$), while their individual components are indicated by subscripts (e.g., $x_i$). The set of satisfying assignments of a Boolean formula $\varphi$ is denoted by $\sol$($\varphi$), with its cardinality $|\sol$($\varphi)|$. We consider the standard supervised learning setting where an input vector $\mathbf{x}$ is drawn from a feature space $\mathcal{D} \subseteq \reals^m$ and mapped to an output $y \in \reals$ by a learned function $h: \mathcal{D} \to \reals$.
\medskip

\noindent{\bf Decision Tree Ensembles.}
Let $F = \{1, \dots, k\}$ be a set of $k$ feature indices. An input instance is as a vector $\mathbf{x} = (x_1, \dots, x_k)$, where each $x_f$ corresponds to the value of feature $f \in F$. A \textit{Decision Tree Ensemble} (DTE) is a predictive model composed of a collection of $n$ decision trees, denoted by $\mathcal{T} = \{T_1, \dots, T_n\}$~\cite{breiman2001random}. Each individual tree $T_i: \mathcal{D} \to \reals$ is a rooted binary structure where:
\begin{itemize}
    \item Each \textbf{internal node} $u$ performs a split based on a guard predicate $g(\mathbf{x}) \equiv (x_f \le \theta)$, where $f \in F$ is the splitting feature and $\theta \in \reals$ is a threshold.
    \item Each \textbf{leaf node} $l$ stores a constant prediction value $v_l \in \reals$.
\end{itemize}

\begin{figure}[t]
\centering
\begin{subfigure}{0.45\textwidth}
\centering
\scalebox{0.65}{
\begin{forest}
for tree={
    draw,
    rounded corners,
    align=center,
    s sep=10mm,
    l sep=8mm
}
[$f_0 < 4$
    [$f_1 < 3$
        [Leaf: 30]
        [Leaf: -30]
    ]
    [Leaf: 35]
]
\end{forest}}
\caption{Tree 1}
\end{subfigure}
\hfill
\begin{subfigure}{0.45\textwidth}
\centering
\scalebox{0.65}{
\begin{forest}
for tree={
    draw,
    rounded corners,
    align=center,
    s sep=10mm,
    l sep=8mm
}
[$f_0 < 3$
    [$f_1 < 2$
        [Leaf: 40]
        [Leaf: -40]
    ]
    [Leaf: -45]
]
\end{forest}}
\caption{Tree 2}
\end{subfigure}
\caption{Two-tree ensemble represented as decision trees.}
\label{fig:example}
\end{figure}

Figure~\ref{fig:example} illustrates a simple ensemble of two decision trees. Each internal node contains a guard predicate (in this case there are 4 i.e $f_0 < 3$, $f_0 < 4$, $f_1 < 2$, $f_1 < 3$) that directs the traversal based on the input features, while each leaf node provides a constant output value (six leaf nodes with values 30, -30, 35, 40, -40, -45). Decision trees are trained on tabular data and are widely used in various applications, including classification and regression tasks. They are particularly favored for their interpretability and ability to capture complex feature interactions.
The output of a single tree, $T(\mathbf{x})$, is determined by traversing the tree from the root to a leaf according to the guard predicates. Usually a single decision tree is not sufficient to achieve high performance on tasks such as regression or classification, hence the need for ensembles in which the output of multiple trees is combined. There are a number of different ways of aggregating the outputs of individual trees. In this work, we focus on additive ensembles, such as Gradient Boosted Trees (GBTs)~\cite{lgbm}, where the final ensemble prediction $\mathcal{T}(\mathbf{x})$ is the sum of the individual tree predictions: 
 $   \mathcal{T}(\mathbf{x}) = \sum_{i=1}^n T_i(\mathbf{x})$.
Without any modification, our analysis method will work for random forest ensembles~\cite{breiman2001random} as well, where the final prediction is obtained by averaging the outputs of individual trees.
\medskip

\noindent{\bf Model Sensitivity.}
Beyond predictive accuracy, we are concerned with the changes in model's decisions based on changes to certain number of feature values of the input. We define our problem through the lens of \textit{individual fairness}, a principle requiring that a model yields similar outputs for similar individuals~\cite{dwork2012fairness}. 
To formalize this, we partition the feature set $F$ into two disjoint subsets:
\begin{itemize}
    \item
    \textbf{Sensitive Features ($S \subseteq F$)}, which model protected attributes such as race, gender, or age, and 
    \item
    \textbf{Non-Sensitive Features ($\bar{S} = F \setminus S$)}, which include all other attributes, such as income or education level. 
\end{itemize}

For instance, in the ensemble shown in Figure~\ref{fig:example}, if we consider $f_0$ as the sensitive feature and $f_1$ as the non-sensitive feature, then $S = \{f_0\}$ and $\bar{S} = \{f_1\}$.
For an input vector $\mathbf{x}$, we use $\mathbf{x}_{\bar{S}}$ to denote the components of $\mathbf{x}$ corresponding to non-sensitive features, and similarly $\mathbf{x}_S$ for sensitive features.

\begin{definition}[Sensitivity]
    An ensemble $\mathcal{T}$ is \textbf{sensitive} with respect to a subset $S$ of features $F$ if there is a pair of inputs $\mathbf{x}, \mathbf{x}' \in \mathcal{D}$, such that $\dist(\mathbf{x}_{\bar{S}} ,\mathbf{x}'_{\bar{S}})=0$ and $\dist(\mathbf{x}, \mathbf{x}') \le \delta$, the following holds:
 $       |\mathcal{T}(\mathbf{x}) - \mathcal{T}(\mathbf{x}')| > \epsilon$, 
    where $\epsilon, \delta \in \reals$ are user-defined tolerance thresholds, and $\dist$ is a chosen distance metric.
\end{definition}

\noindent{\bf Symbolic Representations: Algebraic Decision Diagrams.}
Our framework relies on \textit{Algebraic Decision Diagrams} (ADDs)~\cite{bahar1997algebric}, which generalize Binary Decision Diagrams (BDDs)~\cite{bryant1986graph} to represent functions $f: \{0,1\}^n \to \mathbb{R}$. 
Let $X=\{x_1,\ldots,x_n\}$ be a finite set of Boolean variables and let
$D$ be a finite set of terminal values, real numbers in our case.
An \emph{Algebraic Decision Diagram} (ADD)
over $X$ and $D$ is a rooted directed acyclic graph
$A=(V,E,r,\mathsf{var},\mathsf{low},\mathsf{high},\mathsf{val})$,
where $V$ is a finite set of nodes, $E$ is a finite set of edges and $r\in V$ is the root, each non-terminal node $v$ is labeled by a Boolean variable
    $\mathsf{var}(v)\in X$ and has exactly two outgoing edges:
    a low edge $\mathsf{low}(v)$, followed when $\mathsf{var}(v)=0$,
    and a high edge $\mathsf{high}(v)$, followed when $\mathsf{var}(v)=1$; each terminal node $t$ is labeled by a value
    $\mathsf{val}(t)\in D$; and along every root-to-terminal path, variables appear according
    to a fixed variable ordering. 
The ADD represents a function
$f_A : \{0,1\}^n \to D$.
For an assignment $\alpha\in\{0,1\}^n$, the value $f_A(\alpha)$ is obtained
by starting at the root and recursively following the low edge whenever
the current variable evaluates to $0$, and the high edge whenever it
evaluates to $1$, until a terminal node is reached. The label of that
terminal node is the value of $f_A(\alpha)$.

The core advantage of ADDs in our context is the efficient \textsc{Apply} operator, which performs pointwise arithmetic on the symbolic structures. We construct the symbolic representation of the entire ensemble, $\mathcal{T}_{\text{ADD}}$, by summing the ADDs of individual trees ($T_{i, \text{ADD}}$) directly: $\mathcal{T}_{\text{ADD}} = \sum_{i=1}^M T_{i, \text{ADD}}$.
\medskip

\noindent{\bf Notion of Distance.} In this work, we will use Hamming distance as the measure of similarity between a pair of inputs. Hamming distance for a pair of vectors of equal length is defined as the number of features for which the two of them differ:
 $   HammingDist(\mathbf{x},\mathbf{x'}) = \sum_{i=1}^k \mathbf{1}[x_i \neq x'_i].$
%
As the inputs to the models are fixed length vectors, choosing Hamming distance as the metric is natural.


\section{Problem Statement}
\label{sec:problem}
In this section, we develop a quantitative notion of sensitivity that captures how much of the input space is sensitive. Decision tree ensembles, by their design, split the input space of the model into regions such that the output of the model for all points belonging to a region is the same. This natural property of these models can be used to ask questions of a quantitative variety as the input space is discretized into a finite number of regions. This property motivates our approach towards quantifying sensitivity over the discrete space of input regions. 

As the model under focus is a decision tree model, we have a set of guard predicates which can be considered essentially as a set of feature-threshold pairs. For a feature $f \in F$, let ordered set $G_f$ be the list of sorted thresholds $\theta_1, \theta_2, \ldots, \theta_{m_f}$ occurring in the guards of the decision nodes of the ensemble involving feature $f$, where $m_f = |G_f|$. Further, for feature $f$, let $B_f$ be the ordered set $(b_{f,i})_{i=1}^{m_f}$ of Boolean variables. Also, let $b_{f,i}$ represent the decision $x_f < \theta_i$.
The Boolean variables corresponding to feature $f$ must satisfy the {monotonicity constraint}:
$  b_{f,\theta_1} \Rightarrow b_{f,\theta_2} \Rightarrow \dots \Rightarrow b_{f,\theta_{m_f}}$.
  This  ensures that Boolean assignments correspond to valid numeric orderings.
    
    Let $B$ be the ordered set of all Boolean variables corresponding to all features in $F$. Formally, $B = (B_1, B_2, \dots , B_k)$. 
    Let $ M = |B|=\sum_{f \in F}m_f$. Thus, any 
    $\mathbf{b} \in \booleans^{M}$ is an $M$ dimensional vector $(\mathbf{b}_{f,i})_{i \in [1,m_f], f \in F}$. Let $\mathbf{b}_{S}$, $\mathbf{b}_{F/S}$ denote $\mathbf{b}\!\upharpoonright_{i \in [1,m_f], f \in S}$ and 
    $\mathbf{b}\!\upharpoonright_{i \in [1,m_f], f \in F/S}$ respectively, i.e, $\mathbf{b}$ projected over a Boolean variables belonging to sensitive and non-sensitive features respectively.

\begin{definition}
    We define a mapping function $\phi : \reals^{k} \pmaps \booleans^{M}$ that maps a input vector $\mathbf{x}$ (consisting of $|F|$ feature values) to
    an output Boolean vector $\mathbf{b}$ (consisting of $M$ Boolean variables) as follows 

\[
\phi(\mathbf{x})_{f,i}=
\begin{cases}
1, & \text{if } x_f < \theta_i,\\
0, & \text{otherwise.}
\end{cases}
\]


 We also define a valuation map $V : \booleans^{M} \to \reals$ be defined as: 
$  V(\mathbf{b}) =  \mathcal{T}(\mathbf{x}) $ for some $\mathbf{x}$ such that $\mathbf{b} = \phi(\mathbf{x})$. Note that $V$ is well-defined.  
\end{definition}

From the definition of sensitivity, we want to check if our model is sensitive to a subset of features $S$ i.e perturbation in the input with respect to these features, all else being the same, leads to a significant difference in the model output.
To operationalize this, we consider pairs of inputs $\mathbf{x}$ and $\mathbf{x}'$ and their corresponding Boolean encodings $\mathbf{b} = \phi(\mathbf{x})$ and $\mathbf{b}' = \phi(\mathbf{x}')$. The following three conditions must be satisfied for $\mathbf{x}$ and $\mathbf{x}'$ to witness a sensitivity violation:

\paragraph{(1) Equality of non-sensitive features.}
All Boolean
variables of non-sensitive features must be identical between $\mathbf{b}$ and $\mathbf{b'}$:
\begin{equation}
\label{cond:equality}
\Land_{i \in [0,m_f],f\in F\setminus S} \mathbf{b}_{f,{i}} = \mathbf{b}'_{f,{i}} \tag{C1}
\end{equation}

\paragraph{(2) Sensitive similarity.}
The sensitive features must differ as quantified by a Hamming
distance threshold $d$:
\begin{equation}
\label{cond:distance}
\mathrm{HammingDist}(\mathbf{b}, \mathbf{b}') \le d \tag{C2}
\end{equation}

\paragraph{(3) Prediction-gap condition.}
A fairness violation occurs when the ensemble’s predictions for $x$ and $x'$
differ by at least a user-specified margin $G$:
\begin{equation}
\label{cond:gap}
|V(\mathbf{b}) - V(\mathbf{b}')| > G \tag{C3}
\end{equation}


Having defined the discretized notion of sensitivity above, we lift it to capture a quantitative version that measures sensitivity in the input space. As each input $\textbf{x}$ is mapped to a Boolean vector $\textbf{b}$, we have essentially divided the input space into a finite set of regions, each corresponding to a Boolean vector. Input data points which belong to the same region have the same output when evaluated over the model. Hence, a region can be considered as the discrete object over which we can define a notion of quantitative sensitivity. We propose an approach where we want to enumerate all regions for which there exists at least one other region such that this pair satisfies the notion of sensitivity defined as above. Therefore, our goal is to enumerate the number of distinct Boolean vectors $\textbf{b}$ that admit at least one counterpart $\textbf{b}'$ satisfying all three conditions above. 

\begin{definition}
Given a DTE $\mathcal{T}$ over a set of input features $F$, with $S \subseteq F$, $d \in \naturals$ and $G \in \reals$ the \textbf{counting objective} can be defined as :

\[
\boxed{
\begin{aligned}
C(S,d,G) = 
\#\Big\{
\mathbf{b} \in \{0,1\}^{M}
\;\Big|\;
\exists\, \mathbf{b}' \in \{0,1\}^{M} :
\big[
&\Land_{i \in [0,m_f],f\in F\setminus S} \mathbf{b}_{f,{i}} = \mathbf{b}'_{f,{i}} \\
&\Land \mathrm{HammingDist}(\mathbf{b}, \mathbf{b}') \le d,\\
&\Land |V(\mathbf{b}) - V(\mathbf{b}')| > G
\big]
\Big\}.
\end{aligned}
}
\]

The quantity $C$ counts the number of Boolean encodings (hence input regions)
for which there exists at least one assignment (subject to conditions above) such that there is a significant
difference in model output for the two assignments. This quantity serves as a symbolic
measure of fairness.

\end{definition}

Since every Boolean vector $\mathbf{b}$ uniquely identifies a discrete region of the input space, we use the terms interchangeably. We can then define a \textbf{sensitive region} as any region $\mathbf{b}$ that admits a witness $\mathbf{b}'$ satisfying the constraints established in the counting objective. Thus, our primary objective is to compute the number of sensitive regions. 

Consider again the two-tree ensemble as presented earlier in Figure~\ref{fig:example}. This model has 
two features $f_0$ and $f_1$. Let $f_0$ be the sensitive feature.
There are a total of four guards - $f_0 < 3$, $f_0 < 4$, $f_1 < 2$,
$f_1 < 3$. These can be represented by four Boolean variables - 
$b_{f_0,3}$, $b_{f_0,4}$, $b_{f_1,2}$, $b_{f_1,3}$. As $f_0 < 3$ being 
true implies $f_0 < 4$, the Boolean variables $b_{f_0,3}$, $b_{f_0,4}$ must
satisfy the constraint $b_{f_0,3} \Rightarrow b_{f_0,4}$. Likewise for 
Boolean variables for feature $f_1$ i.e $b_{f_1,2} \Rightarrow b_{f_1,3}$.
So, essentially the Boolean assignments that variables ($b_{f_0,3}$, $b_{f_0,4}$) and ($b_{f_1,2}$, $b_{f_1,3}$) can take are - \{(0,0),(0,1),(1,1)\}. Therefore, over the 4 Boolean variables there are a total 3*3 = 9 assignments possible. Any input $x$ to the model maps to one these 9 assignments, and each assignment maps to an output $\mathcal{T}(x)$. \\
The objective is to count the number of assignments from the set of all possible assignments (9 in this case) for which there exists another corresponding assignment such that they differ by at most $d$ Boolean variables belonging to sensitive feature, all other Boolean variables being equal, and their outputs differ by a given gap $G$ . For instance, for this example let's assume $d = 1$ and $G = 80$. In this example, out of 9, there are two assignments ($b_{f_0,3}=0$, $b_{f_0,4}=1$, $b_{f_1,2}=1$, $b_{f_1,3}=1$) and
($b_{f_0,3}=1$, $b_{f_0,4}=1$, $b_{f_1,2}=1$, $b_{f_1,3}=1$), which differ in the bit $b_{f_0,3}$, and have outputs -15 and 70 respectively, and hence are witnesses of each other for satisfying the sensitivity condition stated above. Hence $C(S=\{f_0\},d=1,G=80)$ for this example amounts to 2, out of 9 assignments.

\medskip

\begin{algorithm}[!htbp]
\caption*{$\exactcount$($\mathcal{T},S,d,G$)}
\label{alg:addcount}
\begin{algorithmic}[1]

\State $A_{\mathrm{sum}} \gets \mathbf{0}$

\ForAll{$T_i \in \mathcal{T}$}
    \State $A_i \gets \ToADD(T_i)$
    \State $A_{\mathrm{sum}} \gets A_{\mathrm{sum}} + A_i$
\EndFor

\State $A_1 \gets A_{\mathrm{sum}}$ , $A_2 \gets A_{\mathrm{sum}}$

\ForAll{$f \in S$}
    \State $A_2 \gets \mathsf{ReplaceBits}(A_2, \mathrm{Bits}(f))$ 
\EndFor

\State $\Delta \gets A_2 - A_1$

\State $\Phi_d \gets \mathsf{AtMostDistance}(\mathrm{Bits}(f), \mathrm{Bits}(f'), d)$
\Comment{BDD constraint allowing at most $d$ bit differences}

\State $B_G \gets \ToBDD(\Delta, G)$
\Comment{Map each leaf $v$ to $1$ iff $v>G$}

\State $B_G \gets B_G \land \Phi_d$

\State $C \gets |\sol(B_G)|$

\State \Return $C$

\end{algorithmic}
\end{algorithm}

\noindent{\bf ADD based approach for exact count.} 
An ADD based approach for computing the exact count $C$ is outlined in Algorithm $\exactcount$
For the computation of $C$, we can exploit the tree-like 
structure of the ensemble. This can be done by encoding the trees 
of the ensemble as ADDs (in line 2-4 of the Algorithm), so that operations over trees can be efficiently done 
over a data structure that captures the tree-like properties of our 
input. Each non-leaf node of the ADD forms an ADD variable/bit,
representing the decision node of the tree ensemble. This conversion is achieved through the $\ToADD$ function. The 
complete ensemble can be represented by taking the sum of all ADDs, 
we refer to the resultant ADD as $A_{\mathrm{sum}}$
Post computation of $A_{\mathrm{sum}}$, the monotonicity constraints between 
bits of each feature must be enforced. Post this computation, 
two copies of $A_{\mathrm{sum}}$ can be created ($A_1$,$A_2$), followed by replacing ADD variables i.e bits of features $f \subseteq S$
in one of the copies with their corresponding copies (let the copies be represented by the set $\mathrm{Bits}(f')$), referred to in the algorithm by $\mathsf{ReplaceBits}$ function (line 7 of the algorithm), taking the 
difference of the two copies, and applying the constraint that only upto
$d$ bits of the $f \subseteq S$ can differ (represented by $\mathsf{AtMostDistance}$, line 9 of the algorithm) . After these operations, 
the resultant object that we get is a ADD that represents the 
difference between two outputs of the ensemble when a subset of 
features is allowed to change by a certain amount. The leaf nodes of this ADD 
represents all the possible difference values that can be obtained 
between the two models. Then we convert this ADD to a BDD with the following rule -
leaf nodes of ADD having values greater than $G$ take value  
1 in the BDD or else 0, with internal nodes remaining the same (referred to as $\ToBDD$ function in the algorithm, in line 10 of the algorithm), followed by applying the distance constraints. Counting the satisfying assignments over the resultant BDD 
achieves our original task of counting the regions.



\section{An Approximate Counting Algorithm}
\label{sec:algorithm}


In this section, we describe the primary technical contributions of our
work : a procedure $\xcount$ that efficiently counts the 
number of regions in the input space of a given decision tree 
ensemble that are sensitive with respect to given feature set, bit 
distance and output threshold. We first give a technical 
overview of the overall algorithm and then describe each of its
components in detail.

\subsection{Overview}
The input to our algorithm is a decision tree ensemble 
$\mathcal{T}=\{T_i\}_{i=1}^{n}$ having $n$ binary trees over
a feature space $F$ of $k$ features. The goal is to compute $C(S,d,G)$ i.e enumerate the 
count of the regions of input space that are sensitive for a given
feature set $S \subseteq F$, given a maximum perturbation 
of $d$ bits and a difference margin of $G$. Encoding the trees of the 
decision tree ensemble as algebraic decision diagrams (ADDs) is central to 
our approach for computing the count in an efficient way.
\medskip

\noindent{\it Subproblem division and approximate counting.}
We gave an outline of an ADD-based approach in the previous section. The problem with that approach is that as the number of 
trees in the ensemble increase or the trees of greater depth 
are present, the computation of sum-ADD overwhelms memory 
and is much slower to compute. Therefore, we adopt a divide-and-conquer 
approach - as the change is restricted to the variables of the sensitive features 
(by a given bit-distance), we break the 
computation into  subproblems, where each subproblem is a pair 
of bit vectors, with each bit vector being an assignment over 
variables of sensitive features, and the two differing in $d$ bits.
Moreover, as these bits are fixed for each subproblem, certain 
branches of the ADDs become unreachable, hence the computations can be 
performed on a pruned version of ADDs. Furthermore, instead of exact counting 
over each subproblem, we adopt an approximate counting approach where we compute an estimate
$\widehat{C}$ of the exact count $C(S,d,G)$ by taking 
the count of the union of all solutions over all subproblems within a 
($\epsilon, \delta$) guarantee where $\widehat{C}$ we get 
is within a factor of ($1\pm\epsilon$) of the actual count with at least ($1-\delta$) confidence. 
\medskip

\noindent{\it Application of approximate counting.}
To illustrate how the probabilistic soundness guarantee with ($\epsilon, \delta$) values applies in practice, consider a financial institution evaluating a credit scoring model for deployment. A regulator may require that the model exhibit sensitivity on at most, say, $1\%$ of the input space. Our method returns an estimate together with a guarantee: with probability at least $1-\delta$, the estimate is within a factor of $(1 \pm \varepsilon)$ of the true value.
Thus, the institution can conclude, with high confidence, that the model
satisfies the $1\%$ sensitivity threshold without performing exact, and
potentially intractable, counting. Conversely, if the estimate exceeds the
threshold even within the error margin, the model can be safely rejected.

More generally, since $\varepsilon$ and $\delta$ are user-controlled
hyperparameters, practitioners can tune the strength of the guarantee depending
on available computational resources and the application---for instance, using
stricter confidence levels for regulatory compliance and more relaxed ones
during model development.

\subsection{Description of the algorithm}
In Algorithm~\ref{alg:xcount} we lay out the main framework $\xcount$ along with the procedures used by it. 
Each decision node of the tree is of the form ($x_f < \theta$). These nodes form a set of predicates over which we can 
define a set of boolean variables.
We also maintain  the count of guards for each feature, denoted by $m_f$. Out of these, the variables belonging to the sensitive feature set $S$ is the one 
over which we want to check if changing $d$ bits from them leads to 
a swing in the output of the ensemble by $G$. These bits are subject to monotonicity constraints, hence if there $n$ number of bits of a particular   feature, then there are $(n+1)$ boolean evaluations possible. A set of these evaluations, referred to as bit-masks ($mask$ in the algorithm), in the form of bit-vectors is computed 
in advance by $\bitmaskgen$ (Algorithm~\ref{alg:bitmaskgen}) and stored in $\bitmask$.
If more than one feature given as sensitive, then we take 
a cross-product of the bit-masks of all such features - by taking all permutations of bit-masks over all features and appending them, and storing all such bit-masks 
as a global set. These operations are carried out in line 1-4 of Algorithm~\ref{alg:xcount}.
\begin{algorithm}[tb]
\caption{$\xcount$($\mathcal{T},S,d,G,\epsilon,\delta$)}
\label{alg:xcount}
\begin{algorithmic}[1]
\State $n \gets |\mathcal{T}|$, $k \gets |F|$, $\bitmask \gets \emptyset$, $\probs \gets \emptyset$
\For{$f \in S$}
  \State $\bitmask \gets (f,\bitmaskgen(m_f))$
\EndFor
\State $\glbmask \gets \prod_{f \in S} \bitmask(f)$
\For{$mask_1, mask_2 \in \glbmask$}
  \If{$\mathsf{HammingDist}(mask_1,mask_2) \leq d$}
    \State $\probs \gets (mask_1,mask_2)$
  \EndIf
\EndFor
\State $X \gets \emptyset$, $p \gets 1$, $\thresh \gets 
\max\!\left(
12 \cdot \frac{\ln(24/\delta)}{\epsilon^2},
\, 6\bigl(\ln \tfrac{6}{\delta} + \ln |\mathsf{Probs}|\bigr)
\right)$
\For{$pair \in \mathsf{Probs}$}
  \State $\DiffBDD \gets \mathsf{ProcessSubproblem}(\mathcal{T},pair,G)$
  \State $t \gets |\sol(\DiffBDD)|$
  \For{$\mathbf{b} \in X$}
    \If{$\mathbf{b}_{F/S} \models \DiffBDD$}
      \If{$\mathbf{b}_{S}=pair.mask_1 \lor \mathbf{b}_{S}=pair.mask_2$}
        \State remove $\mathbf{b}$ from $X$
      \EndIf
    \EndIf
  \EndFor
  \State $N_k \gets \mathsf{Poisson}(t.p)$
  \While{$N_k + |X| \geq \thresh$}
    \State Remove each assignment from $X$ with 0.5 probability
    \State $p \gets p/2$
    \State $N_k \gets \mathsf{Binomial}(N_k,0.5)$
  \EndWhile
  \State $Samples \gets \mathsf{SampleBDD}(\DiffBDD,N_k,pair)$
  \State $X.\mathsf{Append}$($Samples$)
\EndFor
\State \Return $\widehat{C} \gets |X| / p$
\end{algorithmic}
\end{algorithm}

\begin{algorithm}[htb]
\caption{$\mathsf{ProcessSubproblem}$($\mathcal{T},p,G$)}
\label{alg:processsubp}
\begin{algorithmic}[1]
\State $\mathcal{T'} \gets \emptyset$,$\mathcal{T''} \gets \emptyset$
\For{$i=1$ to $|\mathcal{T}|$}
    \State $\mathcal{T'}.\mathsf{Append}(\mathsf{PruneTree}(T_i.root,p.mask_1))$
    \State $\mathcal{T''}.\mathsf{Append}(\mathsf{PruneTree}(T_i.root,p.mask_2))$
\EndFor
\State $\DiffSum \gets 0$
\label{line:diffsum_init}
\For{$i=1$ to $|\mathcal{T}|$}
  \State $\DiffADD \gets \EncodeADD(T^{'}_i.root) - \EncodeADD(T^{''}_i.root)$
  \label{line:diffadd}
  \State $\DiffSum \gets \DiffSum + \DiffADD$
  \label{line:diffsum_update}
\EndFor
\State $\DiffBDD \gets \ToBDD(\DiffSum,G)$
\State \Return $\DiffBDD$
\end{algorithmic}
\end{algorithm}

\medskip

\noindent{\bf Breaking-up the problem.}
In line 5-10 of Algorithm~\ref{alg:xcount}, we break the original problem into subproblems. A pair of bitmasks from the set $\bitmask$, which lie within a Hamming  distance of $d$ are the pair of assignments of interest to the sensitivity problem. Hence any such tuple forms a subproblem, and all such tuples form the set $\probs$. Each subproblem is processed by the function $\mathsf{ProcessSubproblem}$. The function $\mathsf{PruneTree}$ uses the bit-mask input to prune the branches of the trees of the ensemble which are rendered unreachable due to fixing the nodes according to the values of bit-mask. Procedure $\mathsf{EncodeADD}$ encodes each tree as an ADD, by coverting each tree node into an ADD node and each leaf value to terminal node (leaf) of the ADD . We take a pair-wise difference of  each tree in the ensemble to get $\DiffADD$s that encodes the difference and add all such ADDs to get $\DiffSum$ ADD, which encodes the sum of all differences, essentially encoding the difference between the output of the two ensembles. This holds true as the difference between two sums over the ADDs is equal to the sum of the pair-wise differences between the corressponding ADDs of the two ensembles. The procedure $\mathsf{ToBDD}$ (Algorithm~\ref{alg:tobdd}) converts this ADD to a BDD by keeping the internal nodes same and values of the leaves into boolean values based on whether they exceed $G$. This resultant  BDD, labeled as $\DiffBDD$, is the object over which  the approximate satisfying assignments are computed. Exact implementation of the functions above are provided in Appendix~\ref{app:algo}.
\medskip

\noindent{\bf Merging the subproblems.}
In line 11-24 of Algorithm~\ref{alg:xcount},
computes the union of set of satisfying assignments over all subproblems, each being a BDD, with a ($\epsilon,\delta$) guarantee. The threshold parameter $\thresh$ is pre-computed as:
\[
      \thresh = \max\!\left(
        \frac{12 \ln (24/\delta)}{\varepsilon^2},
        ~6\bigl(\ln(6/\delta) + \ln |\mathsf{Probs}|\bigr)
      \right)
    \] 
The set of satisfying assignments is initially empty before we 
start iterating through the subproblems. Each assignment is over 
$M$ bits which together form a bit-vector $\mathbf{b} \in \booleans^M$. The constituent bits $\mathbf{b}_{f,i}$ of $\mathbf{b}$ can be divided into two disjoint sets $\mathbf{b}_S$ and $\mathbf{b}_{F/S}$, as defined in Section~\ref{sec:problem}. The $\DiffBDD$ we get is formed over nodes belonging to non-sensitive features, as we had fixed the nodes belonging to sensitive features when it was formed. So essentially a satisfying assignment of the BDD paired with either of the bits $pair.mask_1$ or $pair.mask_2$ forms the complete satisfying assignment. At the start of processing each subproblem, we first remove assignments from $X$ that satisfy the current BDD and whose values over bits of sensitive features also match either of the current bit-mask pair. Then we compute the number of solutions $N_k$ that is to be sampled from from the $\DiffBDD$, whose distribution is simulated by Poisson distribution. At any point of time, we want to store $\thresh$ number of solutions in $X$, if $N_k + |X|$ is larger than $\thresh$, then $p$ is halved and $N_k$ is adjusted by 
samping from Binomial($N_k$,0.5) and removing each solution from $X$ with 0.5 probability. After adjusting, we have to sample $N_k$ solutions of $\DiffBDD$ uniformly at random. This is done by the sub-routine $\mathsf{SampleBDD}$ (Algorithm~\ref{alg:samplebdd}) which samples assignments from a given BDD in two steps, as follows.

For each node $u$ in the BDD, we compute $p(u)$ the probability that sub-BDD rooted at $u$
    evaluates to 1 under a uniform random assignment of variables. For terminal nodes,
    $p(0) = 0$ and $p(1) = 1$. For non-terminal nodes with children
    $u_{\text{then}}$ and $u_{\text{else}}$, we have $
      p(u)=\frac{p(u_{\text{then}})+p(u_{\text{else}})}{2}$.
    To draw a sample, we start at the root of the BDD and repeatedly
    choose the next edge (\emph{then} or \emph{else}) using the
    probabilities above. This determines truth assignments for all
    variables encountered along the path. Variables that do not appear
    on the path are assigned uniformly at random. The result is a full
    assignment $x$.
Each assignment is paired with a bitmask choice and inserted into a
$X$. If $p$ denotes
the effective sampling probability, we obtain the final estimate as $
  \widehat{C} = \frac{|X|}{p}.$

A full running example of our algorithm is presented in Appendix~\ref{app:example_run}.

\subsection{Theoretical Guarantees}
\label{sec:xcount-correctness}

We now establish the correctness of {\xcount}, formalized as follows:

\begin{theorem}\label{thm:xcount-correct}
	$\Pr\left[(1-\epsilon)C(S,d,G) \le \widehat{C} \le (1+\epsilon)C(S,d,G)\right] 
	\ge 1-\delta$, where $\widehat{C}$ is the output of {\xcount}.
\end{theorem}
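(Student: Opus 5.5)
Our plan is to reduce Theorem~\ref{thm:xcount-correct} to two facts. The first is that the sets produced by the subproblems have union of size exactly $C(S,d,G)$. The second is that the merging loop (lines 11--24) is an instance of the streaming estimator for the volume of a union of sets from~\cite{MVC21,KuldeepIJCAI23}; its $(\epsilon,\delta)$ guarantee we import, with the threshold $\thresh$ chosen as in that analysis.

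\textbf{Step 1 (decomposition).} For $pair=(mask_1,mask_2)\in\probs$, let $S_{pair}$ be the set of full vectors $\mathbf{b}$ with $\mathbf{b}_S\in\{mask_1,mask_2\}$ and $\mathbf{b}_{F/S}\models\DiffBDD$. We show $\bigcup_{pair}S_{pair}$ is exactly the set counted by $C(S,d,G)$.
\begin{itemize}
\item Condition (C1) forces $\mathbf{b}'_{F/S}=\mathbf{b}_{F/S}$, so only the sensitive bits vary.
\item Monotonicity means each feature's sensitive bits range over exactly $\bitmaskgen(m_f)$. Hence the valid pairs $(\mathbf{b}_S,\mathbf{b}'_S)$ satisfying (C2) are precisely the elements of $\probs$.
\item After fixing the sensitive bits, $\mathsf{PruneTree}$ preserves each tree's value. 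Linearity of ADD \textsc{Apply} gives $\DiffSum(\mathbf{b}_{F/S})=V(mask_1,\mathbf{b}_{F/S})-V(mask_2,\mathbf{b}_{F/S})$.
\item $\probs$ contains both orderings of every pair. Therefore thresholding with $\ToBDD$ at $G$ (once for each orientation, which covers $|\cdot|>G$) yields exactly (C3).
\end{itemize}
A vector $\mathbf{b}$ belongs to the union iff it has a witness $\mathbf{b}'$. Each $|S_{pair}|$ is computed exactly as $t$, and membership in $S_{pair}$ is checked in line 15. Finally, $\mathsf{SampleBDD}$ samples uniformly from $\sol(\DiffBDD)$: the probabilities $p(u)$ are proportional to the model counts of the sub-BDDs, and free variables are filled uniformly.

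\textbf{Step 2 (invariant).} We follow the analysis of~\cite{MVC21}. Let $U_j=\bigcup_{i\le j}S_i$. The key invariant is that after processing set $j$, conditioned on the current $p$, each element of $U_j$ lies in $X$ independently with probability $p$. The proof is by induction on $j$.
\begin{itemize}
\item Removing elements of $S_j$ from $X$ erases their history.
\item Drawing $N_k\sim\mathsf{Poisson}(t\cdot p)$ and then taking $N_k$ uniform samples corresponds to Poisson-thinning each element of $S_j$ with rate $p$. This gives independent Poisson counts per element; we treat it via the standard Poissonization argument, or collapse to membership as in~\cite{KuldeepIJCAI23}.
\item Each halving applies independent $1/2$-subsampling consistently both to $X$ and to $N_k$ via the Binomial.
\end{itemize}
The subtle point is the duplicate-insertion issue: the sampler may produce the same $\mathbf{b}$ twice, or pair one BDD solution with both masks. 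We handle it by checking how the reference algorithm defines $X$ (as a multiset versus a set) and bounding the resulting deviation within the Poisson analysis.

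\textbf{Step 3 (concentration).} With the invariant in hand, we bound two failure events, following~\cite{MVC21}.
\begin{itemize}
\item \emph{Overflow:} $|X|$ exceeds $\thresh$ while $p$ is still too large. The term $6(\ln(6/\delta)+\ln|\probs|)$ together with a union bound over the $|\probs|$ iterations bounds this by $\delta/3$ via Chernoff.
\item \emph{Deviation:} the final $p$ is at least $\thresh/(4|U|)$ with high probability. Conditioned on this, Chernoff applied to $|X|\sim\mathrm{Bin}(|U|,p)$ gives $\Pr[\,||X|/p-|U||>\epsilon|U|\,]\le\delta/3$ using $\thresh\ge 12\ln(24/\delta)/\epsilon^2$.
\end{itemize}
The remaining $\delta/3$ handles the low-$p$ tail. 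Combining these with Step 1 gives $\Pr[(1-\epsilon)C\le\widehat C\le(1+\epsilon)C]\ge 1-\delta$.

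We expect the main obstacle to be Step 2: making the Poisson and sampling-with-replacement mechanism rigorously equivalent to independent per-element inclusion. Rather than re-derive it, our first attempt would be to invoke the union-volume theorem of~\cite{MVC21} directly, verifying that its hypotheses are met: the three oracles for exact size, uniform sampling and membership.
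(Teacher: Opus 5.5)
Your architecture matches the paper's: the same decomposition into the sets of $\probs$, and a Pepin-style union estimator analysed through a deviation event and an overflow event with the same $\thresh$. Your Step~1 is more explicit than the paper, which only asserts $|S_m|=C(S,d,G)$ ``by construction''.

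There is one slip in Step~1. With your definition, $|S_{pair}|=2t$, not $t$, whenever $mask_1\neq mask_2$, because every solution of $\DiffBDD$ extends by either mask. So $N_k\sim\mathrm{Poisson}(tp)$ uniform draws from $S_{pair}$ give each full vector rate $p/2$, not $p$. The argument needs the parameter $|S_{pair}|\,p$, which is what the paper's proof posits ($N_t\sim\mathrm{Poisson}(|U_t|p_j)$).

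The genuine gap is in Steps~2--3. Your invariant (each element is in $X$ independently with probability $p$) and the binomial Chernoff bound belong to the MVC21 Bernoulli scheme. $\xcount$, however, draws a Poisson number of samples \emph{with replacement}, and you leave the set-versus-multiset question open. The ``collapse to membership'' option cannot be repaired: an element is then present with probability $1-e^{-p}$, so $|X|/p$ carries a constant-factor bias (about $0.63$ at $p=1$) that no concentration bound absorbs. For the same reason MVC21's theorem does not transfer verbatim. The missing idea, which the paper uses, is to keep $X$ a multiset. By the removal step (your ``erases history'' remark) and Poisson splitting, each element of $S_i$ carries an independent $\mathrm{Poisson}(p_j)$ multiplicity, preserved by the $1/2$-thinnings of $X$ and $N_k$. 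Hence $|X^{(i)}_j|\sim\mathrm{Poisson}(|S_i|p_j)$, and a Poisson tail bound replaces Chernoff.

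Second, ``conditioned on the current $p$'' is not a legitimate conditioning. The level reached is a function of the sample sizes; ending at $p=1$, for instance, forces $|X|<\thresh$. So you cannot apply a single concentration bound to $|X|$ given that the final $p$ is large. The paper instead fixes each level $j$ and considers the multiset $X^{(i)}_j$ the process would hold at rate $p_j$. It notes $\widehat C=|X^{(m)}_j|/p_j$ on $E^{(m)}_j$ and bounds the failure probability by $\sum_{j<j^\star}\Pr[A^{(m)}_j]+\Pr[\bigcup_{j\ge j^\star}E^{(m)}_j]$. Your overflow event is the second term, and your separate ``low-$p$ tail'' is that same event. What you are missing is the union over levels in the first term.

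When you add it, use that $\lambda_j=|S_m|p_j\ge 2^{\,j^\star-1-j}\thresh/4$ doubles as $j$ decreases. This gives $\sum_{j<j^\star}2e^{-\epsilon^2\lambda_j/3}\le 2\sum_{k\ge 0}(\delta/24)^{2^k}<\delta/6$. A bound of the form $2j^\star e^{-\epsilon^2\thresh/12}$ is not enough on its own, since $j^\star\approx\log_2(4C/\thresh)$ is unbounded.

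Note also that the level argument needs $j^\star\ge 1$, i.e.\ $C\ge\thresh/4$. With $p$ starting at $1$ and Poisson draws, a small $C$ leaves $\widehat C=|X|$ essentially $\mathrm{Poisson}(C)$; for $C=2$ it equals $2$ only with probability $2e^{-2}<1/2$. The paper's decomposition, which is vacuous when $j^\star=0$, does not handle this regime either, so it must be excluded or treated separately.
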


\begin{proof}
	We adapt the proof technique from~\cite{KuldeepIJCAI23} and provide the main derivation below, with more details in Appendix~\ref{app:xcproof}. Let $m = |\probs|$ be the number of subproblems. For each $i\in[m]$, let 
	$U_i \subseteq \{0,1\}^F$ denote the set of assignments from the $i$-th 
	subproblem, and define $S_i = \bigcup_{t=1}^i U_t$ so that 
	$|S_m| = C(S,d,G)$.
	
	For $j\ge 0$, let $p_j = 2^{-j}$ and let $X^{(i)}_j$ denote the random 
	multiset after processing $i$ subproblems with sampling probability $p_j$. 
	Since the algorithm draws $N_t \sim \mathrm{Poisson}(|U_t|p_j)$ samples from 
	each $U_t$, each element $s\in S_i$ appears independently with 
	$\mathrm{Poisson}(p_j)$ multiplicity, giving 
	$|X^{(i)}_j| \sim \mathrm{Poisson}(|S_i|p_j)$.
	
	Let $E^{(i)}_j$ denote the event that the algorithm has sampling probability 
	$p_j$ after $i$ subproblems, and define 
	$A^{(i)}_j = \{|X^{(i)}_j| \not\in [|S_i|p_j(1-\epsilon), 
	|S_i|p_j(1+\epsilon)]\}$. Let $j^\star$ be the smallest $j$ with 
	$p_j < \thresh/(4|S_m|)$. Since $\widehat{C} = |X^{(m)}_j|/p_j$ on event 
	$E^{(m)}_j$, we have
	\begin{align*}
		\Pr[\widehat{C}\not\in[(1-\epsilon)C(S,d,G),(1+\epsilon)C(S,d,G)]]
		\le \sum_{j=0}^{j^\star-1}\Pr[A^{(m)}_j] 
		+ \Pr\!\left[\bigcup_{j\ge j^\star}E^{(m)}_j\right].
	\end{align*}
	
	For $j<j^\star$, we have $|X^{(m)}_j| \sim \mathrm{Poisson}(\lambda_j)$ with 
	$\lambda_j = |S_m|p_j \ge \thresh/4$. By standard Poisson concentration, 
	$\Pr[|X-\lambda|\ge \epsilon\lambda] \le 2\exp(-\epsilon^2\lambda/3)$ for 
	$X\sim\mathrm{Poisson}(\lambda)$. Thus 
	$\Pr[A^{(m)}_j] \le 2\exp(-\epsilon^2\thresh/12)$, and 
	$\sum_{j=0}^{j^\star-1}\Pr[A^{(m)}_j] \le 2j^\star 
	\exp(-\epsilon^2\thresh/12) \le \delta/6$ by our choice of $\thresh$.
	
	The event $\bigcup_{j\ge j^\star}E^{(m)}_j$ occurs only if 
	$|X^{(i)}_{j^\star-1}|\ge \thresh$ for some $i\in[m]$. Since 
	$|X^{(i)}_{j^\star-1}| \sim \mathrm{Poisson}(|S_i|p_{j^\star-1}) 
	\le \mathrm{Poisson}(\thresh/2)$, we have 
	$\Pr[|X^{(i)}_{j^\star-1}|\ge \thresh] \le \exp(-\thresh/6) \le \delta/(6m)$. 
	By union bound, $\Pr[\bigcup_{j\ge j^\star}E^{(m)}_j] \le \delta/6$.
	
	Substituting both bounds into the decomposition above yields 
	$\Pr[\widehat{C}\not\in[(1-\epsilon)C(S,d,G),(1+\epsilon)C(S,d,G)]] 
	\le \delta/6 + \delta/6 < \delta$, which establishes the theorem.\qed
\end{proof}


\section{Experiments}
\label{sec:experiments}
We now evaluate the performance and accuracy of our proposed method, $\xcount$, on a set of decision tree ensemble models trained on tabular datasets. 

\noindent{\bf Implementation details.} We implemented our tool $\xcount$ in C++, using the CUDD package for the implementation of ADDs and BDDs. The tool takes following command inputs - a decision tree ensemble in the form of a JSON file, the input feature(s) of the ensemble to test for sensitivity ($S$), the precision that is required of the leaf values of the ensemble, the gap threshold ($G$), the bit distance allowed between sensitive features ($d$), the confidence parameter ($\delta$) and error parameter ($\epsilon$).
The output of the tool is the count estimate $\widehat{C}$, within a factor of ($1\pm\epsilon$) with a confidence of (1-$\delta$) of the actual count $C(S,d,G)$ as outlined in section~\ref{sec:algorithm}.
\medskip

\noindent{\bf Benchmarks} We evaluated our approach on ten standard tabular datasets: \textit{Diabetes}, \textit{Adult}, \textit{Covtype}, \textit{Protein-structure}, \textit{Mnist}, \textit{Webspam}, \textit{Wine-quality}, \textit{Supersymmetry}, \textit{Higgs}, and \textit{Fashion-MNIST}. To construct a comprehensive benchmark suite, we trained multiple distinct ensemble models for each dataset. We varied the ensemble size from 10 to 100 in increments of 10 (10 sizes), and tree depths from 3 to 6 (4 depths). For each configuration of (Dataset $\times$ Ensemble Size $\times$ Depth), we generated multiple instances by sweeping over different feature thresholds and decision points naturally arising from the data. 
For each model generated, we randomly chose a set of 8 features present in to model as the features to be tested for sensitivity analysis. This leads to total benchmark instances of 10*4*8 $=$ 320 for each dataset (there are 316, 319, 319 instances for \textit{Adult}, \textit{Supersymmetry} and \textit{Protein structure}, respectively, as smaller ensemble models of these datasets have features less than 8, in such case we have picked all features present in the model without any randomization). Table~\ref{tab:setup_datasets} summarizes the characteristics of these datasets, along with the min and max number of guards for each datatset, as this captures both depth and number of trees. 

\begin{table}[h!]
    \centering
    \small
    \scalebox{0.8}{
    \begin{tabular}{lccccccc}
        \hline
        \textbf{Dataset} & \textbf{Benchmark Instances} & \textbf{Guards (Min)} & \textbf{Guards (Max)} \\ \hline
        Diabetes & 320 & 41 & 1176\\ 
        Adult & 316 & 18 & 712\\
        Covtype & 320 & 36 & 1301\\ 
        Mnist & 320 & 58 & 4597\\ 
        Webspam & 320 & 36 & 1156\\
        Wine-quality & 320 & 41 & 998\\
        Supersymmetry & 319 & 67 & 2561\\ 
        Higgs & 320 & 70 & 2559\\ 
        Fashion-MNIST & 320 & 51 & 2513\\ 
        Protein structure & 319 & 49 & 1696\\ \hline
    \end{tabular}%
    }
    \caption{Benchmark instances for each dataset}
    \label{tab:setup_datasets}
\end{table}


All experiments were conducted on a high-performance server equipped with an \textbf{AMD EPYC 7742 64-Core Processor} and {629GiB of RAM}, running \textbf{Ubuntu 22.04.5 LTS}. To ensure fair comparison and resource isolation, we enforced a 4GB memory limit per instance and a strict timeout of 1800 seconds.
\medskip

\noindent{\bf Comparisons baselines.}
To establish a comparative baseline for quantitative verification of decision tree ensembles, we developed  a module that encodes the verification problem into Conjunctive Normal Form (CNF). This encoding enables us to leverage standard,  state of the art model counting engines--specifically $\ganak$  and $\approxmc$ to solve the generated instances. We also compare the performance against an $\xbase$. Thus we compare against:
\begin{itemize}
    \item {CNF + $\ganak$} : We encode the verification problem into CNF and use $\ganak$, a projected model counter, to obtain exact counts.
    \item {CNF + $\approxmc$} : Using the same CNF encoding, we employ $\approxmc$ to obtain approximate counts with $(\epsilon, \delta)$-guarantees.
    \item $\xbase$: A naive exact counting approach where the full ensemble is encoded into a single monolithic Algebraic Decision Diagram (ADD).
\end{itemize}
For consistency, we utilized a gap threshold of 2, bit-distance of 1, $\epsilon$ to 0.1, $\delta$ to 0.1, and set the leaf value precision to 3 decimal places across all experiments.
However, for the datasets \textit{Higgs} and \textit{Supersymmetry} we used a gap threshold of 0.05, due to the leaf values of these two datasets being 
much smaller to record difference of more than 2, leading to no possible solutions in case a gap threshold of 2 is selected. The value of a meaningful gap threshold to be given as input to the model is closely related 
to the scale of values in the leaves, as it captures the absolute difference between the output of the ensemble (i.e the sum of leaves) for two different inputs. 
Therefore, for any meaningful analysis of sensitivity, the gap threshold given must be proportionate to leaf values of the model. Experimental evaluation of varying gap thresholds over the performance of $\xcount$ can be found in section~\ref{app:expts} of the appendix. 
\medskip

\noindent{\it Our results.} We present our results in four parts: (i) we discuss the results of our performance in comparison to the baseline; (ii) we check the accuracy of our approximate count; (iii)  we demonstrate scalability of the tool varying size of the ensemble, and (iv) we highlight the impact of different optimizations in our tool through ablation studies.

\noindent{\bf Performance results.} We conducted a detailed experimental evaluation to compare $\xcount$ against $\ganak$ and $\approxmc$. Table~\ref{tab:par2scores} gives the overall statistics regarding the completed number of runs by each tool and the corresponding PAR-2 scores. Figure~\ref{fig:cactus_plots_all} presents the cactus plots for the ten datasets, visualizing the number of benchmark instances solved (x-axis) within a given time limit (y-axis). The plots for each datasets can be referred to in Appendix~\ref{app:expts}.

\begin{figure}[t]
    \centering
    \setlength{\tabcolsep}{4pt}
    \begin{minipage}[t]{0.51\textwidth}
        \vspace{0pt}
        \centering
        \includegraphics[width=\linewidth]{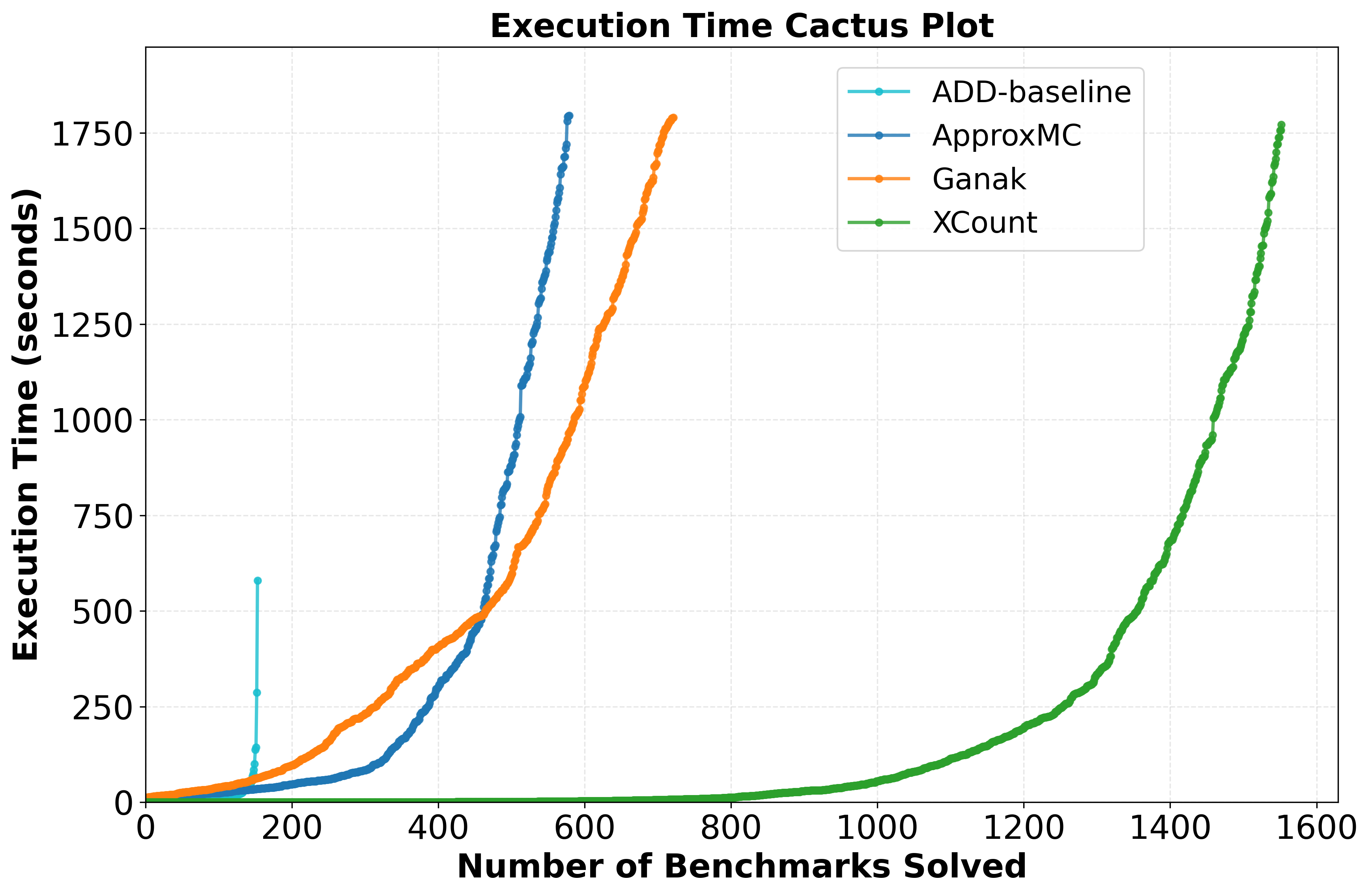}
        \caption{Cactus plots comparing execution time across all benchmarks.}
        \label{fig:cactus_plots_all}
    \end{minipage}
\quad 
    \begin{minipage}[t]{0.45\textwidth}
        \vspace{0pt}
        \centering
        \small
        \setlength{\tabcolsep}{2.5pt}
        \renewcommand{\arraystretch}{0.9}
        \begin{tabular}{lcc}
            \toprule
            \textbf{Tool} & \textbf{Solved} & \textbf{PAR2} \\ \hline
            $\xbase$ & 153 & 3428.461 \\
            $\approxmc$ & 579 & 3005.374 \\
            $\ganak$ & 721 & 2901.421 \\
            $\xcount$ & 1552 & 1937.474 \\ \hline            
        \end{tabular}
        \captionof{table}{Aggregate statistics for each configuration. \textbf{PAR2} is the Penalized Average Runtime - if instance is solved then its runtime is counted, and if it times out then twice the timeout (2*1800=3600 s) value is used.}
        \label{tab:par2scores}
    \end{minipage}
\end{figure}

As shown in Figure~\ref{fig:cactus_plots_all}, $\xcount$ demonstrates significantly higher instance completion rates compared to the generic CNF-based pipelines.  For smaller instances, $\xbase$ is faster compared to CNF + $\ganak$/$\approxmc$ pipelines but scales poorly with respect to size of the ensemble, whereas the latter is able to scale. However, $\xcount$ performs better than both of these, both in terms of timing and scalability.

{\em In summary, out of 3194 benchmark instances, $\xcount$ is able to handle 1552, about 1.15 times more than the next best approach,
as well as clocking a PAR-2 score of 1937.474, an improvement of 33\% over the next best approach and 43\% improvement over the 
ADD baseline approach.} 

\medskip

\noindent{\bf Accuracy wrt approximation guarantees.} We quantified the deviation of our approximate counts against the exact counts provided by the CNF + $\ganak$ pipeline (for instances where $\ganak$ did not time out). Figure~\ref{fig:error_plots_acc} depicts the relative error distribution. The majority of solved instances exhibit negligible error, confirming that $\xcount$ maintains high fidelity to the exact count while offering improved scalability.
\begin{figure}[h!]
    \centering
    \includegraphics[width=0.7\textwidth]{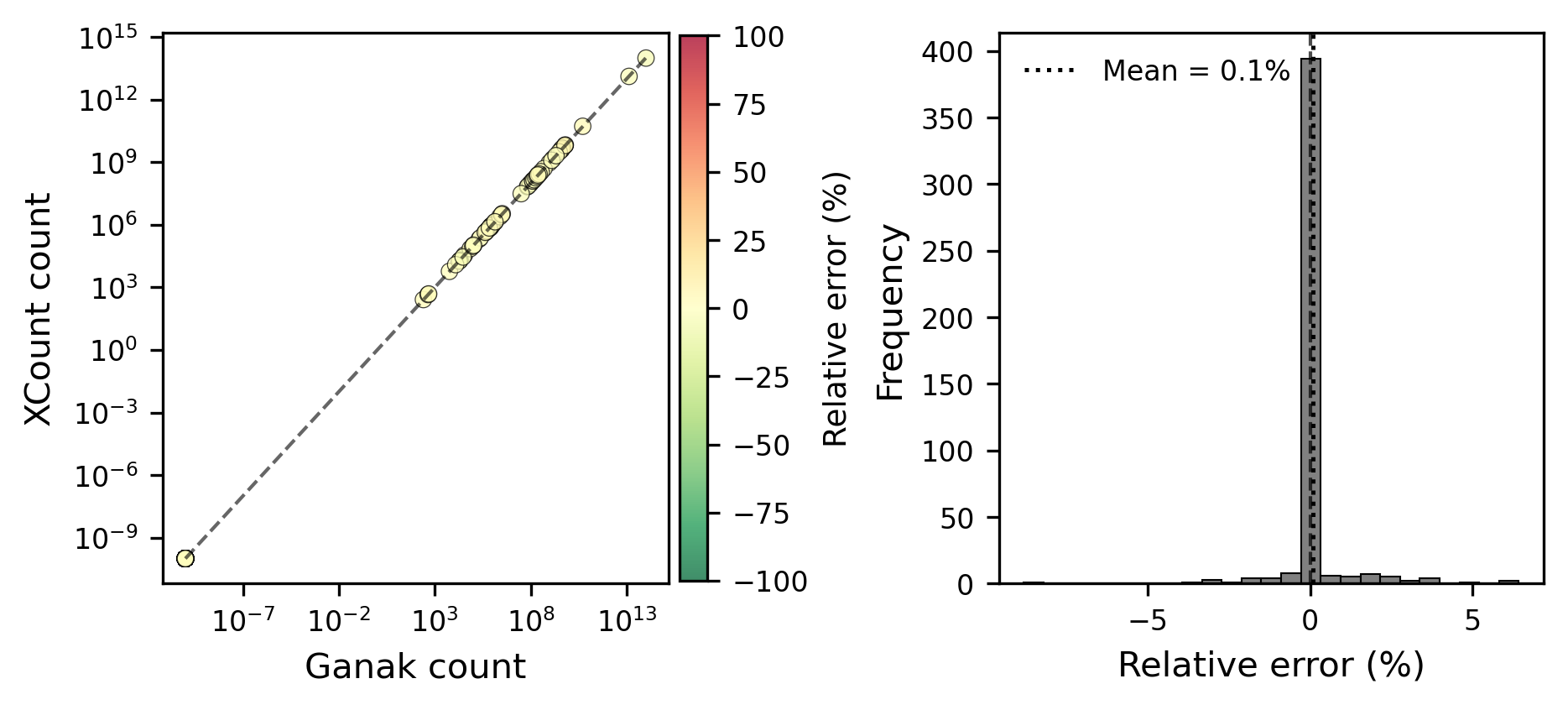}
    \caption{Relative error of $\xcount$ compared to exact counts across all benchmarks.}
    \label{fig:error_plots_acc}
\end{figure}

\noindent{\bf Scalability.} We evaluate the scalability of the tool with increasing size of the ensemble. Figure~\ref{fig:heatmap} depicts a heat map of experiments over all datasets to illustrate a relation between the  number of trees of the ensemble and the max depth of the trees of  the ensemble. The higher shades indicated the greater time taken by the instances that fall in that category. A better metric is to enumerate the total number of guards present in the ensemble. In Figure~\ref{fig:guard_scalability}, we depict a plot the ratio of completed instances to total number of instances for each range of guard values. The plot shows that as the number of guards increase, the performance of $\xcount$ goes down, with less than 50\% of the instances getting completed when guards are more than 400.

\begin{figure}[t]
    \centering
    \begin{subfigure}[t]{0.48\textwidth}
        \centering
        \includegraphics[width=\linewidth]{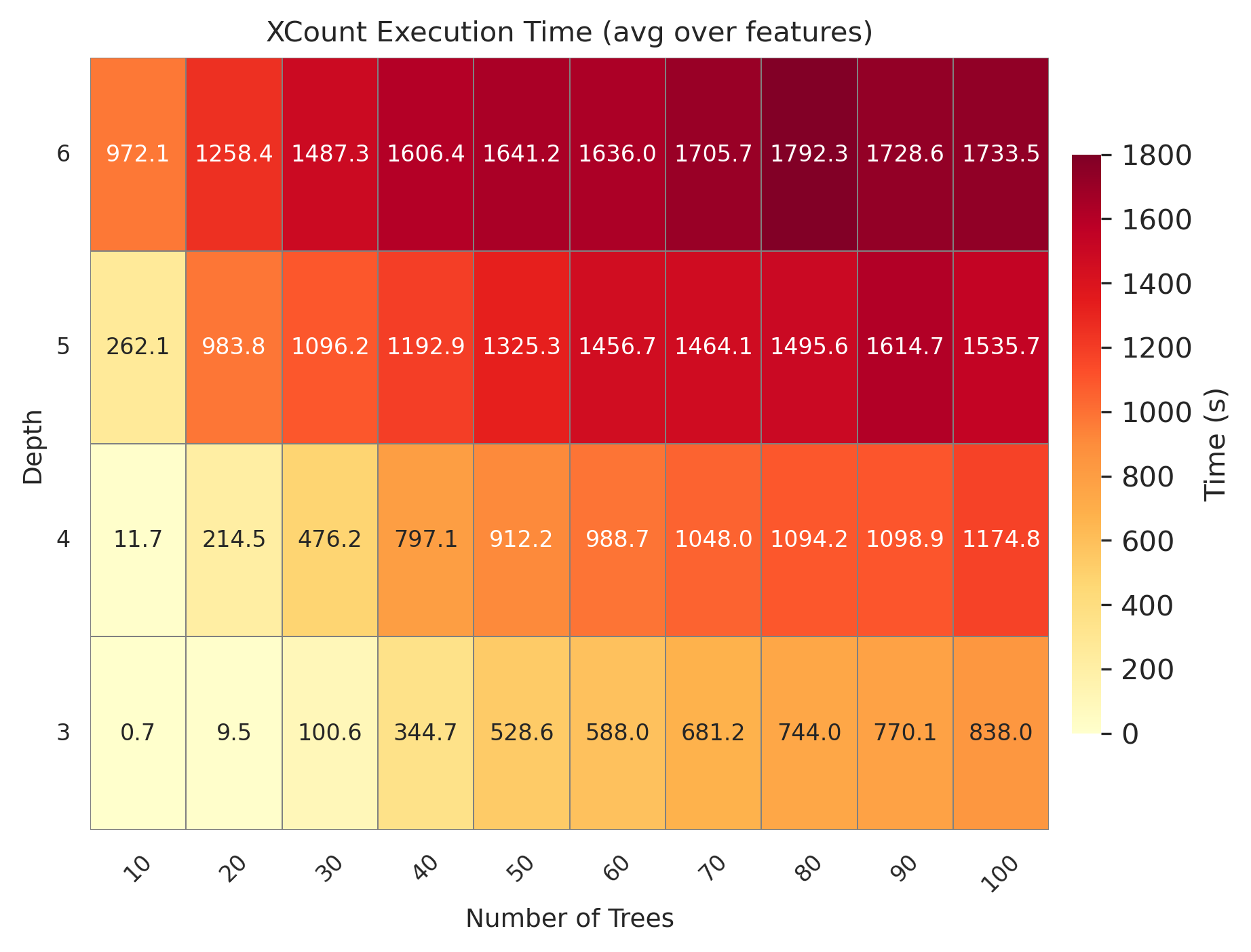}
        \caption{Heat map of trees, depth vs time across all benchmarks}
        \label{fig:heatmap}
    \end{subfigure}
    \hfill
    \begin{subfigure}[t]{0.4\textwidth}
        \centering
        \includegraphics[width=\linewidth]{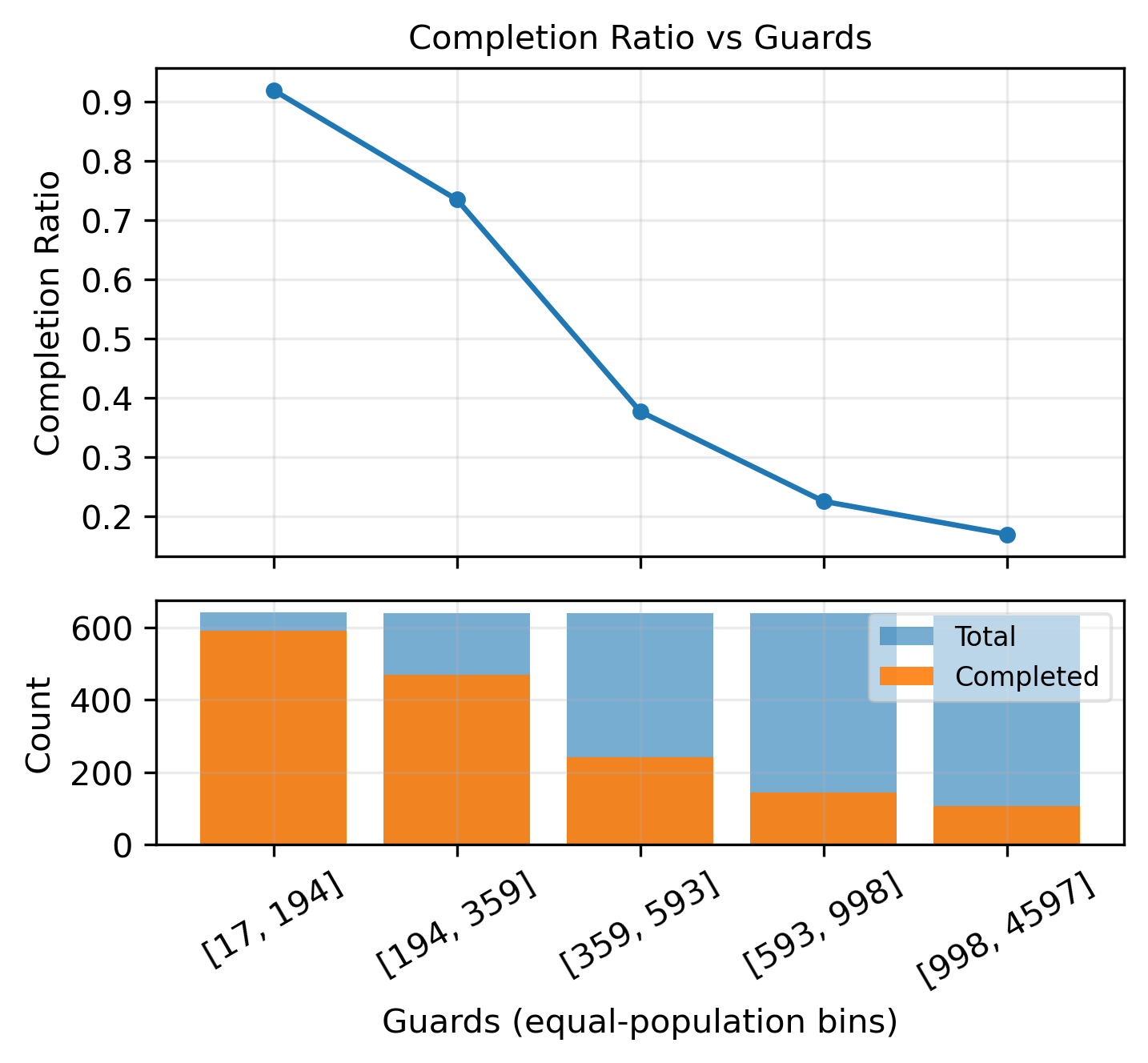}
        \caption{Completion ratio vs guard ranges}
        \label{fig:guard_scalability}
    \end{subfigure}
    \caption{Scalability analysis of $\xcount$ wrt ensemble size across all benchmarks.}
    \label{fig:scalability}
\end{figure}

\noindent{\bf Ablation Studies.} Finally, we discuss the impact of the various optimizations embedded in our tool. The first is encoding the problem as an ADD (\textbf{Opt-1}). Second, when we break the problem into subproblems, and take the union of satisfying assignments over them, one design based optimization is to take pairwise difference of individual ADDs and then add them, instead of adding them first and then taking difference (\textbf{Opt-2}). 
Finally, merging the satisfying solutions of subproblems by taking a probabilistic approach (pepin) instead of exact count, to obtain the final count estimate (\textbf{Opt-3}). To illustrate the relative impact of each of these optimizations we conduct a comprehensive set of experiments across four configurations: 
(i) Encoding the problem as a single monolithic ADD without any of the other optimizations (Opt-1 or $\xbase$).
(ii) Adding all optimizations except Opt-2 i.e first adding ADDs then taking pairwise difference (Opt 1+3).
(iii) Adding all optimizations except Opt-3 i.e doing exact counting instead of probabilistic counting (Opt 1+2).
(iv) Adding all other optimizations except Opt-1 i.e breaking the problem into subproblems and passing the CNF encoding of subproblems to $\ganak$ (Subp + CNF).
Note that for this configuration only the total time for solving individual subproblems is summed up, without calculating the final count as its performance was already worse than other configurations.

We compare the performance of these four with performance of $\xcount$.
For this set of experiments we train models on the \textit{Adult} dataset by varying the number of trees from 10 to 100 with a step size of 10 trees, 
and varying the depth from 3 to 6, leading to 10*4 = 40 tree models. For each of these models, we run the tool taking over all the features that are 
present in the model, leading to a total of 509 instances (\textit{Adult} has 14 features, but the number of instances is slightly less than 14*40 = 560 as 
not all features appear in the trained models if the models are small i.e having less number of trees). 
\begin{figure}[t]
    \centering
    \setlength{\tabcolsep}{4pt}

    \begin{minipage}[t]{0.44\textwidth}
        \vspace{0pt}
        \includegraphics[width=\linewidth]{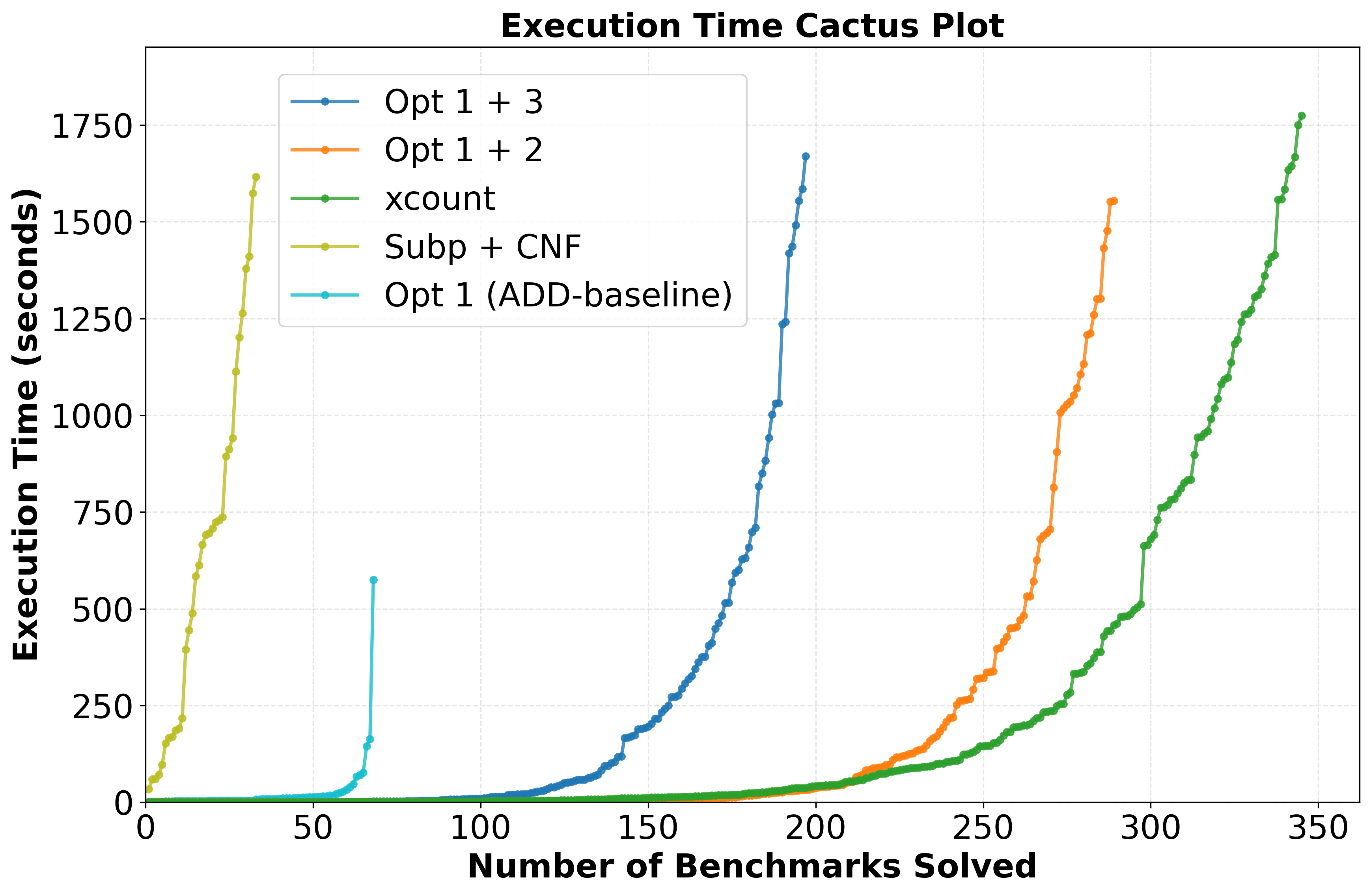}
        \caption{Cactus plots for ablation experiments on \textit{Adult} dataset}
        \label{fig:adult_ablation}
    \end{minipage}
\quad
    \begin{minipage}[t]{0.45\textwidth}
        \vspace{0.5cm}
        \small
        \begin{tabular}{lcc}
            \toprule
            \textbf{Tool} & \textbf{Solved} & \textbf{PAR2} \\
            \midrule
            Opt 1 ($\xbase$) & 68 & 3122.31 \\
            Opt 1+3 & 197 & 2275.374 \\
            Opt 1+2 & 289 & 1635.355 \\
            Subp + CNF & 33 & 3408.22 \\
            $\xcount$ & 345 & 1304.486 \\
            \bottomrule
        \end{tabular}
        \captionof{table}{Aggregate statistics for each configuration. For configuration Subp + Cnf, only subproblems are solved without merging the solutions of subproblems.}
        \label{tab:par2_ablation}
    \end{minipage}
\end{figure}
Figure~\ref{fig:adult_ablation} and Table~\ref{tab:par2_ablation} summarize the results for the experiments on configurations listed above. We see that solving the problem becomes faster when encoded as an ADD, but this does not scale to larger ensembles. Each of the other optimizations lead to benefits with the best performance coming from adding all of them.

{\em In summary}, $\xcount$ outperforms the other baseline approaches by a significant margin especially as sizes of ensembles increases, while the accuracy of the count remains well within the bounds of the theoretical guarantees (for $\epsilon=0.1$, $\delta=0.1$) with 99\% of counts being within 10\% error. We also analyzed scalability with increasing size of the ensembles and role played by each optimization, collectively highlighting the efficacy of our approach. 



\section{Application of $\xcount$ to regularization validation}
\label{sec:motivation}
In this section, we present a case-study to illustrate a potential application of counting sensitivity, by relating it to the level of L1 regularization used during training of the models. A tree ensemble model, such as Gradient Boosted Tree or Random Forest, is trained by splitting a node at a time and prone to overfitting if not properly regularized. L1 regularization, uses a penalty term in the loss function, which encourages fewer splits in the tree. If a node is deeper in a tree the penalty is larger, proportional to the regularization parameter $\alpha$~\cite{andrew2004feature} (see Appendix \ref{app:l1_reg} for more details). Our hypothesis is that smaller $\alpha$ leads to overfitting and increased sensitivity, while larger $\alpha$ promotes generalization and fewer sensitive regions. By quantifying the number of sensitive regions, we can gain insights into the training process of the model, if we do not have access to the model's internal parameters, and assess its potential overfitting.

\begin{wrapfigure}[18]{r}{0.46\textwidth}
  \vspace*{-7mm}
  \centering
  \includegraphics[width=\linewidth]{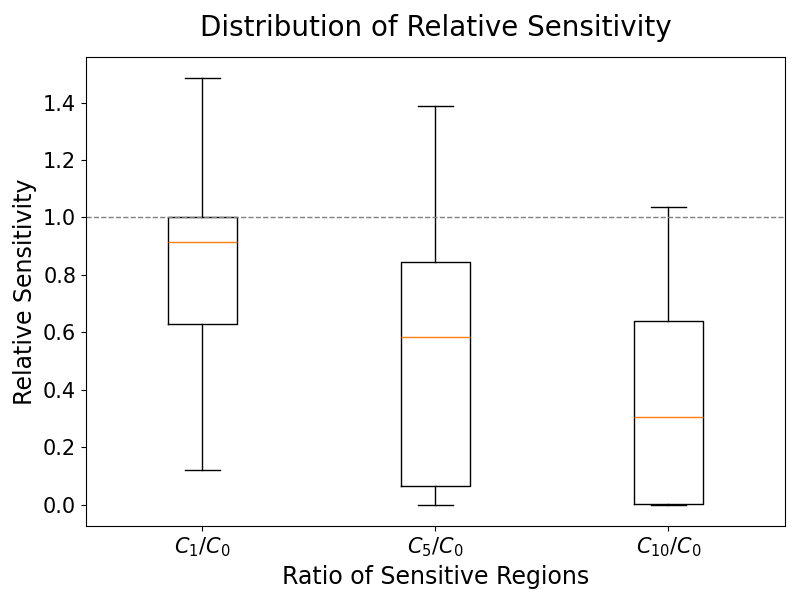}
    \caption{\small Relative sensitivity $C_\alpha/C_0$ for regularized models ($\alpha \in \{1,5,10\}$) against the unregularized baseline. The yellow line marks the median; boxes and whiskers show the IQR and $1.5\times$ IQR range.}
    \label{fig:regularization_impact}
\end{wrapfigure}
We used {\em Adult} dataset for this study.  We trained several models with varying levels of regularization and compare the number of sensitive regions in each model. We repeat this process by varying the other configurations such as depth of trees and number of trees in the ensemble. We consider models of depth 3 and 4, with number of trees as 20, 30 and 40. For each configuration, we trained models with L1 regularization parameters of 0, 1, 5 and 10. Table~\ref{tab:all_metrics} in Appendix shows that our choice of hyperparameters ensures that increasing the L1 regularization parameter $\alpha$ had a negligible impact on predictive performance. This stability allows us to compare the models' sensitivity under the assumption that they are functionally equivalent in terms of standard 
performance metrics.

We selected thirteen sensitive features from the dataset: age, workclass, 
fnlwgt, education, education-num, marital-status, occupation, relationship, race, sex, capital-gain, capital-loss, hours-per-week for our sensitivity analysis.  In total, we have 78 configurations (2 depths $\times$ 3 number of trees $\times$ 13 sensitive features) for our study.

For each configuration and sensitive feature, we computed the number of sensitive regions using $\xcount$, for each of the regularization parameters. To ensure that the discretization of the input space was consistent across four models, we included the guard predicates from all the models during the execution of $\xcount$. We excluded instances where the baseline model yielded zero sensitive regions and the sensitive feature is missing in the models. We used gap 0.5 for our 
study. Let $C_0$ be the count of sensitive regions for the model with no regularization ($\alpha = 0$). We define the relative sensitivity for a model with regularization parameter $\alpha$ as: $C_\alpha/C_0$, where $C_\alpha$ is the count of sensitive regions for the model with regularization parameter $\alpha$. In Figure~\ref{fig:regularization_impact}, we present the results of our study. The box plots show the distribution of relative sensitivity counts across all configurations for each regularization parameter. We observe a clear trend where increasing the regularization parameter leads to a decrease in the number of sensitive regions, confirming our hypothesis that counting sensitive regions provides a useful metric to {\em validate} the impact of regularization on model behavior.


\section{Conclusion}
\label{sec:conclusion}
In this work, we built a quantitative notion of sensitivity over decision tree ensembles and presented $\xcount$, a novel algorithmic technique that can quantify sensitivity over DTEs. The challenges of scale are handled by using a novel compositional and probabilistic approach that provides concrete guarantees. Experimental evaluation over an extensive set of benchmarks show the effectiveness of our method over the baseline and other model counting based approaches.

As future work, we would like use our sensitivity counts to rank DTE models. While our work focused on XGBoost models, our approach and technique work immediately for any decision tree ensemble model. We would like hence to apply this to wider classes of benchmarks, and find other applications for which counting sensitivity could be useful.


\section*{Acknowledgements}
We acknowledge the State Bank of India (SBI) Foundation Hub for Data Science \& Analytics , IIT
Bombay for supporting the work done in this project. We thank Shri Shakeel Ahmed Agasimani,
Deputy General Manager, Analytics Department, C Ramesh Chander, Assistant General Manager
(Statistician), Muthukumaran M S, Chief Manager (Data Scientist), Komaragiri Srinivas Jagannath,
Manager (Data Scientist), Neha Maheswari, Manager (Data Scientist), State Bank of India, for
multiple wide-ranging discussions and feedback.

%
%
%












%
\bibliographystyle{splncs04}
\bibliography{biblio}
\appendix
\section*{Appendix}

This section is divided into five subsections : (i) self-contained proof of
Theorem~\ref{thm:xcount-correct}, (ii) the pseudocode of the subroutines used by the main algorithms,
(iii) the experimental plots for each of the datasets and the plots for performance comparison over varying gap thresholds,
(iv) extra details regarding the regularization application, and (v) 
a detailed explanation of the algorithm through an example.

\section{Expanded Proof of Theorem~\ref{thm:xcount-correct}}
\label{app:xcproof}

For completeness, we present a self-contained proof of
Theorem~\ref{thm:xcount-correct}.
The argument relies on a standard concentration inequality for Poisson random
variables, stated below for reference.

\begin{lemma}[Poisson concentration]\label{lem:poisson-chernoff}
	Let $X \sim \mathrm{Poisson}(\lambda)$ for $\lambda>0$.
	Then for any $x>0$,
	\begin{align*}
		\Pr[|X-\lambda|\ge x]
		&\le
		2\exp\!\left(-\frac{x^2}{2\lambda+x}\right).
	\end{align*}
	In particular, for any $\epsilon\in(0,1]$,
	\begin{align*}
		\Pr[|X-\lambda|\ge \epsilon\lambda]
		&\le
		2\exp\!\left(-\frac{\epsilon^2\lambda}{3}\right).
	\end{align*}
\end{lemma}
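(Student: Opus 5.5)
We prove Lemma~\ref{lem:poisson-chernoff} by the Chernoff method, using the moment generating function of the Poisson distribution, $\mathbb{E}[e^{tX}] = \exp(\lambda(e^t-1))$ for all real $t$. We bound the upper and lower tails separately and add them, which accounts for the factor $2$.

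\emph{Upper tail.} For $t>0$, Markov's inequality applied to $e^{tX}$ gives
\[
\Pr[X \ge \lambda + x] \le \exp\bigl(\lambda(e^t-1) - t(\lambda+x)\bigr).
\]
Choosing $t=\ln(1+x/\lambda)$ yields the standard bound $\exp(-\lambda h(x/\lambda))$, where $h(u) = (1+u)\ln(1+u) - u$. We then use the elementary inequality
\[
h(u) \ge \frac{u^2}{2+2u/3} \quad \text{for } u\ge 0,
\]
which follows because both sides and their first derivatives vanish at $u=0$ and the second derivatives compare as needed. This gives $\Pr[X\ge\lambda+x] \le \exp\bigl(-x^2/(2\lambda+2x/3)\bigr) \le \exp\bigl(-x^2/(2\lambda+x)\bigr)$.

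\emph{Lower tail.} For $t>0$, apply Markov's inequality to $e^{-tX}$. Choosing $t=-\ln(1-x/\lambda)$ when $x<\lambda$ gives $\exp(-\lambda h(-x/\lambda))$. Here $h(-u)\ge u^2/2$ for $u\in[0,1)$, so the lower tail is at most $\exp(-x^2/(2\lambda)) \le \exp(-x^2/(2\lambda+x))$. If $x\ge\lambda$, then $\Pr[X\le\lambda-x] \le \Pr[X=0] = e^{-\lambda}$. Since $x^2/(2\lambda+x)\le x\le\lambda$ only when $x \le \lambda$, we have to check this case separately. For $x=\lambda$ the bound is $e^{-\lambda/3}\ge e^{-\lambda}$, which is fine. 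For $x>\lambda$ the event $\{X \le \lambda - x\}$ is empty, since $X\ge 0$, so the bound holds trivially. Summing the two tails proves the first inequality.

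\emph{Specialization.} Set $x=\epsilon\lambda$ with $\epsilon\in(0,1]$. Then
\[
\frac{x^2}{2\lambda+x} = \frac{\epsilon^2\lambda}{2+\epsilon} \ge \frac{\epsilon^2\lambda}{3},
\]
which is the second inequality.

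The main obstacle is the calculus inequality relating $h(u)$ to the Bernstein-type rational expression. For that step we plan to cite the standard Bennett-to-Bernstein reduction rather than reprove it in full.
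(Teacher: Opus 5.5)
Your proof is correct. The paper gives no proof of this lemma; it quotes it as a standard Poisson concentration inequality ``stated below for reference''. Your Chernoff argument supplies the standard derivation the paper takes for granted: the Poisson MGF, Bennett's bound $\exp(-\lambda h(x/\lambda))$ for the upper tail, $h(-u)\ge u^2/2$ for the lower tail, and the separate handling of $x\ge\lambda$ via $X\ge 0$.

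The step you plan to cite is easy to close directly. We have $h''(u)=1/(1+u)$, while $g(u)=u^2/(2+2u/3)$ satisfies $g(0)=g'(0)=0$ and $g''(u)=27/(3+u)^3$. Then $h''\ge g''$ on $u\ge 0$ reduces to $(3+u)^3=27+27u+9u^2+u^3\ge 27(1+u)$.

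A side benefit of your route is that it bounds each tail separately, without the factor $2$ and with the sharper denominators $2\lambda+2x/3$ and $2\lambda$. The paper's later estimate $\Pr[|X^{(i)}_{j^\star-1}|\ge\thresh]\le\exp(-\thresh/6)$ actually uses this one-sided form. The two-sided statement as written would only give $2\exp(-\thresh/6)$ there.

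The sentence beginning ``Since $x^2/(2\lambda+x)\le x\le\lambda$ only when\ldots'' is muddled and can be deleted; your case split for $x\ge\lambda$ stands on its own.
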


We now give the detailed proof of Theorem~\ref{thm:xcount-correct}.

\begin{proof}[Proof of Theorem~\ref{thm:xcount-correct}]
	Fix an ordering of the subproblems processed by {\xcount}.
	For each iteration $i\in[m]$, let $U_i \subseteq \{0,1\}^F$ denote the set of
	assignments generated by the $i$-th subproblem and define
	\begin{align*}
		S_i &:= \bigcup_{t=1}^i U_t,
		&
		S_m &:= \bigcup_{t=1}^m U_t.
	\end{align*}
	By construction,
	\begin{align*}
		|S_m| = C(S,d,G).
	\end{align*}
	
	For $j\ge 0$, let $p_j := 2^{-j}$.
	For $i\in[m]$, define $X^{(i)}_j$ to be the random multiset obtained after
	processing the first $i$ subproblems, conditioned on the sampling probability
	being $p_j$.
	For each $t\le i$, the algorithm draws
	\begin{align*}
		N_t \sim \mathrm{Poisson}(|U_t|\,p_j)
	\end{align*}
	samples uniformly from $U_t$.
	For a fixed $s\in S_i$, the total number of times $s$ is generated is the sum
	of independent Poisson variables, one from each $U_t$ containing $s$.
	This sum is Poisson with mean $p_j$, and the multiplicities are independent
	across distinct $s$.
	Consequently,
	\begin{align*}
		|X^{(i)}_j| \sim \mathrm{Poisson}(|S_i|p_j).
	\end{align*}
	
	For $1\le i\le m$ and $j\ge 0$, define
	\begin{align*}
		E^{(i)}_j
		&:= \text{``after processing the first $i$ subproblems,
			the algorithm has sampling probability $p_j$''}, \\
		A^{(i)}_j
		&:=
		\left\{
		|X^{(i)}_j| \not\in
		\bigl[|S_i|p_j(1-\epsilon),\,|S_i|p_j(1+\epsilon)\bigr]
		\right\}.
	\end{align*}
	Let $j^\star$ be the smallest integer $j$ such that
	\begin{align*}
		p_j < \frac{\thresh}{4|S_m|}.
	\end{align*}
	
	On event $E^{(m)}_j$, the algorithm outputs
	\begin{align*}
		\widehat{C} = \frac{|X^{(m)}_j|}{p_j}.
	\end{align*}
	Hence,
	\begin{align*}
		\Pr\!\left[
		\widehat{C}\not\in[(1-\epsilon)|S_m|,(1+\epsilon)|S_m|]
		\right]
		&\le
		\sum_{j=0}^{j^\star-1}\Pr[A^{(m)}_j]
		+
		\Pr\!\left[\bigcup_{j\ge j^\star}E^{(m)}_j\right].
	\end{align*}
	
	For $j<j^\star$,
	\begin{align*}
		|X^{(m)}_j|
		&\sim \mathrm{Poisson}(\lambda_j),
		&
		\lambda_j
		&:= |S_m|p_j \ge \thresh/4.
	\end{align*}
	Applying Lemma~\ref{lem:poisson-chernoff},
	\begin{align*}
		\Pr[A^{(m)}_j]
		&\le
		2\exp\!\left(-\frac{\epsilon^2\lambda_j}{3}\right)
		\le
		2\exp\!\left(-\frac{\epsilon^2\thresh}{12}\right),
	\end{align*}
	and therefore
	\begin{align*}
		\sum_{j=0}^{j^\star-1}\Pr[A^{(m)}_j]
		&\le
		2j^\star \exp\!\left(-\frac{\epsilon^2\thresh}{12}\right)
		\le
		\frac{\delta}{6}.
	\end{align*}
	
	The event $\bigcup_{j\ge j^\star}E^{(m)}_j$ can occur only if, for some $i\in[m]$,
	\begin{align*}
		|X^{(i)}_{j^\star-1}| \ge \thresh.
	\end{align*}
	For any fixed $i$,
	\begin{align*}
		|X^{(i)}_{j^\star-1}|
		&\sim \mathrm{Poisson}(|S_i|p_{j^\star-1})
		\le
		\mathrm{Poisson}(\thresh/2),
	\end{align*}
	since $S_i\subseteq S_m$ and $p_{j^\star-1}<\thresh/(2|S_m|)$.
	Applying Lemma~\ref{lem:poisson-chernoff},
	\begin{align*}
		\Pr\!\left[|X^{(i)}_{j^\star-1}|\ge \thresh\right]
		&\le
		\exp(-\thresh/6)
		\le
		\frac{\delta}{6m}.
	\end{align*}
	A union bound over $i\in[m]$ yields
	\begin{align*}
		\Pr\!\left[\bigcup_{j\ge j^\star}E^{(m)}_j\right]
		\le
		\frac{\delta}{6}.
	\end{align*}
	
	Combining the bounds,
	\begin{align*}
		\Pr\!\left[
		\widehat{C}\not\in[(1-\epsilon)|S_m|,(1+\epsilon)|S_m|]
		\right]
		\le \delta.
	\end{align*}
	Since $|S_m|=C(S,d,G)$, the theorem follows.
\end{proof}
\section{Details of Section~\ref{sec:algorithm}}
\label{app:algo}
This section contains the details of the procedures used by the primary 
algorithms listed in Section~\ref{sec:algorithm}.

The subroutine $\bitmaskgen$ takes as input an integer $k$ and gives 
$(k+1)$ bit vectors, each of size k, such that for each bit vector 
the monotonicity property is followed - bit preceding a given bit must be greater 
that or equal to it.  

\begin{algorithm}[H]
\caption{$\bitmaskgen$($k$)}
\label{alg:bitmaskgen}
\begin{algorithmic}[1]
\State $MaskSet \gets \emptyset$
\State $BitMask[k] \gets 1$
\For{$i=1$ to $k$}
  \State $BitMask[i] \gets 0$
  \State $MaskSet \gets BitMask$
\EndFor
\State \Return $MaskSet$
\end{algorithmic}
\end{algorithm}



The subroutine $\mathsf{PruneTree}$ takes a node of the tree as input 
and recursively prunes it by traversing 
the tree and checking decision at each node against the given assignment set $m$.

\begin{algorithm}[H]
\caption{$\mathsf{PruneTree}$($n,m$)}
\label{alg:prunetree}
\begin{algorithmic}[1]
\If{$n$ is leaf}
    \Return
\EndIf
\State $n.yes \gets \mathsf{PruneTree}(n.yes,m)$ 
\State $n.no \gets \mathsf{PruneTree}(n.no,m)$ 
\If{$(n.feature,n.threshold) \notin m$}
    \Return $n$
\EndIf
\State $i \gets \text{index of }(n.feature,n.threshold) \text{ in m}$ 
\State nodes $keep \gets nullptr$, $discard \gets nullptr$
\If{$m[i] == 1$}
    \State $keep \gets n.yes$
    \State $discard \gets n.no$
\Else 
    \State $keep \gets n.no$
    \State $discard \gets n.yes$
\EndIf
\State \Return $keep$
\end{algorithmic}
\end{algorithm}

The subroutine $\EncodeADD$ takes a decision tree node and 
recursively encodes it as an ADD by traversing through the tree 
by making the tree nodes as the nodes of the ADD, and the leaf value 
as the terminal node of the ADD which stores the real value.
 
\begin{algorithm}[H]
\caption{$\EncodeADD$($n$)}
\label{alg:encodeadd}
\begin{algorithmic}[1]
\State init ADD node $A$
\If{$n$ is a leaf node}
    \Return $A \gets n.val$
\Else
    \State $A.true \gets \EncodeADD$($n.yes$)
    \State $A.false \gets \EncodeADD$($n.no$)
\EndIf
\State \Return A
\end{algorithmic}
\end{algorithm}

The subroutine $\ToBDD$ converts an ADD to a BDD by taking a 
threshold value and constructing the terminal nodes of the BDD 
according to the decision whether the real value in the terminal 
node of the ADD  is greater than the threshold.

\begin{algorithm}[H]
\caption{$\ToBDD$($n$)}
\label{alg:tobdd}
\begin{algorithmic}[1]
\If{$n$ is a leaf node}
    \If{$n.val > G$}
        \Return 1
    \Else 
        \Return 0
    \EndIf
\Else
    \State $\ToBDD$($n.yes$)
    \State $\ToBDD$($n.no$)
\EndIf
\State \Return
\end{algorithmic}
\end{algorithm}

The subroutine $\mathsf{SampleBDD}$ takes a BDD, an integer $N$ 
and a pair of bit-masks, and samples satisfying assignments 
from the BDD uniformly, appends it with either one of the 
masks (randomly selected) to form the complete assignment and 
returns all such complete assignments. 

\begin{algorithm}[H]
\caption{$\mathsf{SampleBDD}$($BDD$,$N$,$pair$)}
\label{alg:samplebdd}
\begin{algorithmic}[1]
\State $S \gets \emptyset$
    \For{$i = 1$ to $N$}
        \State Sample assignment $x$ via top-down traversal using transition probabilities $p(u \to v)$
        \State $m \gets $ Sample one from bitmask pair $(pair.m_1, pair.m_2)$ with 0.5 probability
        \State S.Append$((x, m))$
    \EndFor
\State \Return $S$
\end{algorithmic}
\end{algorithm}

\section{Details of Section~\ref{sec:experiments}}
\label{app:expts}

\subsection{Plots for all datasets}

In this section, we present the plots of experimental evaluation 
for each of the datasets in figure~\ref{fig:allplots1}.

\begin{figure}[h!]
    \centering
    \begin{subfigure}{0.49\textwidth}
        \includegraphics[width=\textwidth]{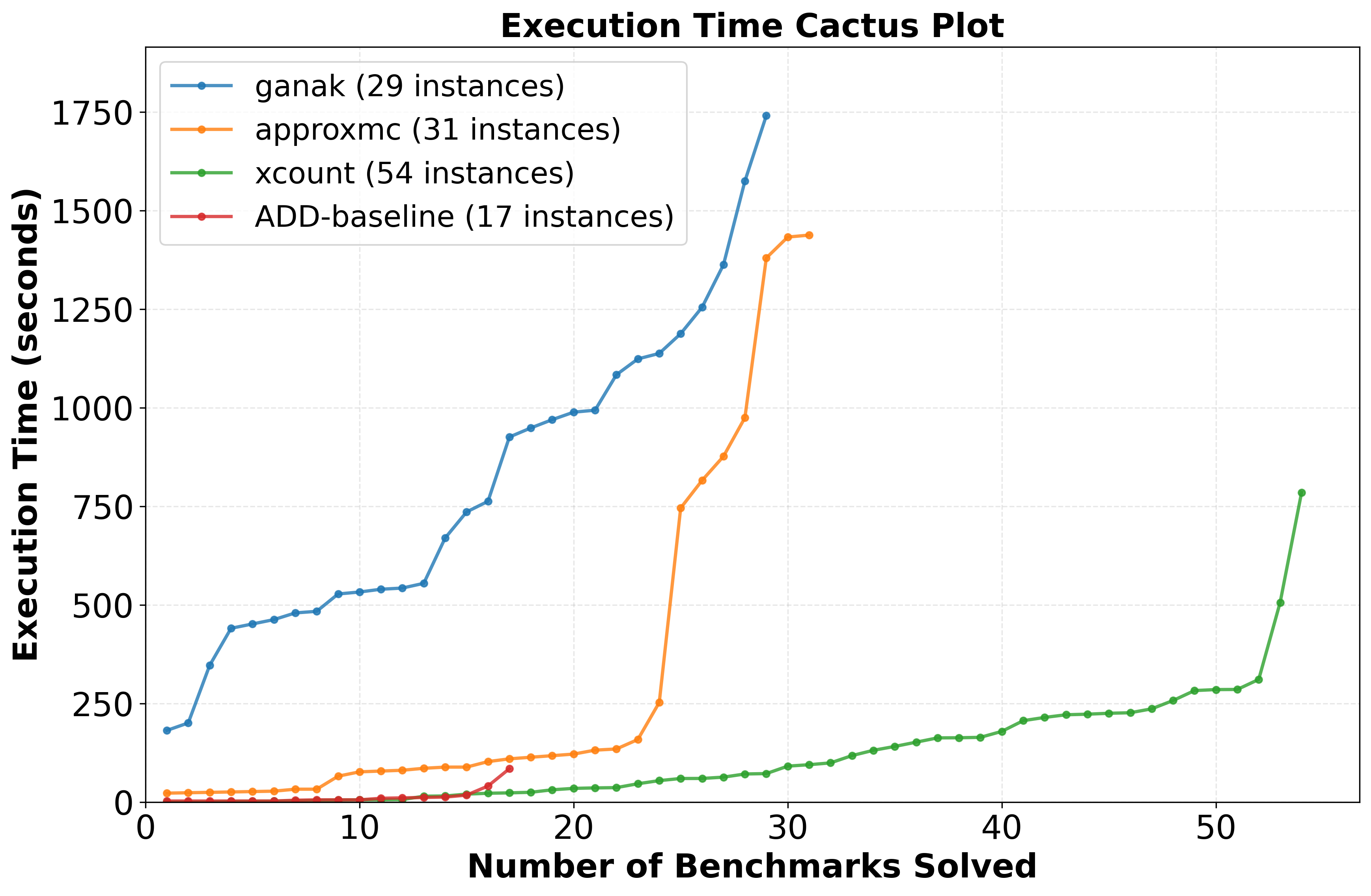}
        \caption{Diabetes}
        \label{fig:cactus_diabetes}
    \end{subfigure}
    \hfill
    \begin{subfigure}{0.49\textwidth}
        \includegraphics[width=\textwidth]{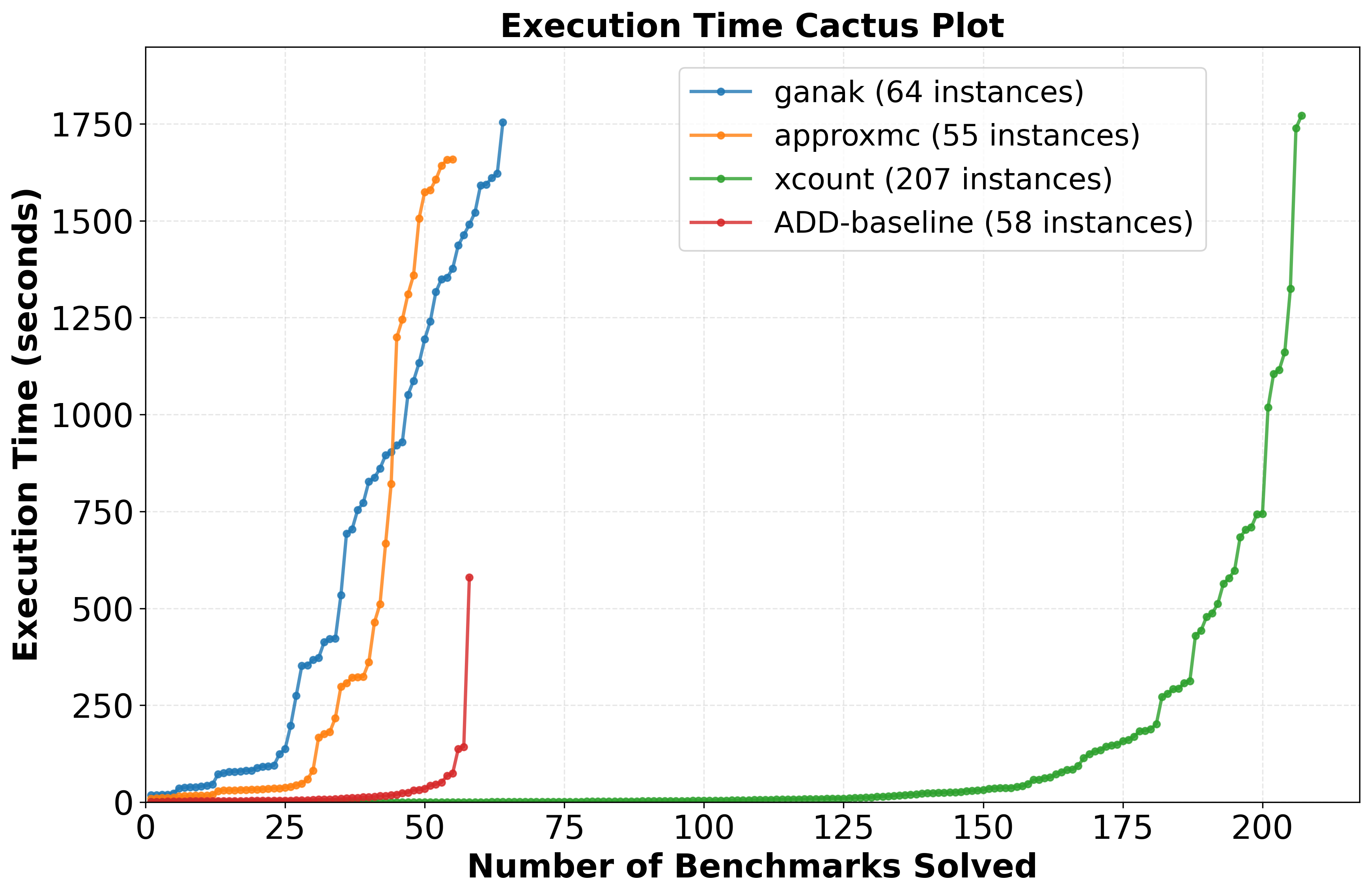}
        \caption{Adult}
        \label{fig:cactus_adult}
    \end{subfigure}

    \vspace{0.6em}

    \centering
    \begin{subfigure}{0.49\textwidth}
        \includegraphics[width=\textwidth]{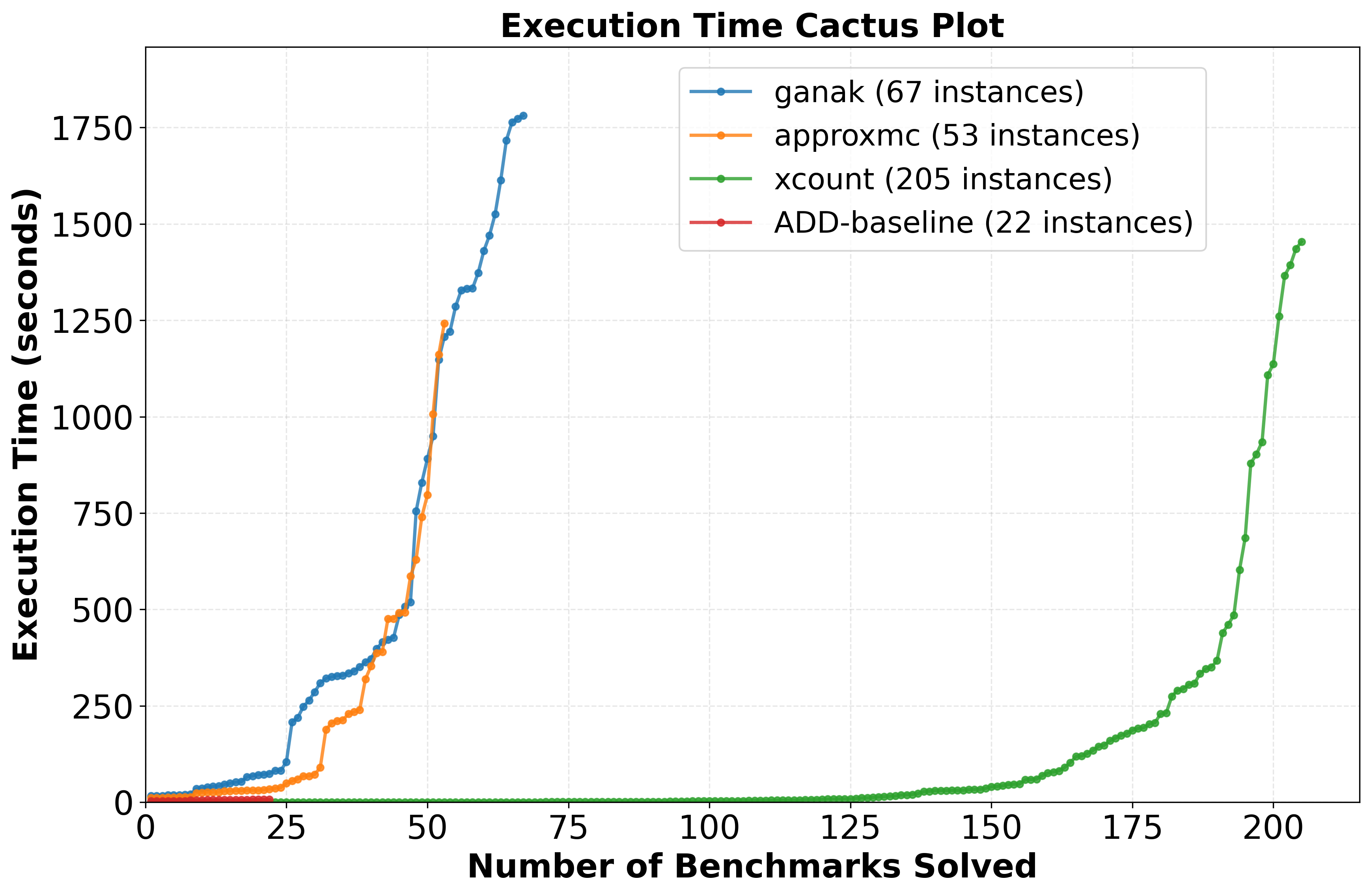}
        \caption{Covtype}
        \label{fig:cactus_diabetes}
    \end{subfigure}
    \hfill
    \begin{subfigure}{0.49\textwidth}
        \includegraphics[width=\textwidth]{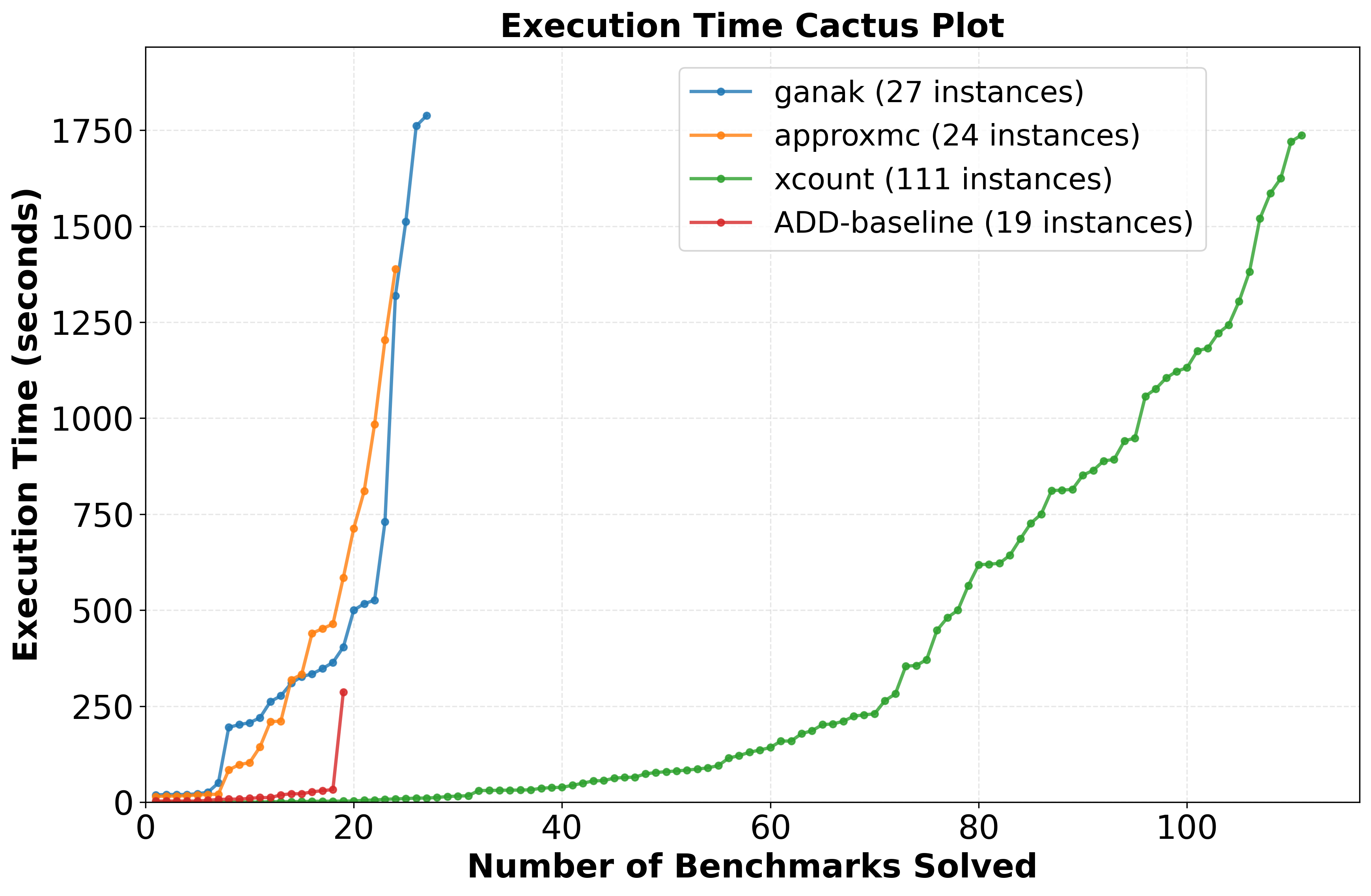}
        \caption{Protein structure}
        \label{fig:cactus_adult}
    \end{subfigure}

    \vspace{0.6em}

    \centering
    \begin{subfigure}{0.49\textwidth}
        \includegraphics[width=\textwidth]{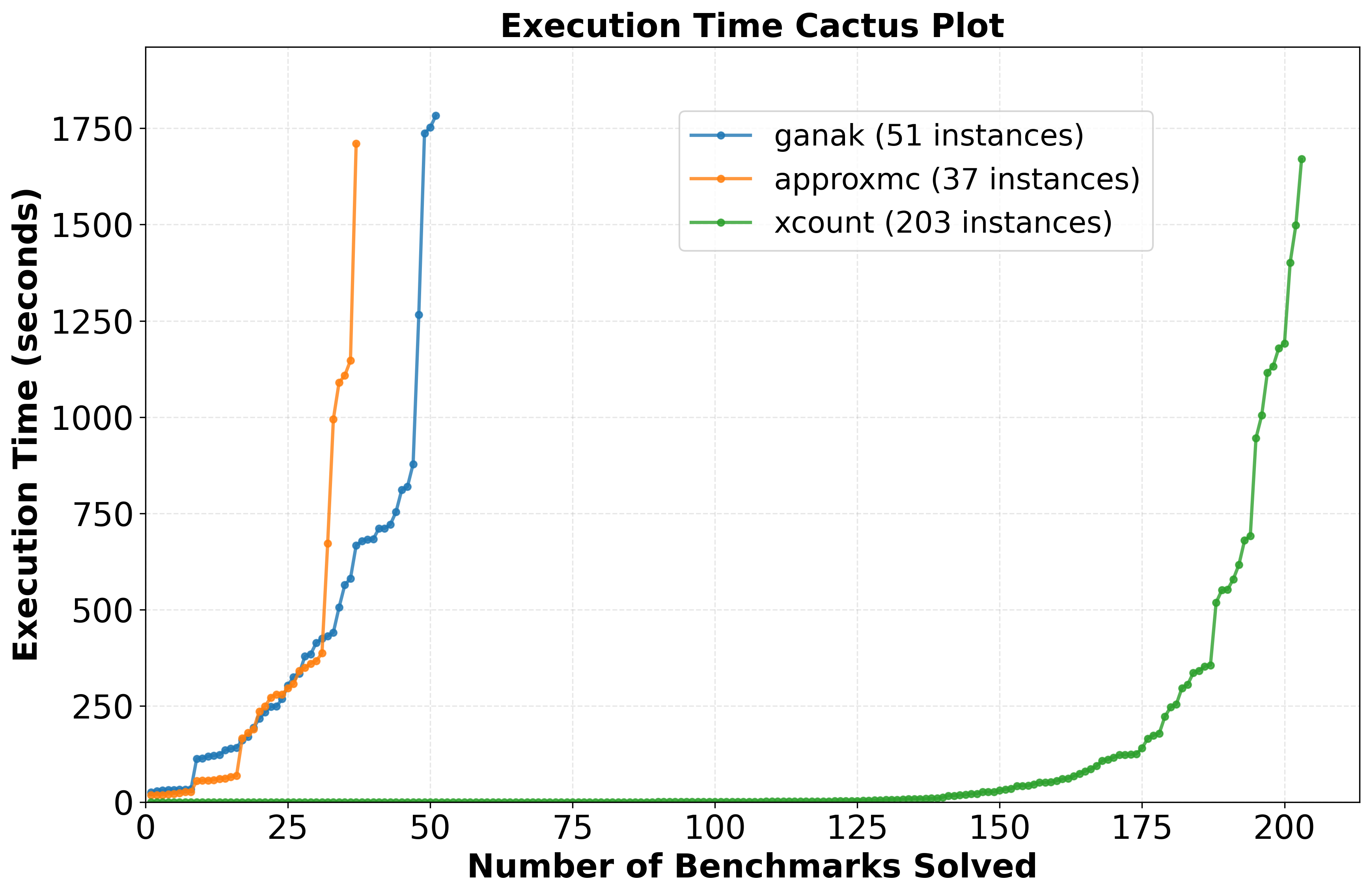}
        \caption{Mnist}
        \label{fig:cactus_diabetes}
    \end{subfigure}
    \hfill
    \begin{subfigure}{0.49\textwidth}
        \includegraphics[width=\textwidth]{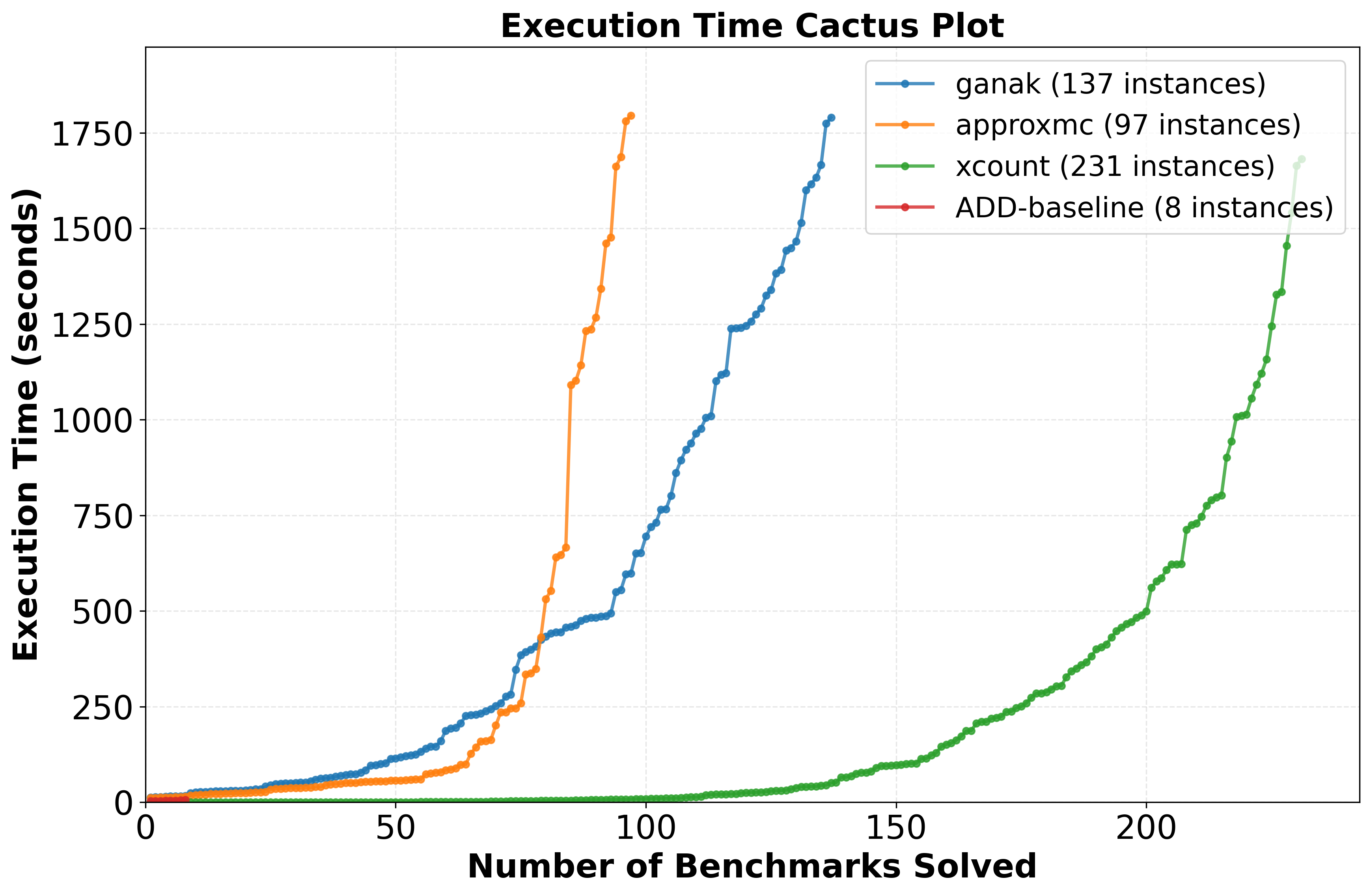}
        \caption{Webspam}
        \label{fig:cactus_adult}
    \end{subfigure}

    \vspace{0.6em}
    \caption{Cactus plots comparing execution time. Lower curves indicate better performance.}
    \end{figure}

\begin{figure}[h!]
    \ContinuedFloat
    \centering
    \begin{subfigure}{0.49\textwidth}
        \includegraphics[width=\textwidth]{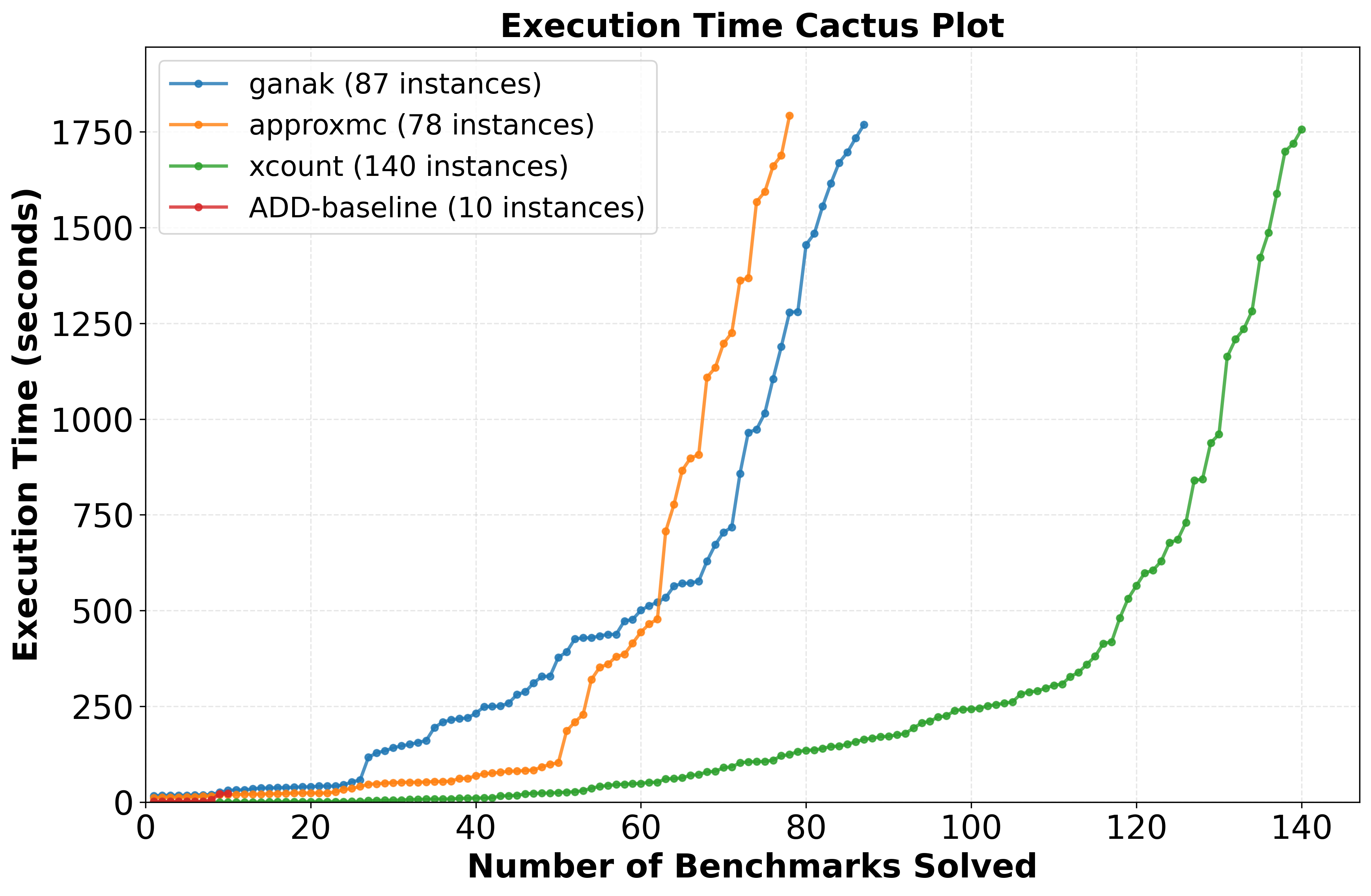}
        \caption{Wine-quality}
        \label{fig:cactus_diabetes}
    \end{subfigure}
    \hfill
    \begin{subfigure}{0.49\textwidth}
        \includegraphics[width=\textwidth]{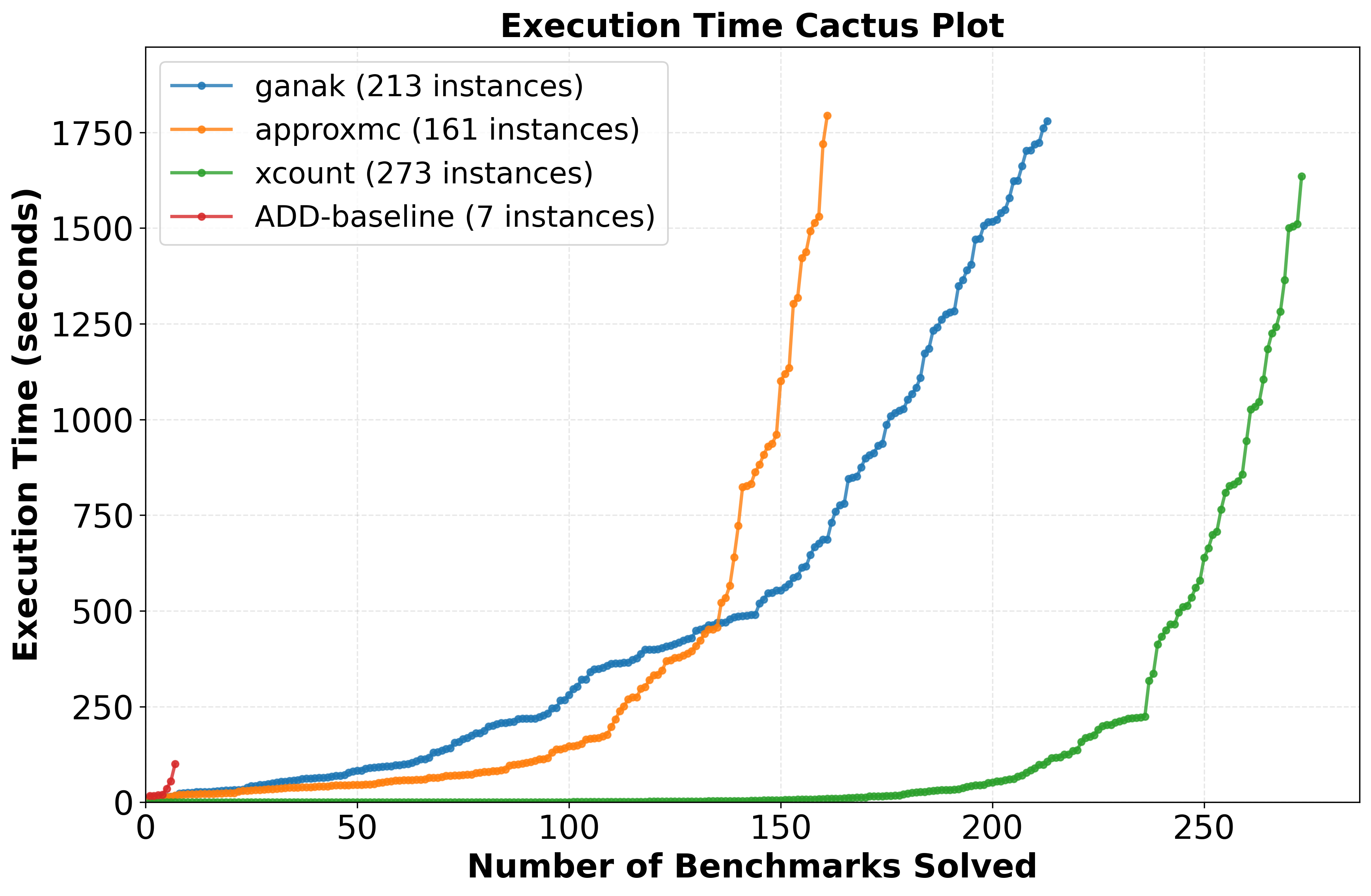}
        \caption{Fashion-Mnist}
        \label{fig:cactus_adult}
    \end{subfigure}
    \centering
    \begin{subfigure}{0.49\textwidth}
        \includegraphics[width=\textwidth]{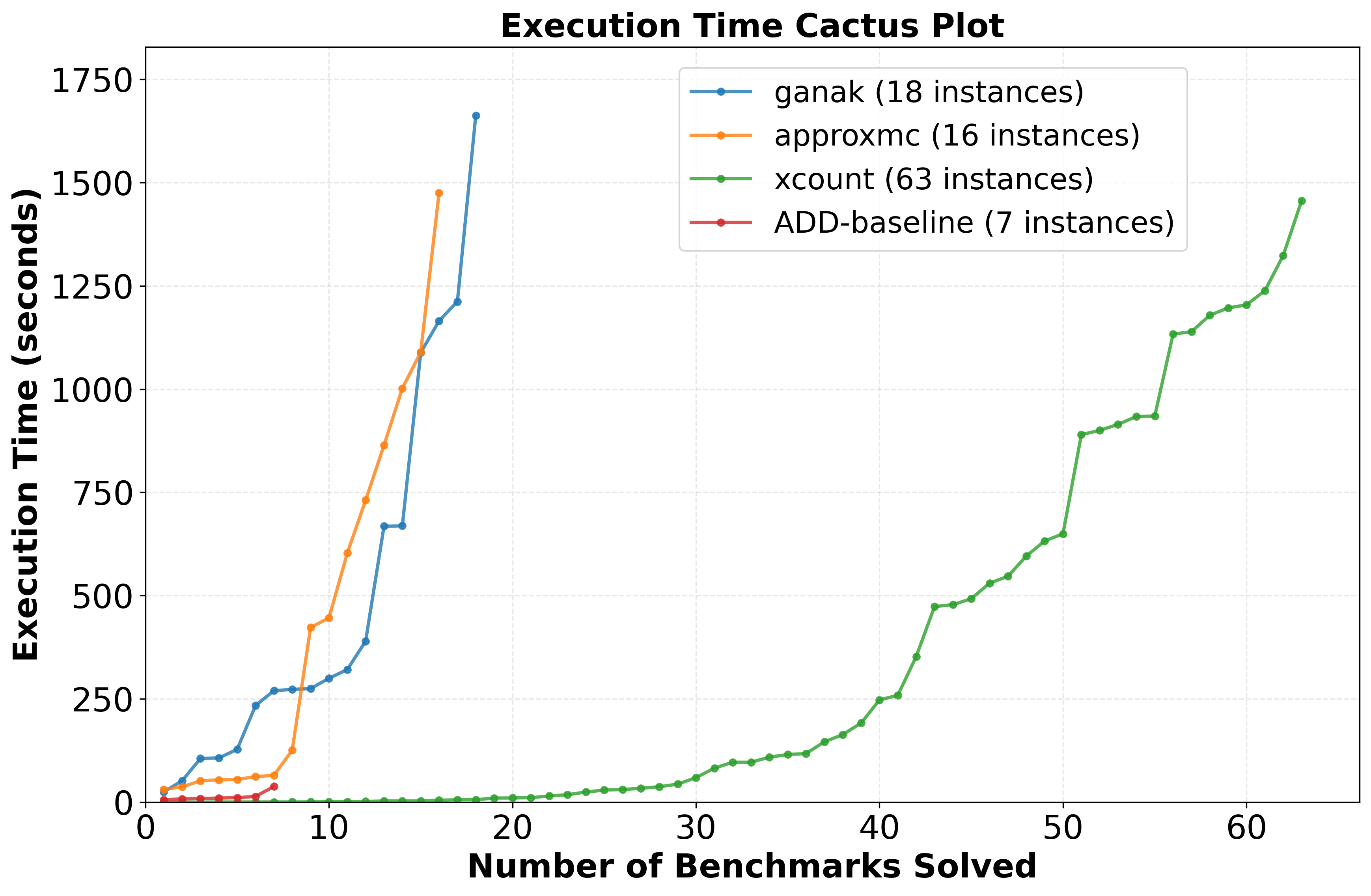}
        \caption{Higgs}
        \label{fig:cactus_diabetes}
    \end{subfigure}
    \hfill
    \begin{subfigure}{0.49\textwidth}
        \includegraphics[width=\textwidth]{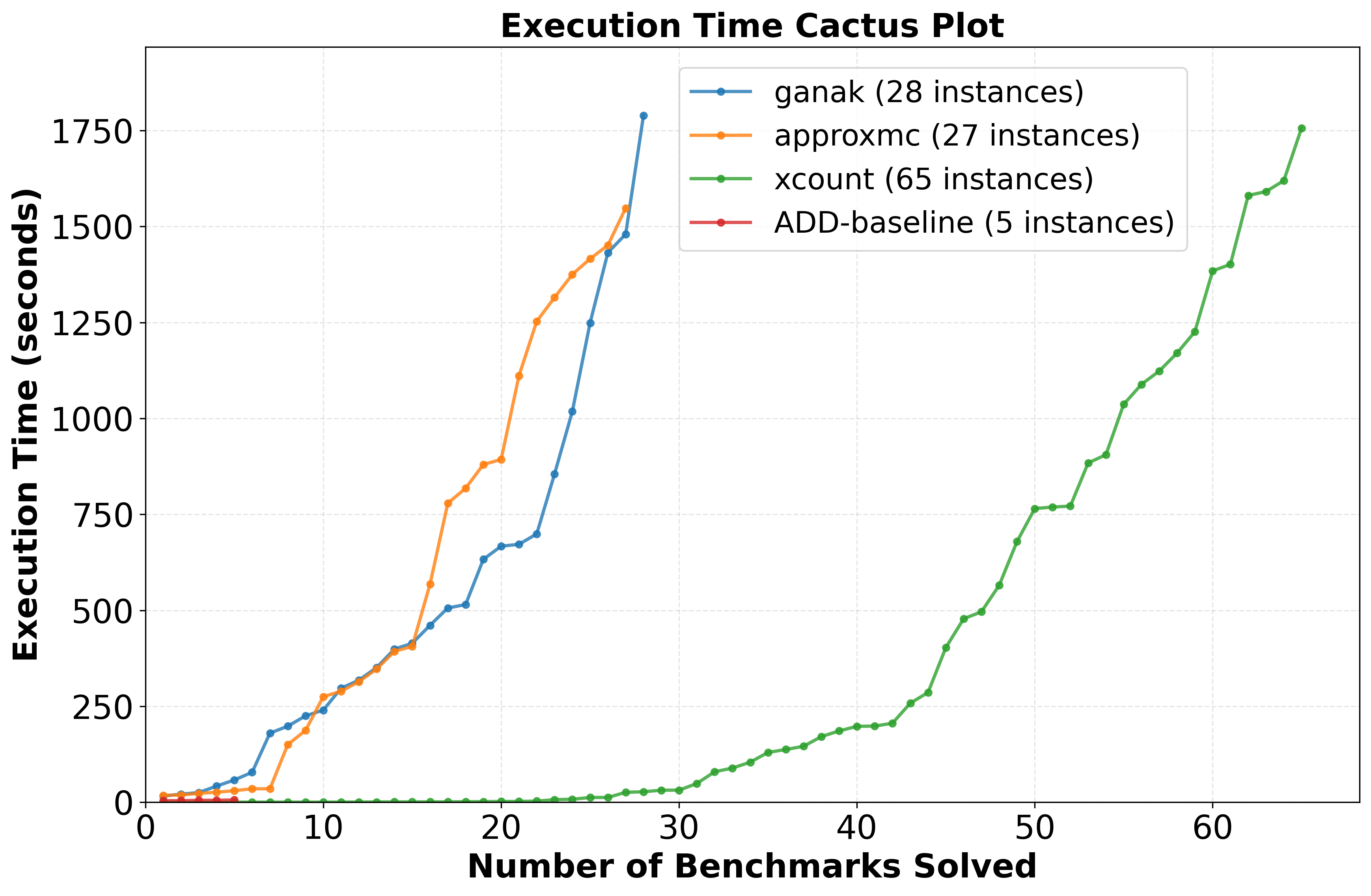}
        \caption{Supersymmetry}
        \label{fig:cactus_adult}
    \end{subfigure}

    \caption{Cactus plots comparing execution time. Lower curves indicate better performance.}
    \label{fig:allplots1}
\end{figure}

\subsection{Performance across varying gap thresholds}

\begin{figure}[h!]
    \centering
    \includegraphics[width=0.6\textwidth]{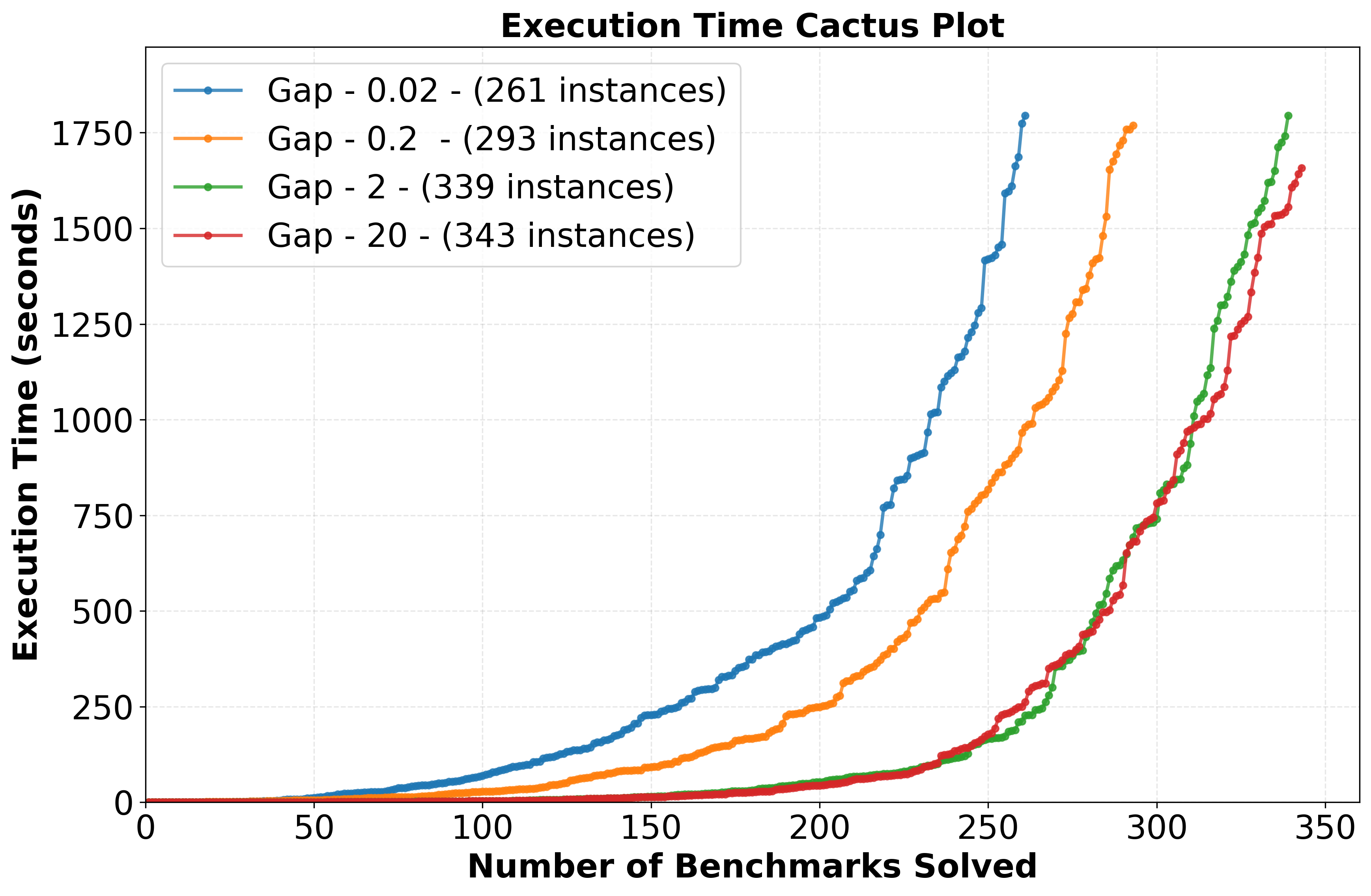}
    \caption{Performance of $\xcount$ across gap values for models of \textit{Adult} dataset.}
    \label{fig:error_plots}
\end{figure}

To test the performance of $\xcount$ across varying scales of gap 
thresholds, we tested it on 509 benchmark instances of models trained on 
\textit{Adult} dataset, generated as per methodology outlined in ablation in~\ref{sec:experiments} and 
running the instances for varying gap thresholds through 0.02, 0.2, 2 and 20, while keeping precision of leaves fixed at 3 decimal places.
We observe that for lower gap threhold the performance of $\xcount$ slows 
down. This is expected as lowering the gap threshold effectively translates to counting assignments that lie 
closer to each other in terms of their ensemble output, and as we approach the scale of smallest difference possible between two 
ensemble outputs this would effectively count all possible assignments as satisfying the gap threshold.

\section{Details on Section \ref{sec:motivation}}
\subsection{L1 Regularization Parameter ($\alpha$)}
\label{app:l1_reg}

To understand the role of $\alpha$, we first consider the standard objective function for tree ensemble models.

\subsubsection{Objective Function without Regularization}
In a typical gradient boosting framework, the goal is to minimize a loss function $l$ that measures the difference between the predicted 
value $\hat{y}_i$ and the actual value $y_i$ for each instance $i$. Without regularization, the objective function at iteration $t$ aims 
solely to minimize this empirical loss~\cite{xgboost}:

\begin{equation}
    \mathcal{L}^{(t)}_{\text{unreg}} = \sum_{i=1}^{n} l(y_i, \hat{y}_i^{(t)})
\end{equation}

where $\hat{y}_i^{(t)}$ is the prediction at step $t$. Minimizing only this term often leads to complex models that fit the noise in the training data (overfitting).

\subsubsection{Introducing Regularization}
To improve generalization, a regularization term $\Omega$ is added to the objective function. The regularized objective becomes~\cite{xgboost}:

\begin{equation}
    \mathcal{L}^{(t)} = \sum_{i=1}^{n} l(y_i, \hat{y}_i^{(t)}) + \sum_{k=1}^{K} \Omega(f_k)
\end{equation}

Here, $\Omega(f_k)$ penalizes the complexity of the $k$-th tree, discouraging overly complex models.

The specific complexity penalty $\Omega(f)$ used in our study includes both $L_1$ and $L_2$ components and is defined as:

\begin{equation}
    \Omega(f) = \gamma T + \alpha \sum_{j=1}^{T} |w_j| + \frac{1}{2}\lambda \sum_{j=1}^{T} w_j^2
\end{equation}
\noindent This regularization term follows the standard XGBoost objective~\cite{xgboost,xgboost_docs}

where:

\begin{itemize}
    \item $T$ is the number of leaves in the tree.
    \item $w_j$ is the weight (score) assigned to the $j$-th leaf.
    \item \textbf{$\gamma$ (Minimum Split Loss):}  This parameter penalizes the number of leaves in a tree by adding a fixed cost for each leaf. A split is only performed if it results in a loss reduction greater than $\gamma$.
    \item \textbf{$\alpha$ (L1 Regularization Parameter):} This parameter controls the strength of the $L_1$ penalty applied to the leaf weights.
    \item \textbf{$\lambda$ (L2 Regularization Parameter):} This parameter controls the $L_2$ penalty on the leaf weights. Unlike $L_1$ regularization, $L_2$ regularization does not enforce sparsity but instead shrinks large weights.
\end{itemize}

The $L_1$ term grows with the number of leaves and the magnitude of their weights.
Since each additional split typically increases the number of leaves $T$, a larger
$\alpha$ discourages overly complex trees and encourages fewer splits unless they
significantly reduce the loss.

Although the penalty does not explicitly depend on depth, deeper trees often
contain more leaves, which indirectly increases the total regularization cost.

\subsection{Evaluation Metrics for the Models}
\label{app:model_metrics}
To assess the predictive performance of the tree ensemble models, we utilize standard binary 
classification metrics derived from the confusion matrix. Let $TP$ (True Positives) be the number 
of positive instances correctly classified, $TN$ (True Negatives) be the number of negative instances 
correctly classified, $FP$ (False Positives) be the number of negative instances incorrectly classified
as positive, and $FN$ (False Negatives) be the number of positive instances incorrectly classified as 
negative.
\begin{table}[t]
\centering
\caption{Accuracy, Precision, Recall, F1, AUC for all 24 models across varying Depths ($d$), Number of Trees ($n$), and L1 Regularization parameters ($\alpha$).}
\label{tab:all_metrics}
\resizebox{\textwidth}{!}{%
\begin{tabular}{ll|ccccc|ccccc|ccccc|ccccc}
\toprule
 &  & \multicolumn{5}{c|}{\textbf{$\alpha = 0$}} & \multicolumn{5}{c|}{\textbf{$\alpha = 1$}} & \multicolumn{5}{c|}{\textbf{$\alpha = 5$}} & \multicolumn{5}{c}{\textbf{$\alpha = 10$}} \\
\textbf{Depth} & \textbf{Trees} & \textbf{Acc} & \textbf{Prec} & \textbf{Rec} & \textbf{F1} & \textbf{AUC} & \textbf{Acc} & \textbf{Prec} & \textbf{Rec} & \textbf{F1} & \textbf{AUC} & \textbf{Acc} & \textbf{Prec} & \textbf{Rec} & \textbf{F1} & \textbf{AUC} & \textbf{Acc} & \textbf{Prec} & \textbf{Rec} & \textbf{F1} & \textbf{AUC} \\ \midrule
\multirow{3}{*}{\textbf{d=3}} & \textbf{n=20} & 0.859 & 0.671 & 0.229 & 0.341 & 0.857 & 0.859 & 0.671 & 0.226 & 0.338 & 0.857 & 0.859 & 0.672 & 0.225 & 0.337 & 0.857 & 0.858 & 0.675 & 0.218 & 0.330 & 0.857 \\
 & \textbf{n=30} & 0.858 & 0.680 & 0.213 & 0.325 & 0.863 & 0.857 & 0.665 & 0.210 & 0.319 & 0.864 & 0.858 & 0.669 & 0.217 & 0.328 & 0.864 & 0.859 & 0.681 & 0.222 & 0.335 & 0.863 \\
 & \textbf{n=40} & 0.859 & 0.684 & 0.218 & 0.330 & 0.867 & 0.858 & 0.672 & 0.216 & 0.327 & 0.867 & 0.857 & 0.668 & 0.209 & 0.319 & 0.867 & 0.858 & 0.668 & 0.218 & 0.329 & 0.867 \\ \midrule
\multirow{3}{*}{\textbf{d=4}} & \textbf{n=20} & 0.859 & 0.676 & 0.227 & 0.339 & 0.864 & 0.859 & 0.676 & 0.228 & 0.341 & 0.865 & 0.858 & 0.654 & 0.229 & 0.339 & 0.865 & 0.859 & 0.666 & 0.231 & 0.343 & 0.864 \\
 & \textbf{n=30} & 0.859 & 0.677 & 0.229 & 0.342 & 0.869 & 0.859 & 0.666 & 0.239 & 0.352 & 0.869 & 0.859 & 0.652 & 0.237 & 0.348 & 0.869 & 0.859 & 0.666 & 0.234 & 0.346 & 0.868 \\
 & \textbf{n=40} & 0.859 & 0.656 & 0.254 & 0.366 & 0.872 & 0.859 & 0.656 & 0.252 & 0.364 & 0.872 & 0.859 & 0.652 & 0.250 & 0.361 & 0.872 & 0.859 & 0.654 & 0.245 & 0.356 & 0.871 \\ \bottomrule
\end{tabular}%
}
\end{table}
The metrics presented in Table~\ref{tab:all_metrics} are defined as follows:

\begin{itemize}
    \item \textbf{Accuracy (Acc):} The ratio of correctly predicted observations to the total observations. It measures the overall effectiveness of the classifier.
    \begin{equation}
        \text{Accuracy} = \frac{TP + TN}{TP + TN + FP + FN}
    \end{equation}

    \item \textbf{Precision (Prec):} Also known as the positive predictive value, this metric measures the accuracy of the positive predictions. It indicates how many of the instances predicted as positive are actually positive.
    \begin{equation}
        \text{Precision} = \frac{TP}{TP + FP}
    \end{equation}

    \item \textbf{Recall (Rec):} Also known as sensitivity or the true positive rate (TPR), this measures the ability of the classifier to find all the positive samples. It indicates what proportion of actual positive instances were identified correctly.
    \begin{equation}
        \text{Recall} = \frac{TP}{TP + FN}
    \end{equation}

    \item \textbf{F1-Score (F1):} The harmonic mean of Precision and Recall. It is particularly useful when the class distribution is imbalanced, as it seeks a balance between precision and recall.
    \begin{equation}
        \text{F1} = 2 \cdot \frac{\text{Precision} \cdot \text{Recall}}{\text{Precision} + \text{Recall}}
    \end{equation}

    \item \textbf{AUC (Area Under the ROC Curve):} The Receiver Operating Characteristic (ROC) curve plots the True Positive Rate (Recall) against the False Positive Rate ($FPR = \frac{FP}{TN + FP}$) at various threshold settings. The AUC represents the degree of separability between classes. Probabilistically, the AUC corresponds to the probability that the classifier ranks a randomly chosen positive instance higher than a randomly chosen negative instance. An AUC of 0.5 suggests no discrimination (random guessing), while an AUC of 1.0 implies perfect discrimination.
\end{itemize}
\label{app:motivation}

\section{Example Run: Step-by-Step Execution of $\xcount$}
\label{app:example_run}
This section provides a detailed walkthrough of the algorithm using a example. 
We demonstrate how $\xcount$ processes a small decision tree ensemble to count sensitive regions,
referencing the procedures outlined in Section~\ref{sec:algorithm} and the actual data structures used in the implementation.

\subsection{Input Configuration}

We consider an XGBoost regression model with 3 trees of depth 2 trained on the Diabetes dataset. The configuration
 for the sensitivity query is a sensitive feature $f_2$ (feature index 2), a bit distance $d = 1$, and a gap threshold $G = 0.1$.

\subsection{Original Ensemble}
Figure~\ref{fig:original_ensemble} shows the three decision trees ($\mathcal{T}$). Note that the trees split on the sensitive feature we chose (i.e. $f_2$), 
as well as the non-sensitive features ($f_3$ and $f_8$).

\begin{figure}[!htbp]
\centering
\begin{subfigure}{0.33\textwidth}
    \includegraphics[width=\textwidth]{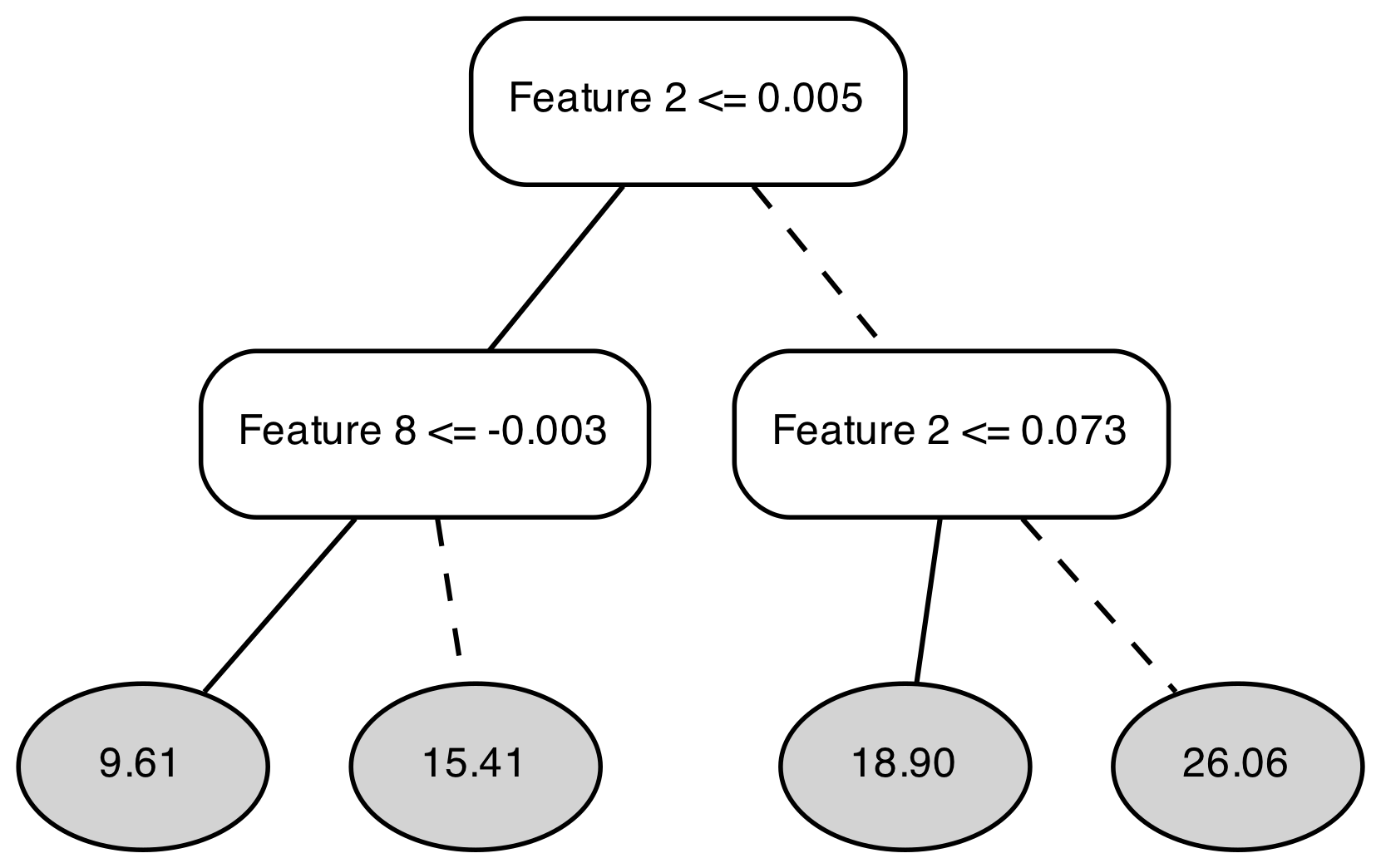}
    \caption{Tree 0}
\end{subfigure}
\hfill
\begin{subfigure}{0.33\textwidth}
    \includegraphics[width=\textwidth]{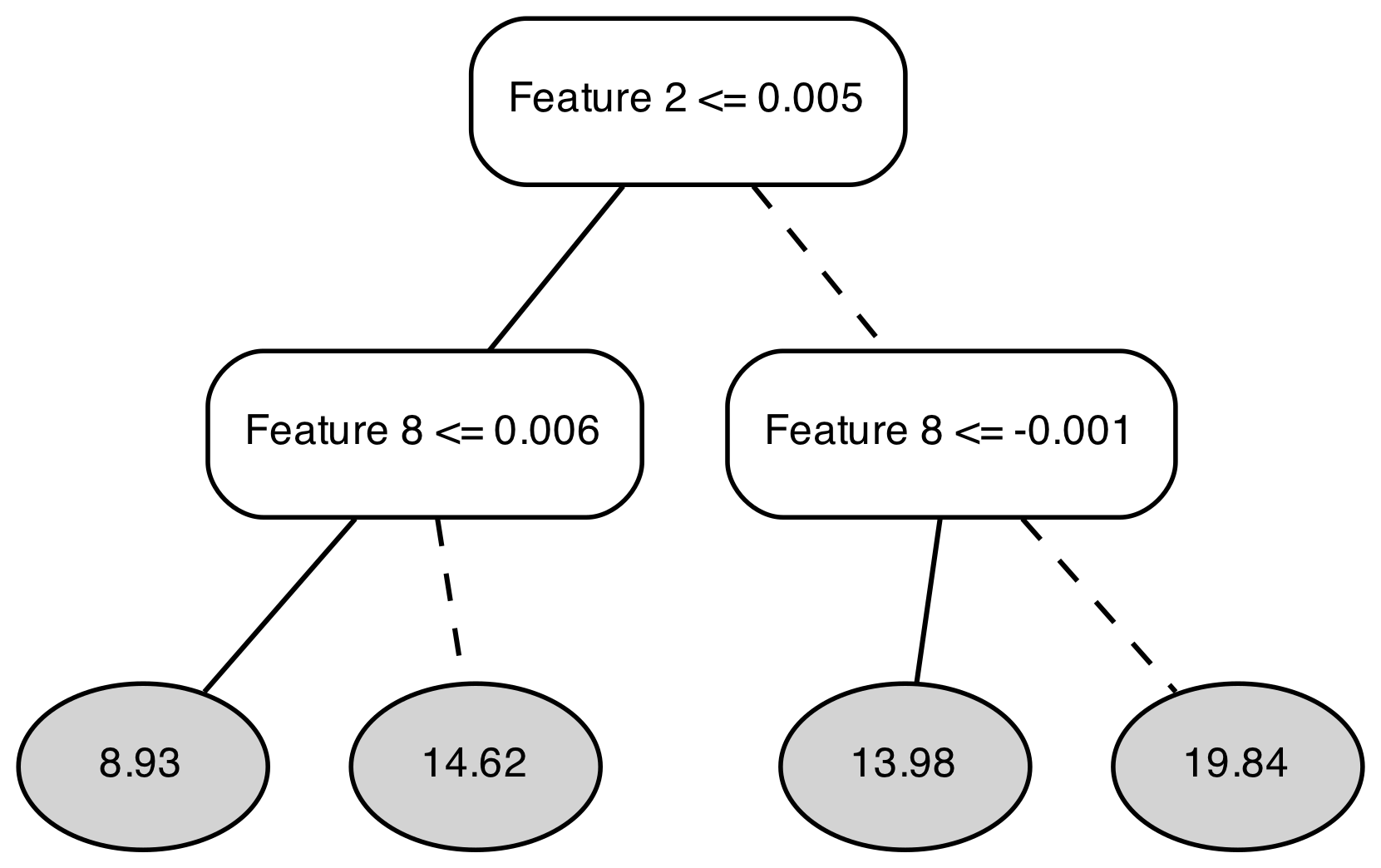}
    \caption{Tree 1}
\end{subfigure}
\hfill
\begin{subfigure}{0.33\textwidth}
    \includegraphics[width=\textwidth]{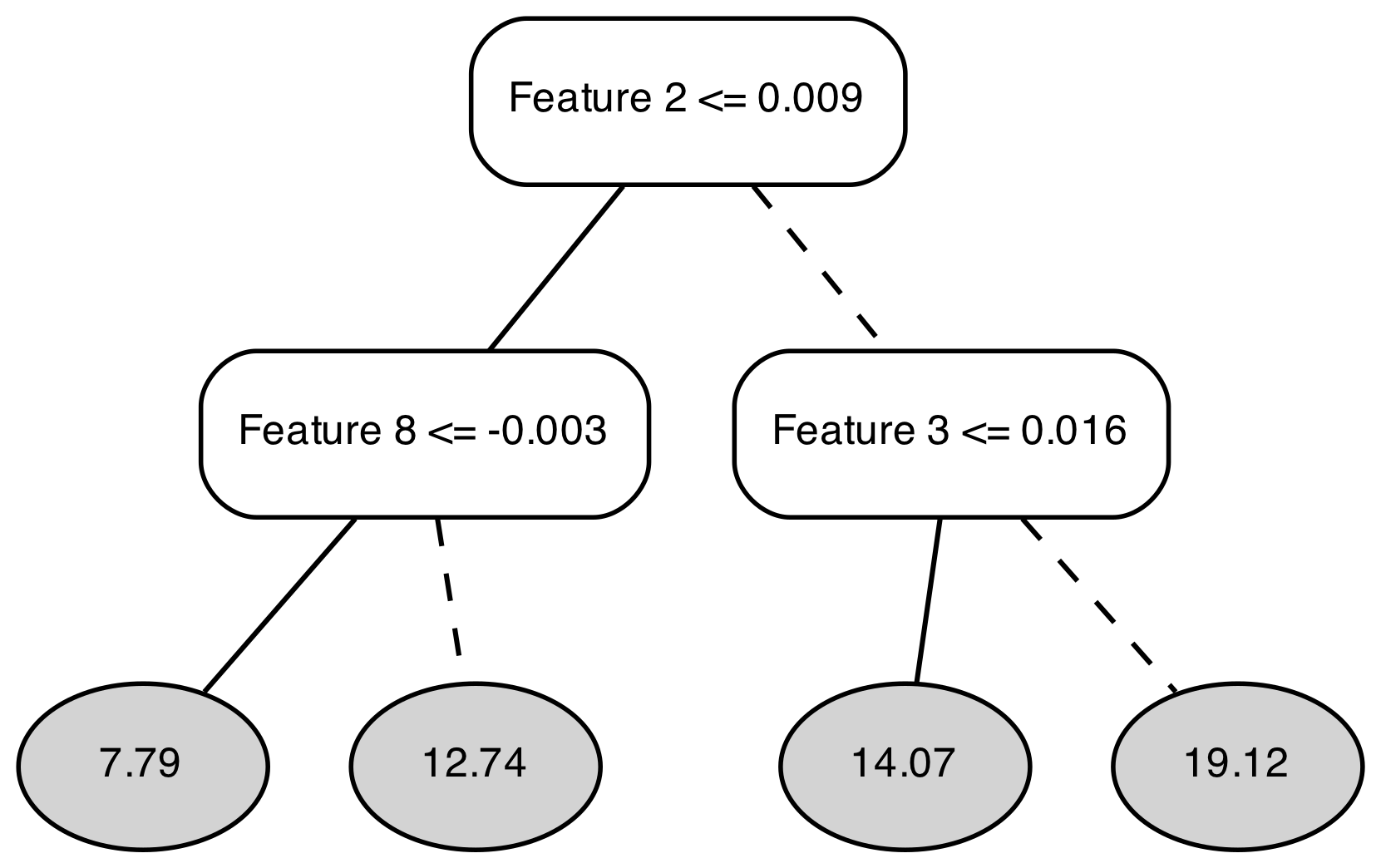}
    \caption{Tree 2}
\end{subfigure}
\caption{The three decision trees in the ensemble. Internal nodes show split conditions of the form ``$f_i \leq \theta$''.}
\label{fig:original_ensemble}
\end{figure}

\subsection{Extracting Splits and Generating Bitmasks}

The execution begins by traversing the ensemble and identify all split conditions. 
The extracted thresholds on the feature $f_2$ are $\theta_1 = 0.005$, $\theta_2 = 0.009$, and $\theta_3 = 0.073$.

Following Algorithm~\ref{alg:bitmaskgen} ($\bitmaskgen$), we enforce the monotonicity constraint ($b_{\theta_1} \Rightarrow b_{\theta_2} \Rightarrow b_{\theta_3}$) to generate the valid discretized regions. In the implementation, this set of valid masks ($MaskSet$) is stored as a vector of bitmasks:

\begin{center}
\begin{tabular}{|c|c|l|}
\hline
\textbf{Mask ID} & \textbf{BitMask} $(b_{\theta_1}, b_{\theta_2}, b_{\theta_3})$ & \textbf{Interpretation ($f_2$ range)} \\
\hline
$m_0$ & $(0, 0, 0)$ & $f_2 > 0.073$ \\
$m_1$ & $(0, 0, 1)$ & $0.009 < f_2 \leq 0.073$ \\
$m_2$ & $(0, 1, 1)$ & $0.005 < f_2 \leq 0.009$ \\
$m_3$ & $(1, 1, 1)$ & $f_2 \leq 0.005$ \\
\hline
\end{tabular}
\end{center}

\subsection{Subproblem Generation}

The algorithm iterates through the $MaskSet$ to generate subproblems.
The total number of subproblems depends on the number of valid masks ($M = |MaskSet|$) and the bit distance $d$.
For a single sensitive feature with monotonic constraints, the masks form a linear sequence where the Hamming distance between any two masks $m_i$ and $m_j$ corresponds to their index difference $|i-j|$.
A subproblem is generated for every ordered pair $(m_i, m_j)$ where $1 \le |i-j| \le d$.

For this example, with $M=4$ and $d=1$, we consider only immediate neighbors ($|i-j|=1$). The total count is calculated as:
\[
\text{Total Subproblems} = \sum_{k=1}^{d} 2 \cdot (M - k) = 2 \cdot (4 - 1) = 6
\]
These are collected into the \texttt{subpArray} structure, shown below as a vector of pairs:

\begin{center}
\begin{tabular}{|c|c|}
\hline
\textbf{Index} & \textbf{Element} (\texttt{pair<Mask, Mask>}) \\
\hline
0 & $\langle (1,1,1), (0,1,1) \rangle$ \\
1 & $\langle (0,1,1), (1,1,1) \rangle$ \\
2 & $\langle (0,1,1), (0,0,1) \rangle$ \\
3 & $\langle (0,0,1), (0,1,1) \rangle$ \\
4 & $\langle (0,0,1), (0,0,0) \rangle$ \\
5 & $\langle (0,0,0), (0,0,1) \rangle$ \\
\hline
\end{tabular}
\end{center}

We focus our trace on \textbf{Subproblem 1} (Index 1), which contains the pair $\langle (0,1,1), (1,1,1) \rangle$, and for the rest of the subproblems, we summarize the final results.

It is crucial to note that within the context of a specific subproblem, the ADDs generated are \textbf{no longer functions of the sensitive feature $f_2$}. 
The subproblem explicitly fixes the range of $f_2$, causing all $f_2$ nodes to resolve during pruning.

\subsection{Processing Subproblem 1}
In Algorithm \ref{alg:processsubp}, we present the $\mathsf{ProcessSubproblem}$ procedure.
It initializes the $\DiffSum$ ADD to zero and iterates through the trees, as can be seen in line \ref{line:diffsum_init} of 
Algorithm \ref{alg:processsubp}.

\subsubsection{Tree Pruning and Encoding} 
In Algorithm \ref{alg:processsubp}, the $\mathsf{PruneTree}$ function ( Algorithm \ref{alg:prunetree}), is called to prune each tree based on 
the masks in the subproblem. For each tree $T \in \mathcal{T}$, the algorithm prunes it twice: once for Mask $m_2$ and once for Mask $m_3$.
For Mask $m_2$ ($0.005 < f_2 \leq 0.009$), the check $f_2 \leq 0.005$ evaluates to False.
Conversely, for Mask $m_3$ ($f_2 \leq 0.005$), the same check evaluates to True. 
Figure~\ref{fig:pruned_trees_before} displays the ADD structures for all three trees immediately after this pruning step. 
At this stage (before constraints are enforced), the ADDs may still contain paths that are logically inconsistent with the 
feature monotonicity constraints. To ensure the consistency, the algorithm applies the constraints to the ADDs. 
This results in the structures shown in Figure~\ref{fig:pruned_trees}.

\begin{figure}[!htbp]
\centering
\begin{subfigure}{0.2\textwidth}
    \includegraphics[width=\textwidth]{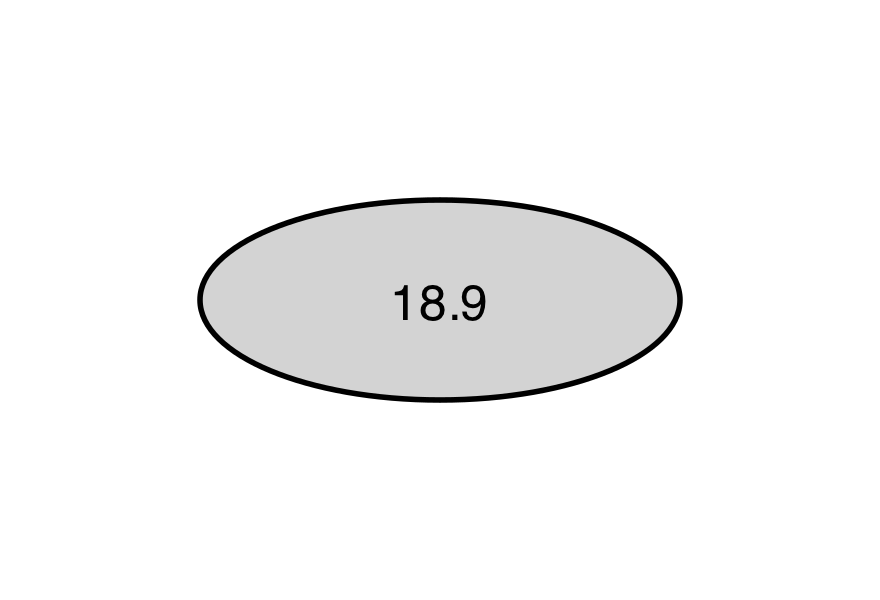}
    \caption{Tree 0, Mask $m_2$}
\end{subfigure}
\hfill
\begin{subfigure}{0.32\textwidth}
    \includegraphics[width=\textwidth]{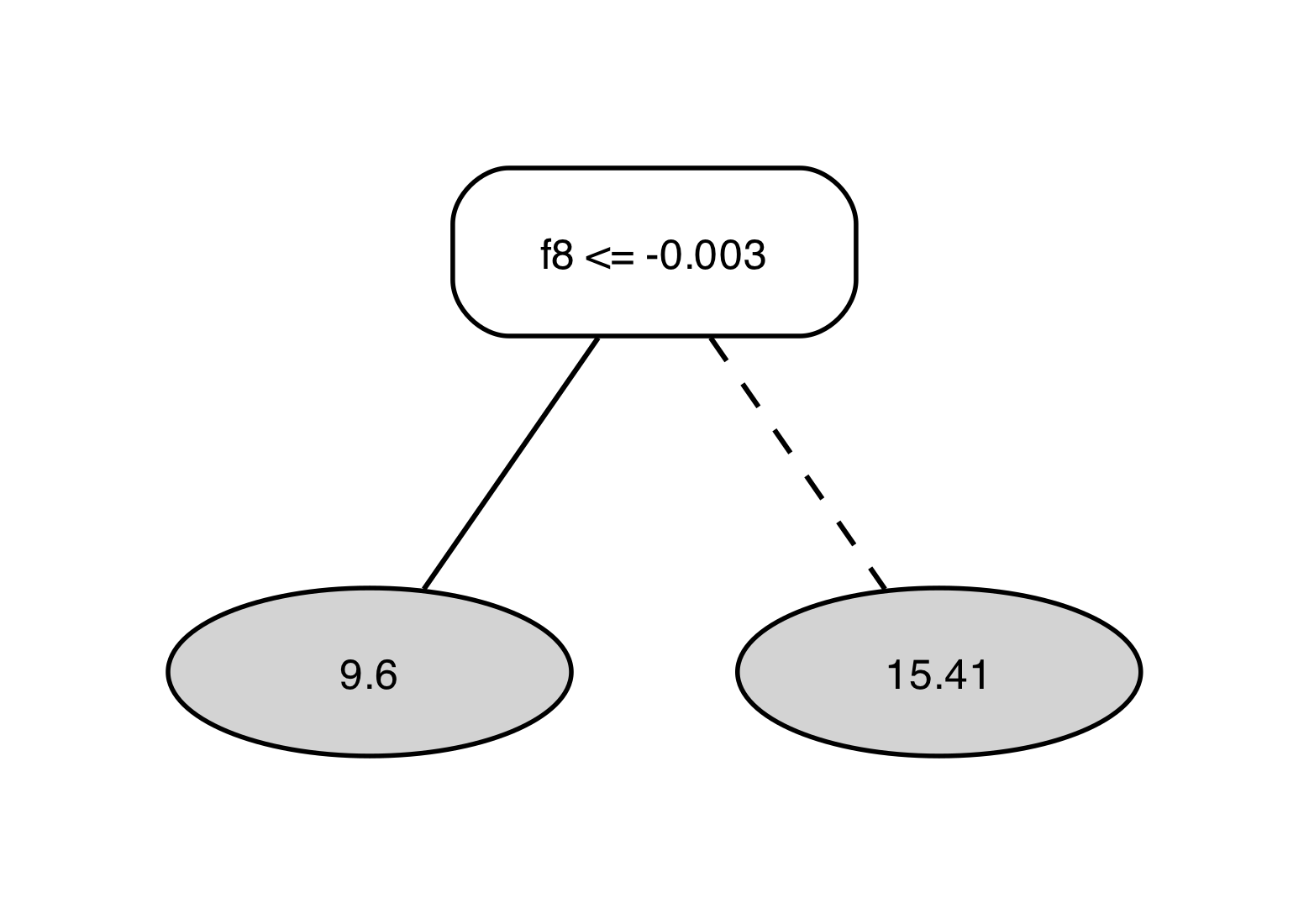}
    \caption{Tree 0, Mask $m_3$}
\end{subfigure}
\hfill
\begin{subfigure}{0.32\textwidth}
    \includegraphics[width=\textwidth]{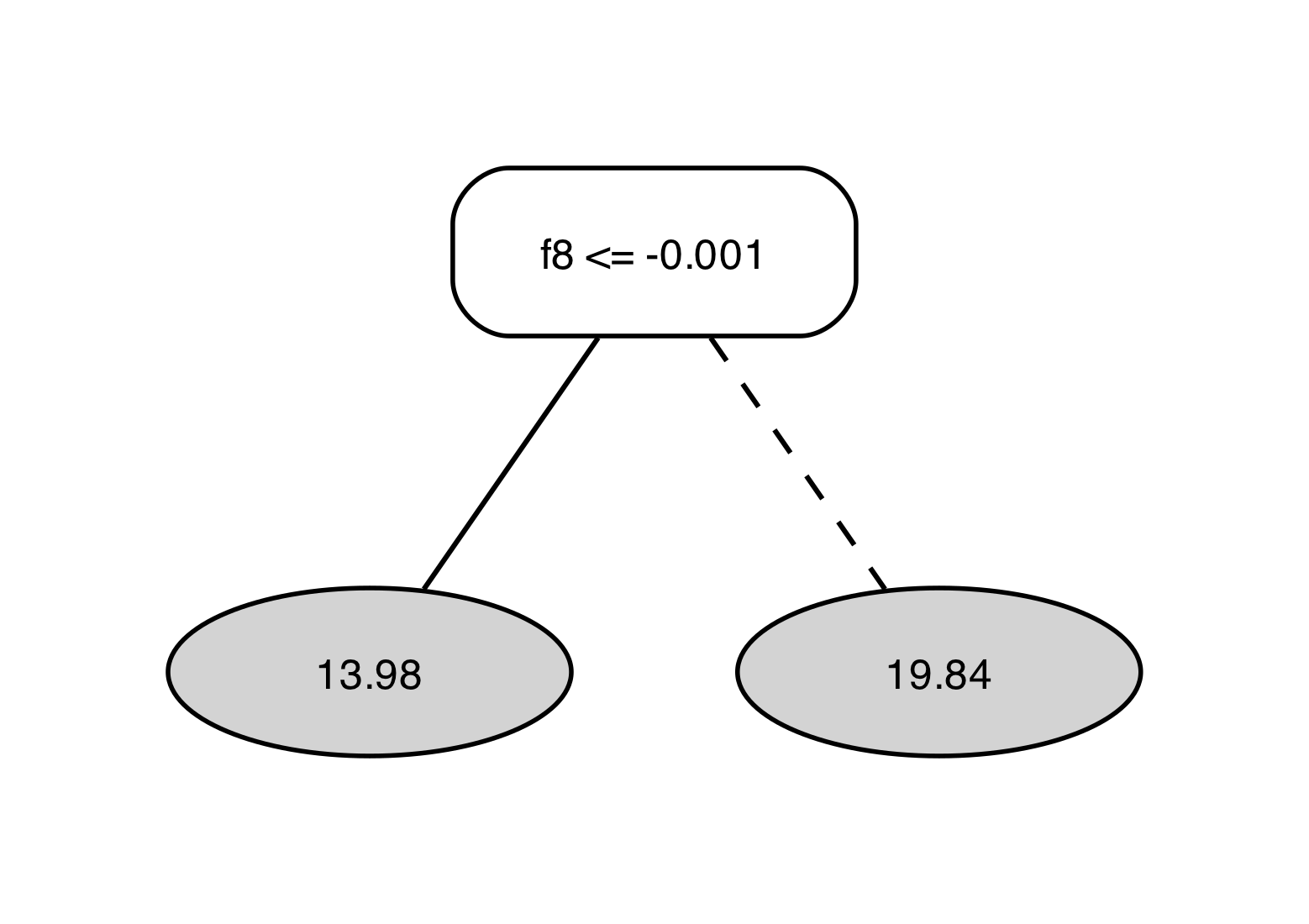}
    \caption{Tree 1, Mask $m_2$}
\end{subfigure}

\begin{subfigure}{0.32\textwidth}
    \includegraphics[width=\textwidth]{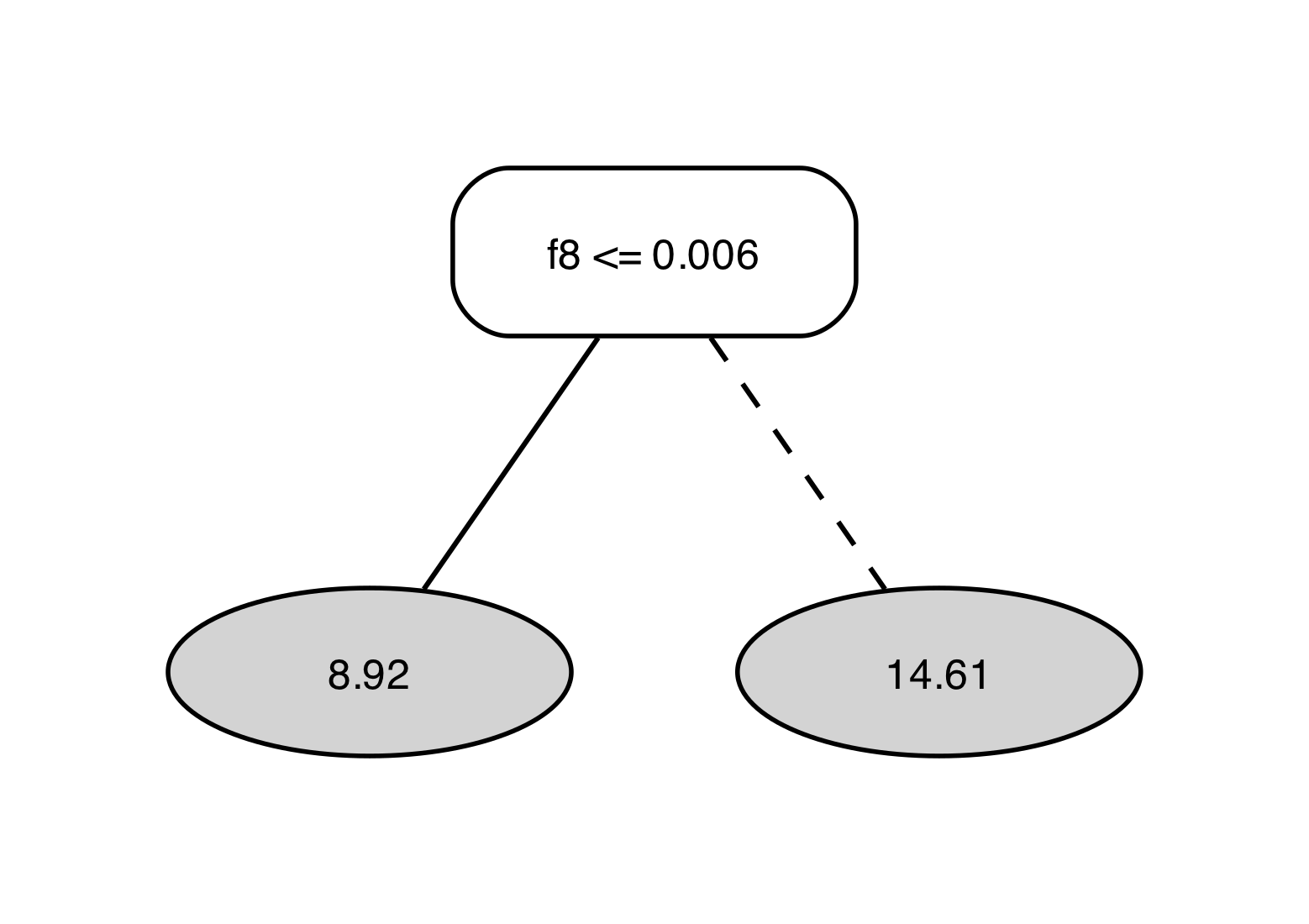}
    \caption{Tree 1, Mask $m_3$}
\end{subfigure}
\hfill
\begin{subfigure}{0.32\textwidth}
    \includegraphics[width=\textwidth]{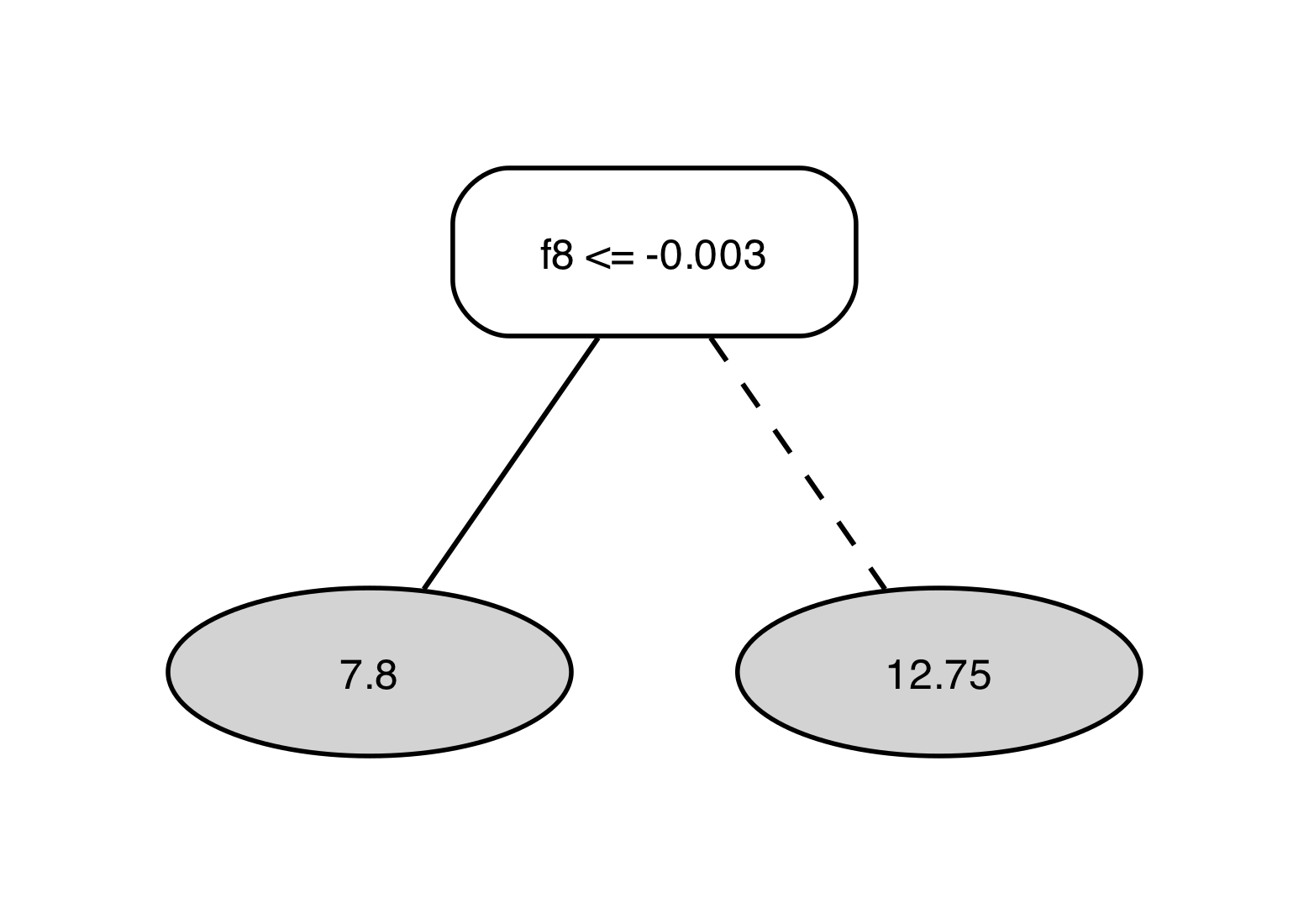}
    \caption{Tree 2, Mask $m_2$}
\end{subfigure}
\hfill
\begin{subfigure}{0.32\textwidth}
    \includegraphics[width=\textwidth]{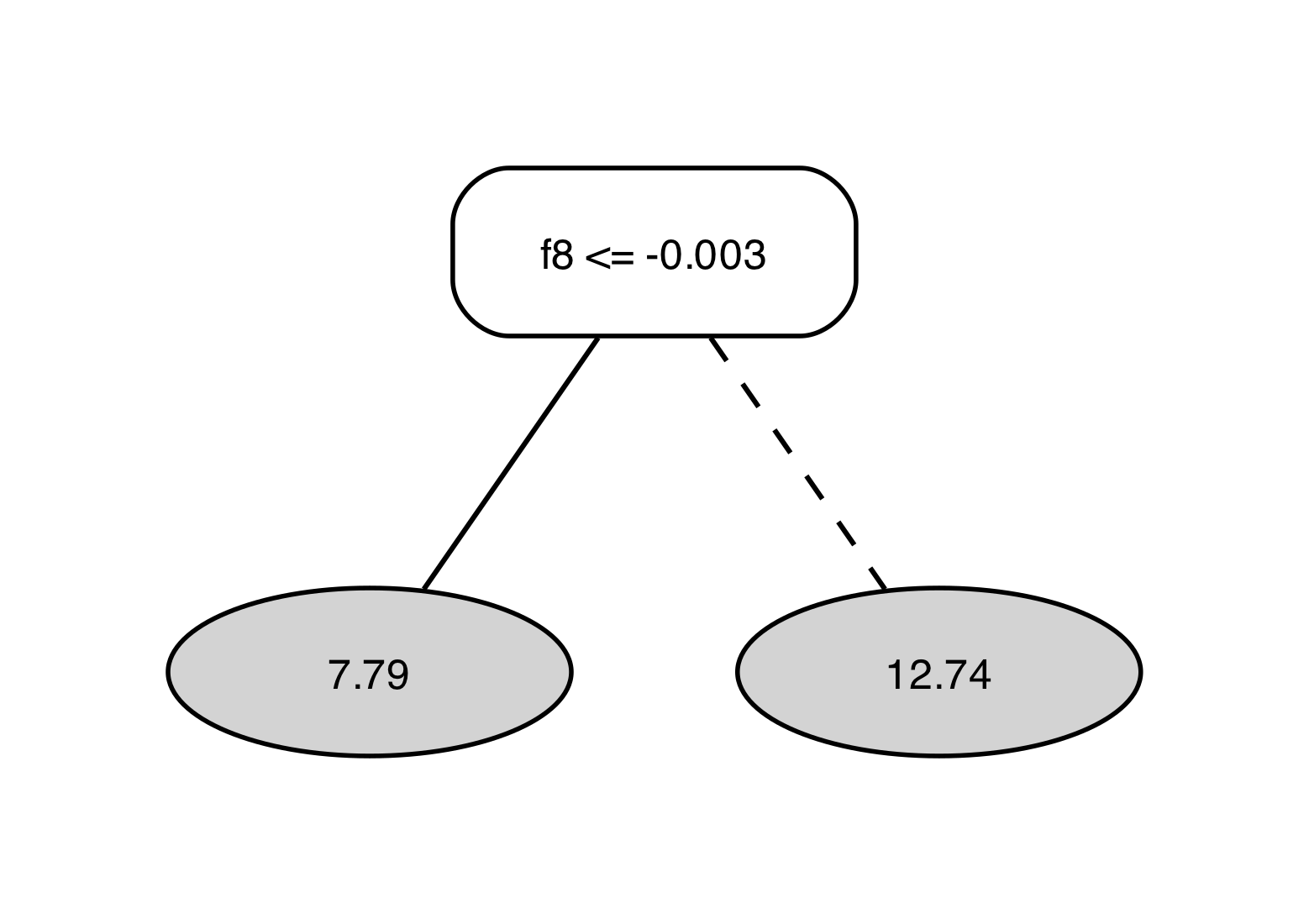}
    \caption{Tree 2, Mask $m_3$}
\end{subfigure}
\caption{ADDs for Trees 0, 1, and 2 immediately after pruning with Masks $m_2$ and $m_3$, but before standardizing constraints are applied.}
\label{fig:pruned_trees_before}
\end{figure}

\begin{figure}[!htbp]
\centering
\begin{subfigure}{0.35\textwidth}
    \includegraphics[width=\textwidth]{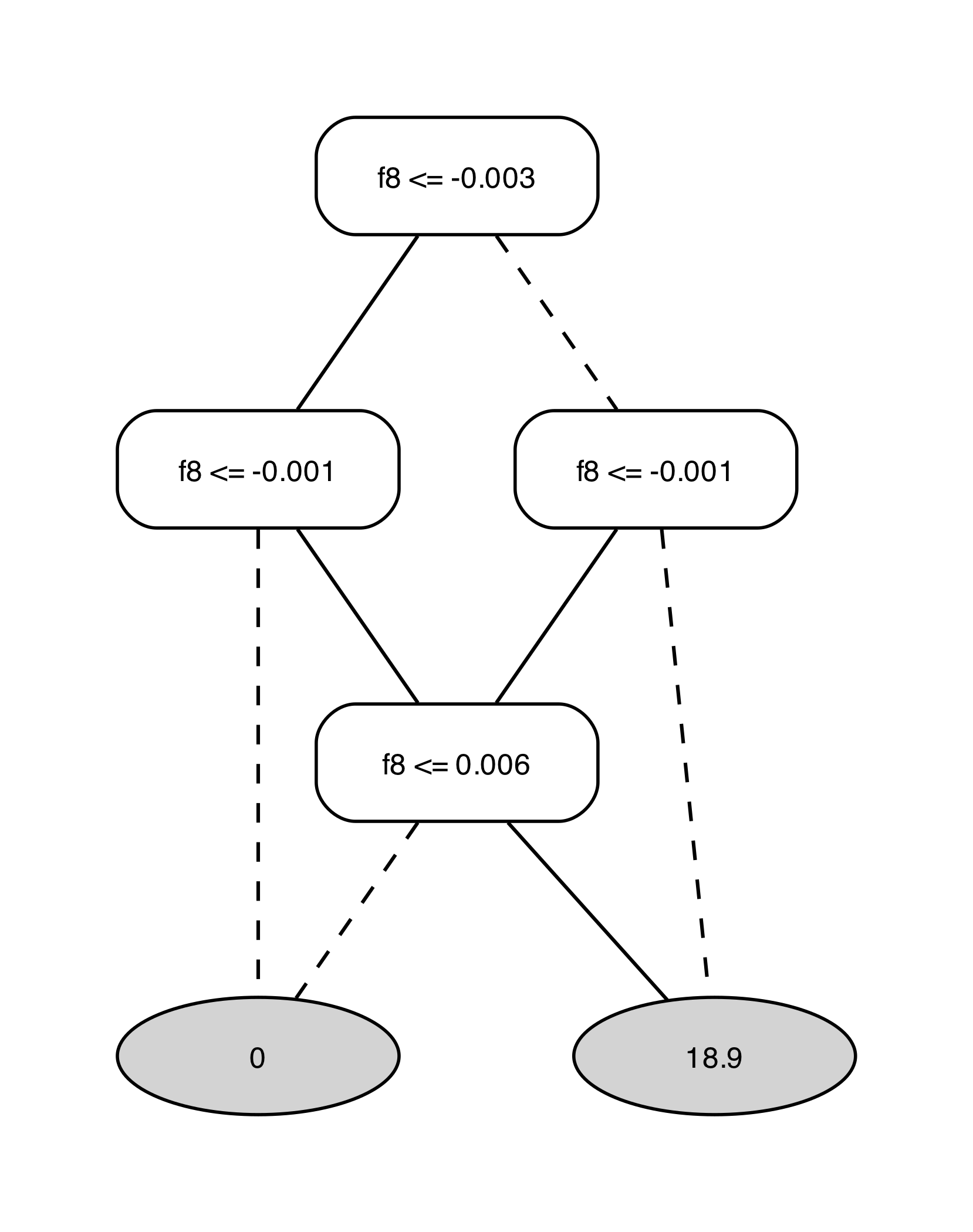}
    \caption{Tree 0, Mask $m_2$}
\end{subfigure}
\hfill
\begin{subfigure}{0.49\textwidth}
    \includegraphics[width=\textwidth]{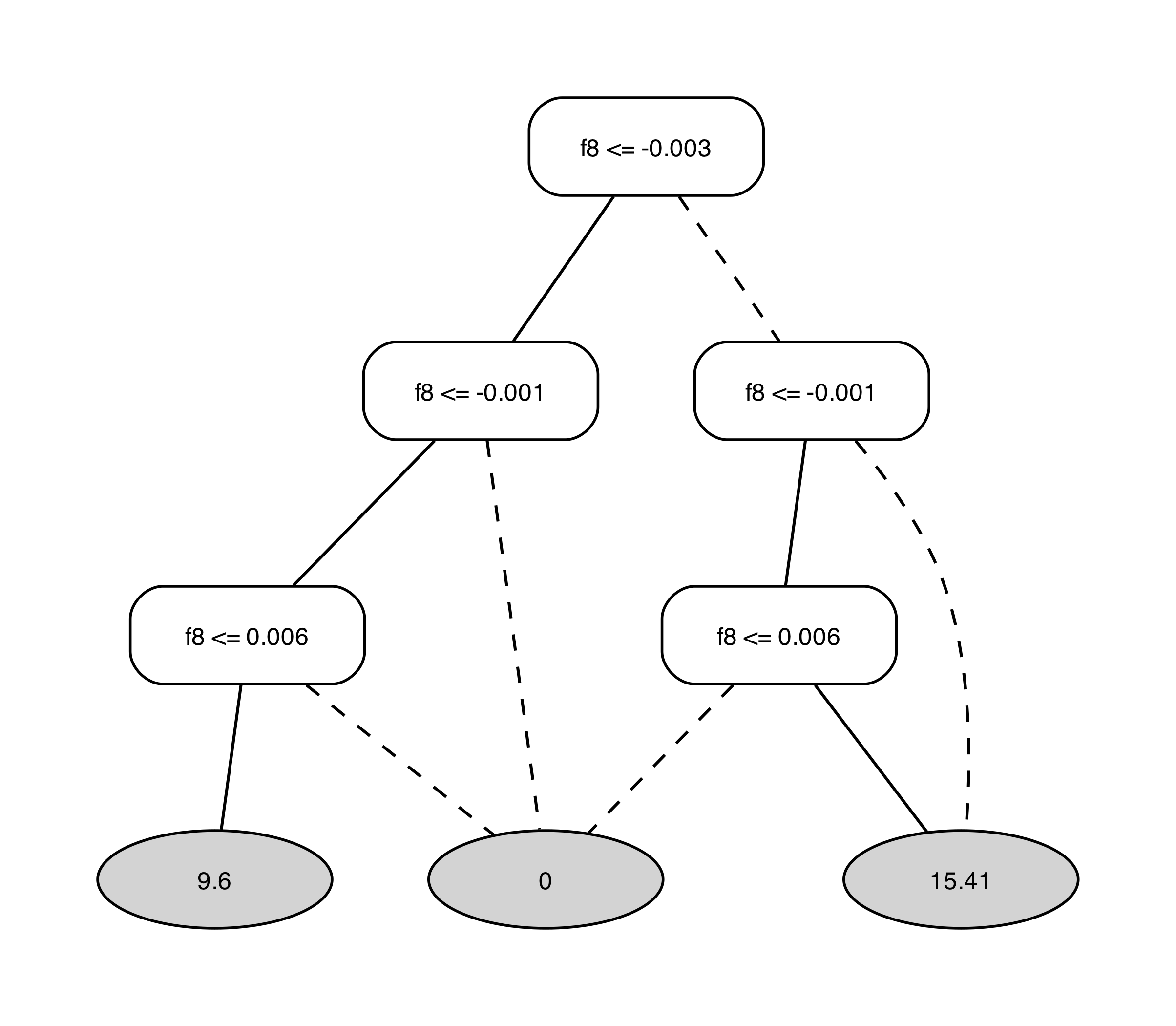}
    \caption{Tree 0, Mask $m_3$}
\end{subfigure}

\begin{subfigure}{0.49\textwidth}
    \includegraphics[width=\textwidth]{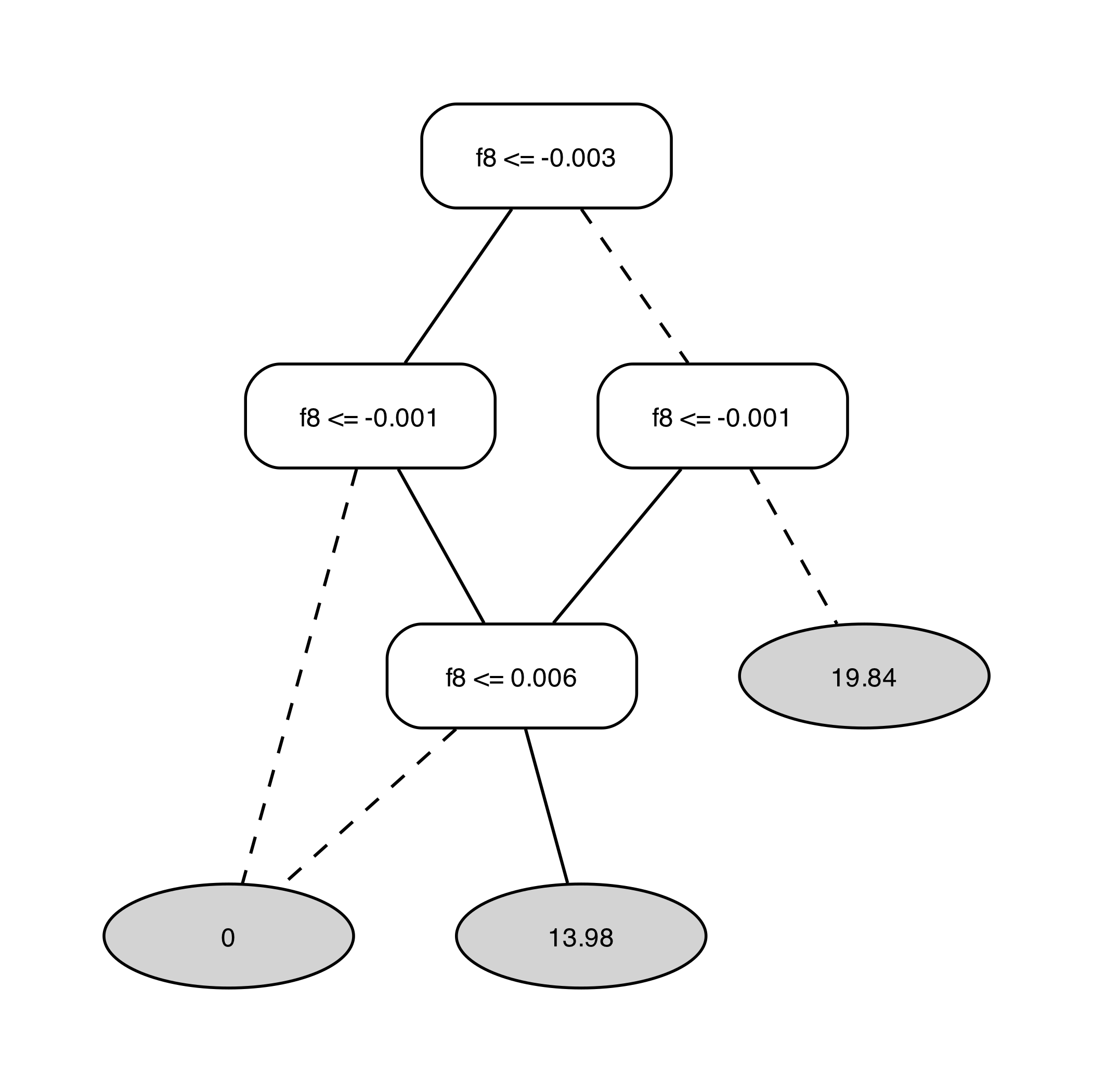}
    \caption{Tree 1, Mask $m_2$}
\end{subfigure}
\hfill
\begin{subfigure}{0.49\textwidth}
    \includegraphics[width=\textwidth]{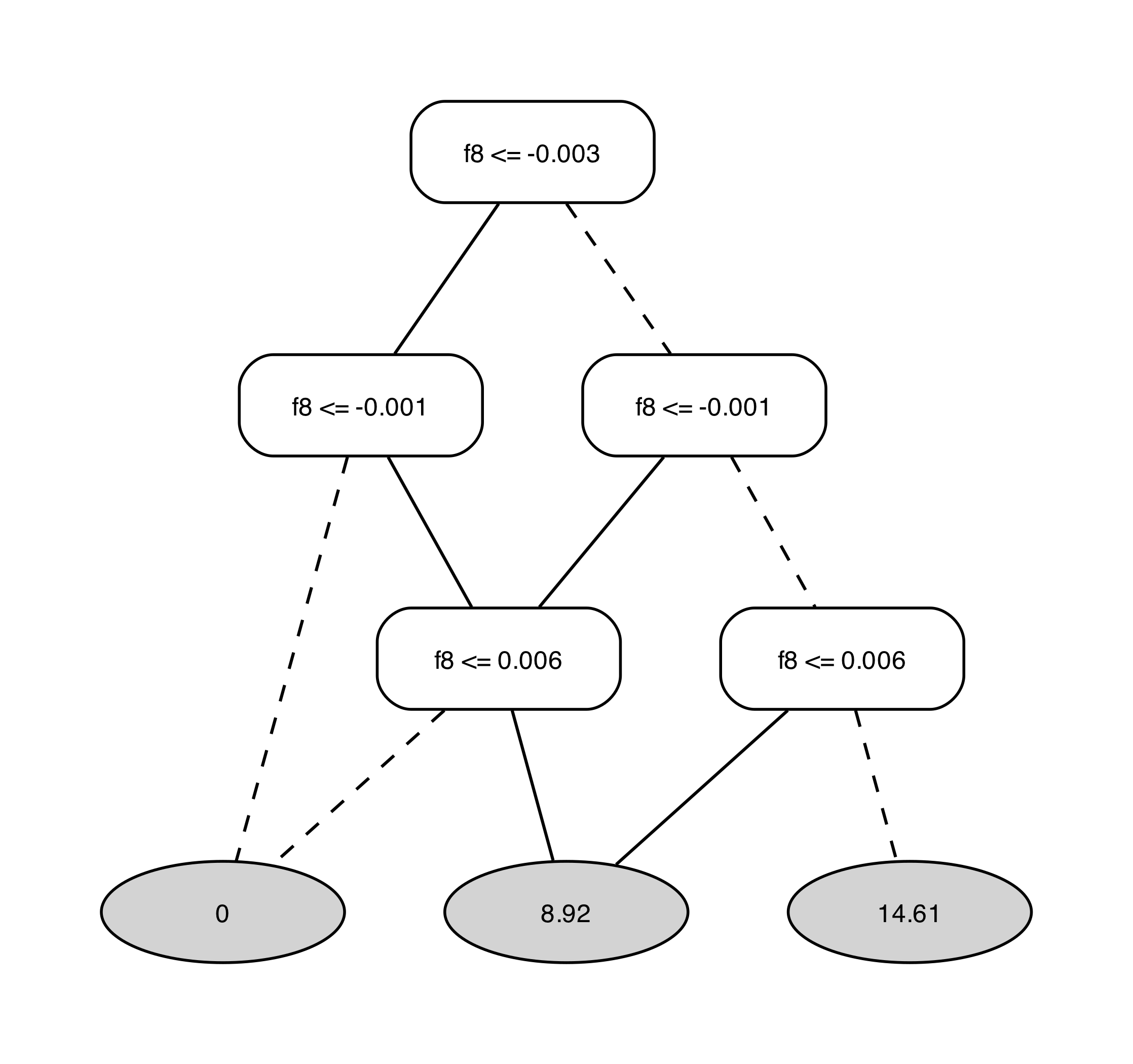}
    \caption{Tree 1, Mask $m_3$}
\end{subfigure}

\begin{subfigure}{0.49\textwidth}
    \includegraphics[width=\textwidth]{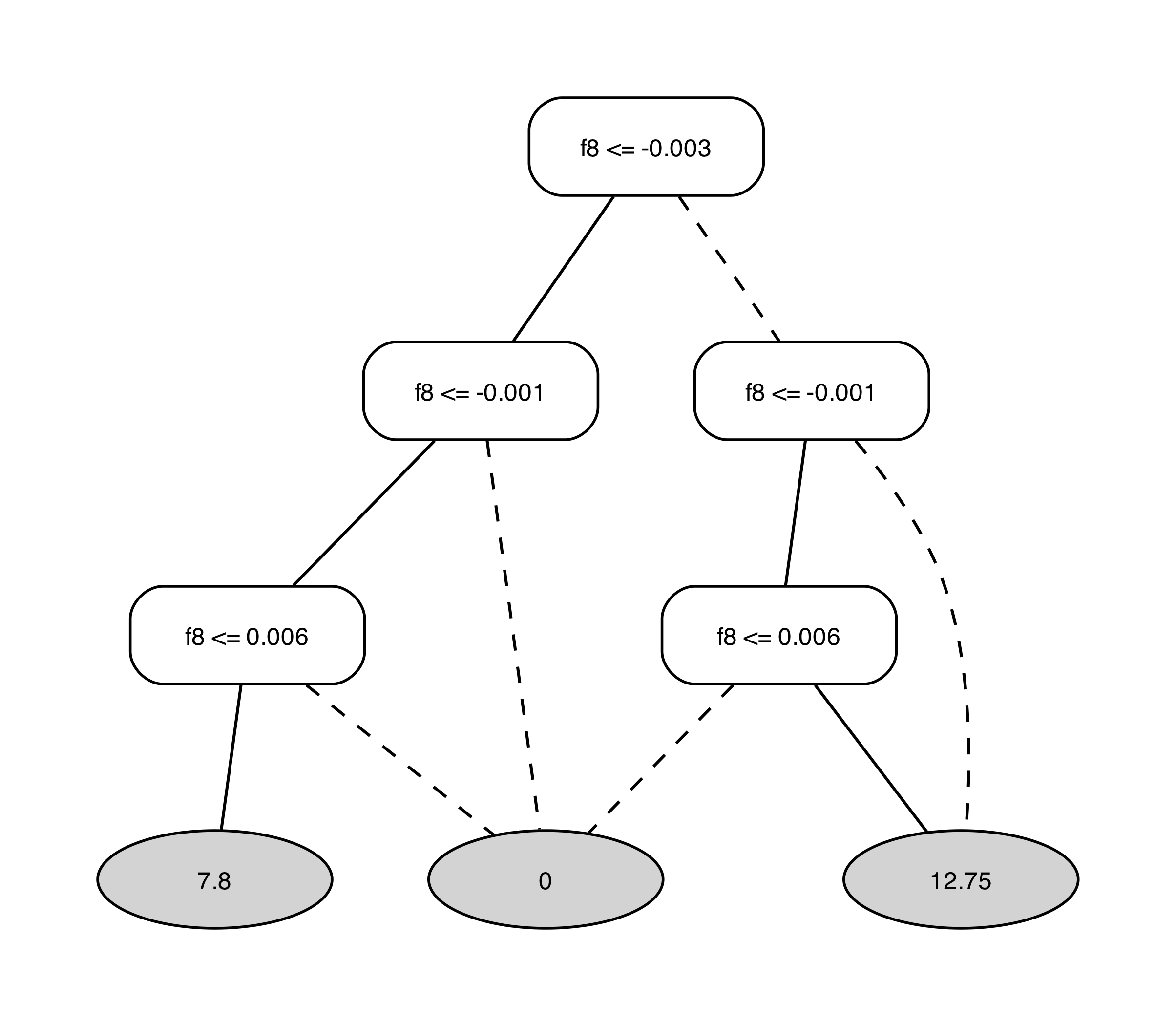}
    \caption{Tree 2, Mask $m_2$}
\end{subfigure}
\hfill
\begin{subfigure}{0.49\textwidth}
    \includegraphics[width=\textwidth]{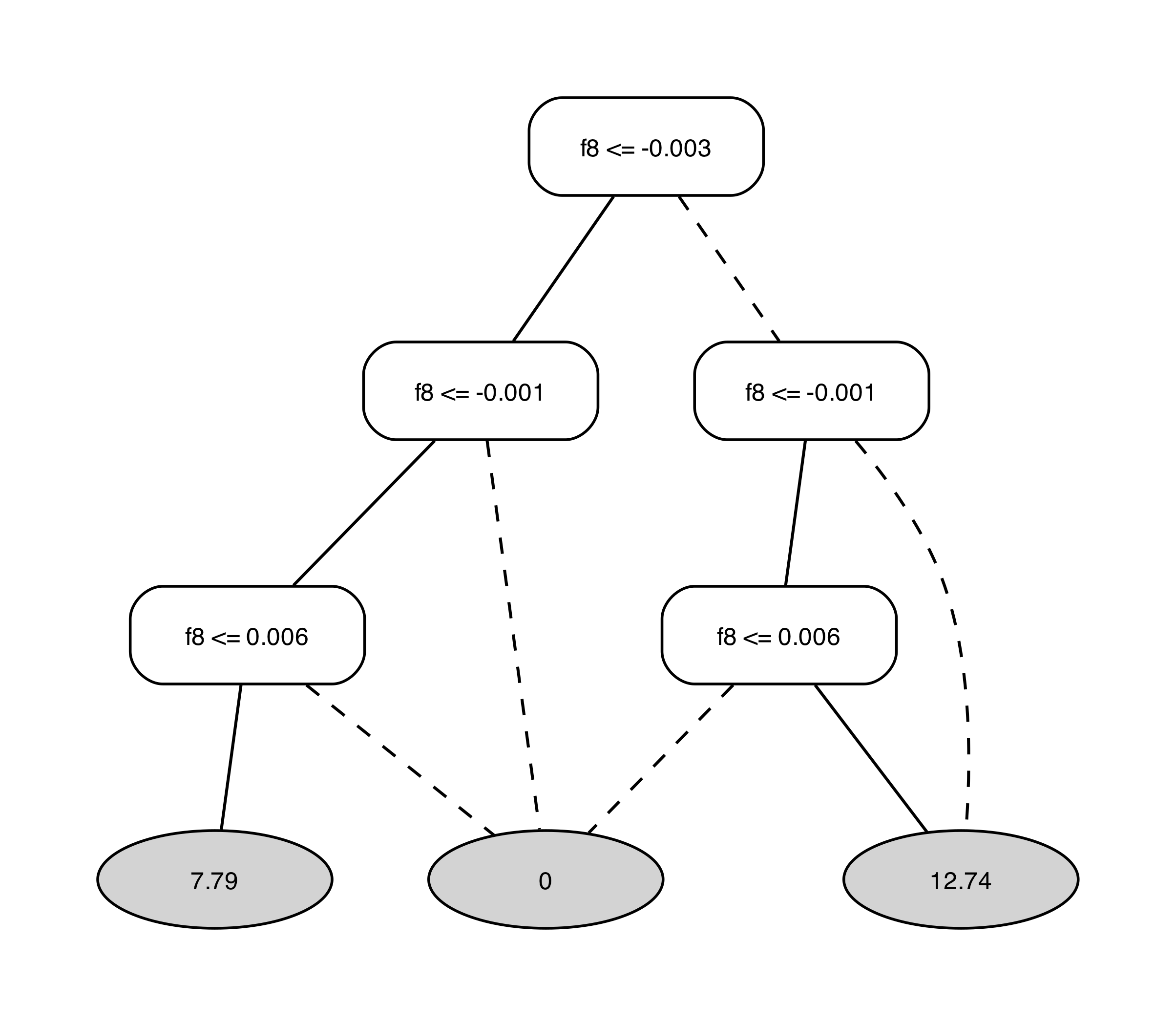}
    \caption{Tree 2, Mask $m_3$}
\end{subfigure}
\caption{ADDs for Trees 0, 1, and 2 after pruning and applying constraints.}
\label{fig:pruned_trees}
\end{figure}

\subsubsection{Subtraction and Accumulation} 
In Line~\ref{line:diffadd} of Algorithm~\ref{alg:processsubp}, the algorithm computes $\DiffADD$, which is the difference between the pair of ADDs obtained by pruning the same tree with the two differing masks.
In Line~\ref{line:diffsum_update}, this difference is accumulated into the $\DiffSum$ ADD. Figure~\ref{fig:diffsum_evolution} shows the growth of the $\DiffSum$ ADD as we aggregate the differences from Tree 0,
 Tree 1, and Tree 2.
\begin{figure}[!htbp]
\centering
\begin{subfigure}{0.35\textwidth}
    \centering
    \includegraphics[width=\textwidth]{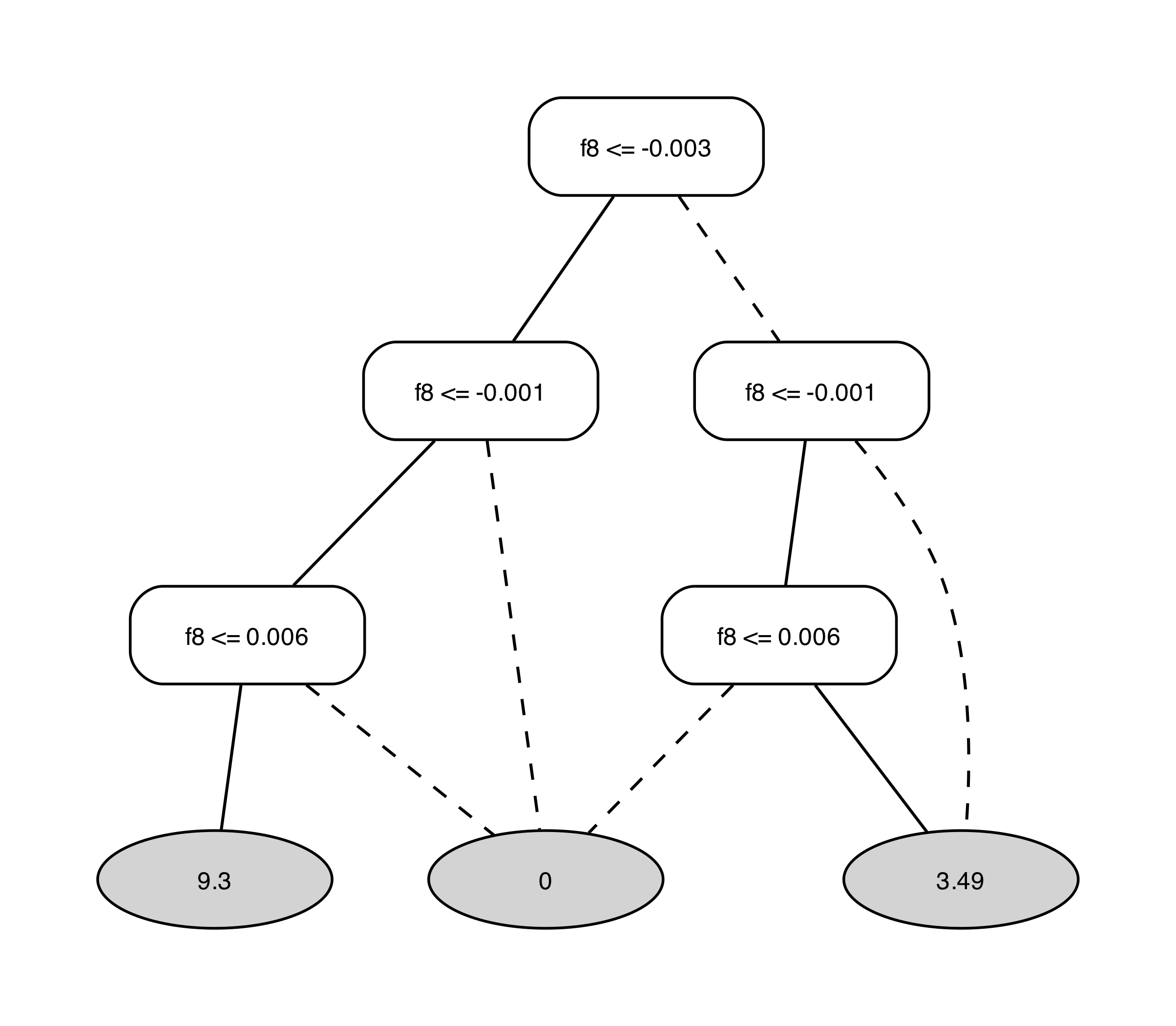}
    \caption{After Tree 0}
\end{subfigure}
\hfill
\begin{subfigure}{0.6\textwidth}
    \centering
    \includegraphics[width=\textwidth]{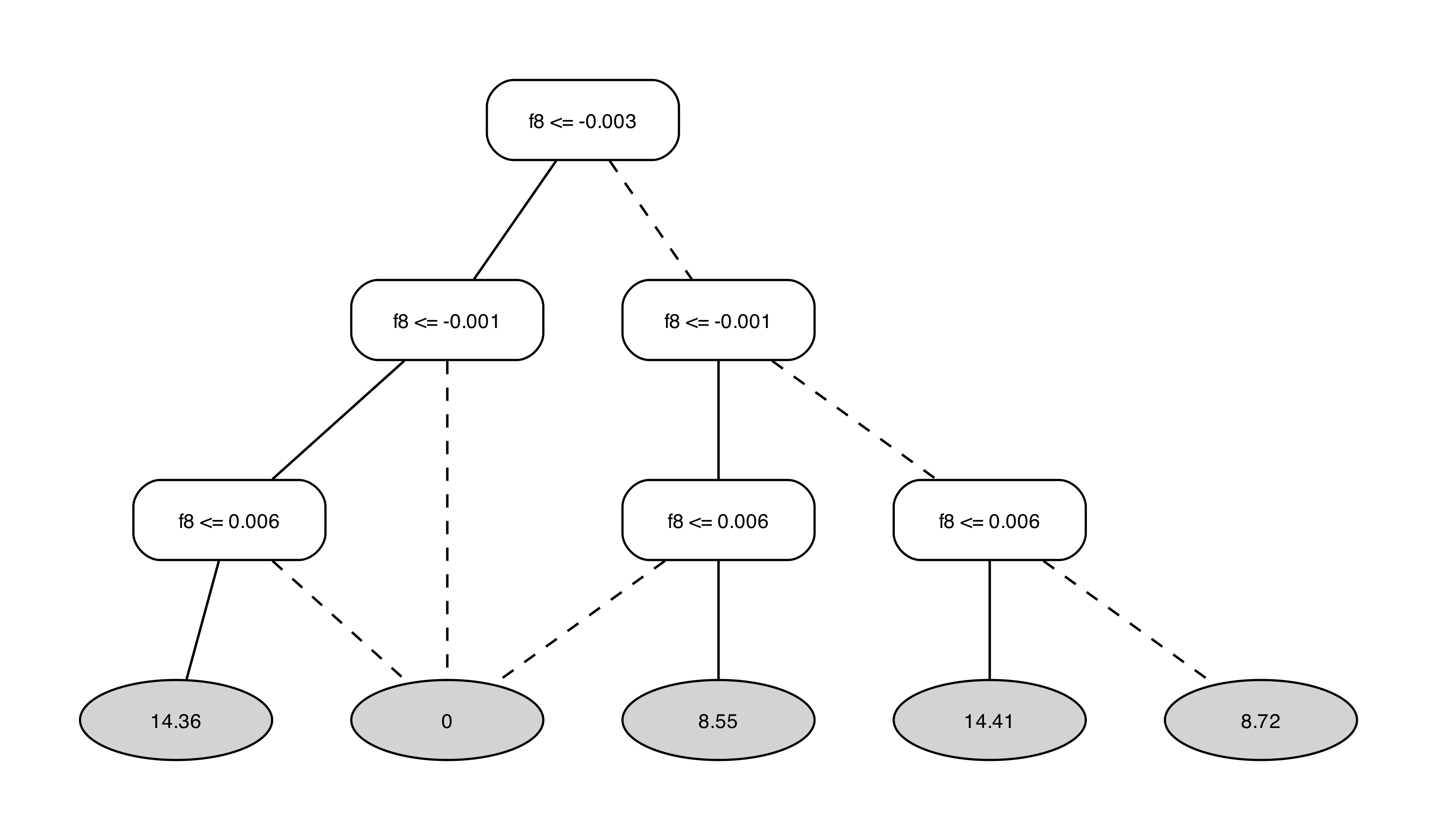}
    \caption{After Tree 1}
\end{subfigure}
\hfill
\begin{subfigure}{0.6\textwidth}
    \centering
    \includegraphics[width=\textwidth]{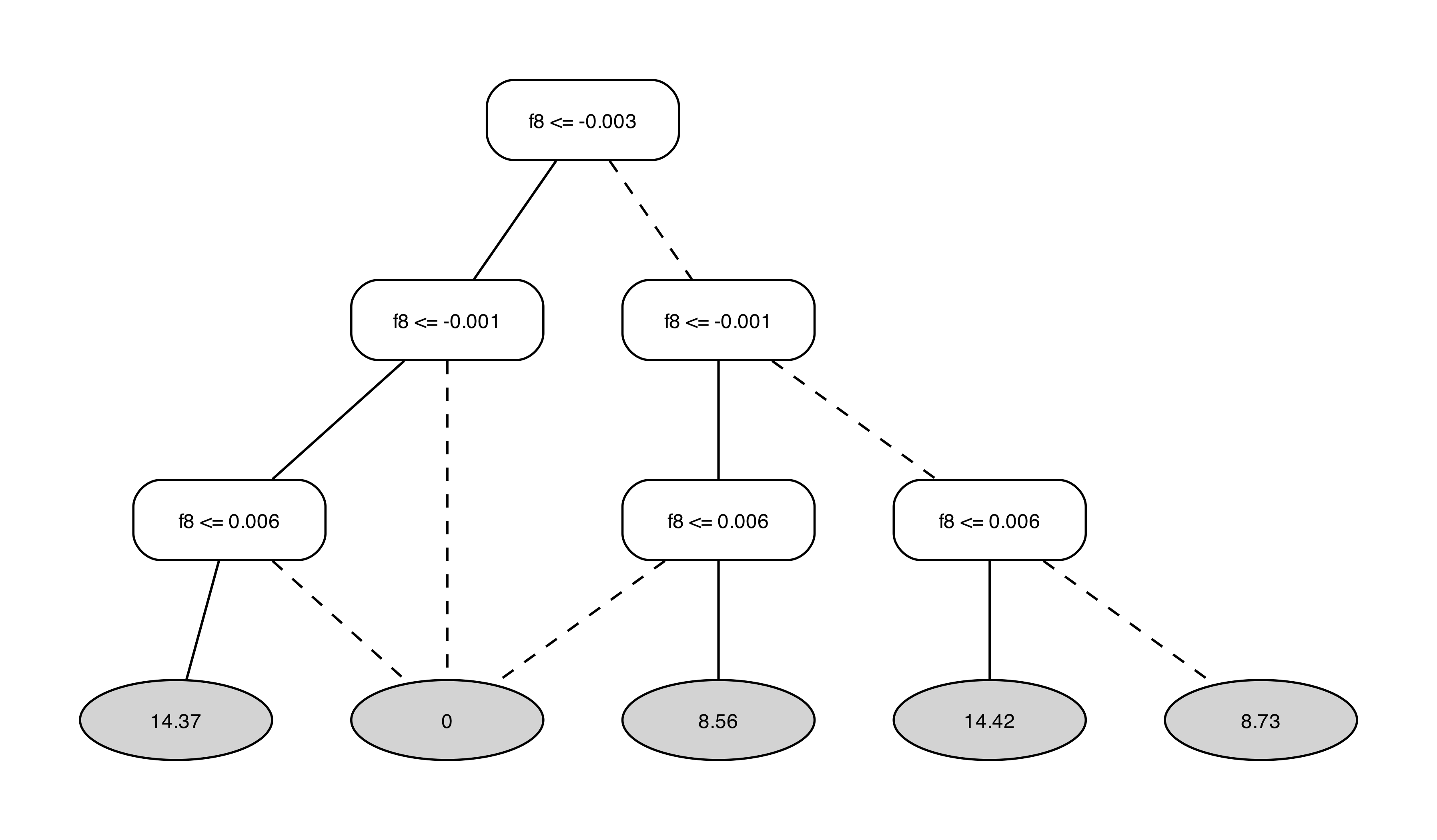}
    \caption{After Tree 2}
    \label{fig:final_tree_sub1}
\end{subfigure}
\caption{Evolution of the cumulative $\DiffSum$ ADD for Subproblem 1. The ADD in \ref{fig:final_tree_sub1} shows the final $\DiffSum$ for the Subproblem.}
\label{fig:diffsum_evolution}
\end{figure}

\subsection{Merging Subproblems and Counting}

After processing all trees, we obtain the symbolic representation of the prediction difference for a subproblem.
 Figure~\ref{fig:final_adds} displays these final ADDs for all 6 subproblems generated by the decomposition.

\begin{figure}[!htbp]
\centering
\begin{subfigure}{0.45\textwidth}
    \centering
    \includegraphics[width=\textwidth]{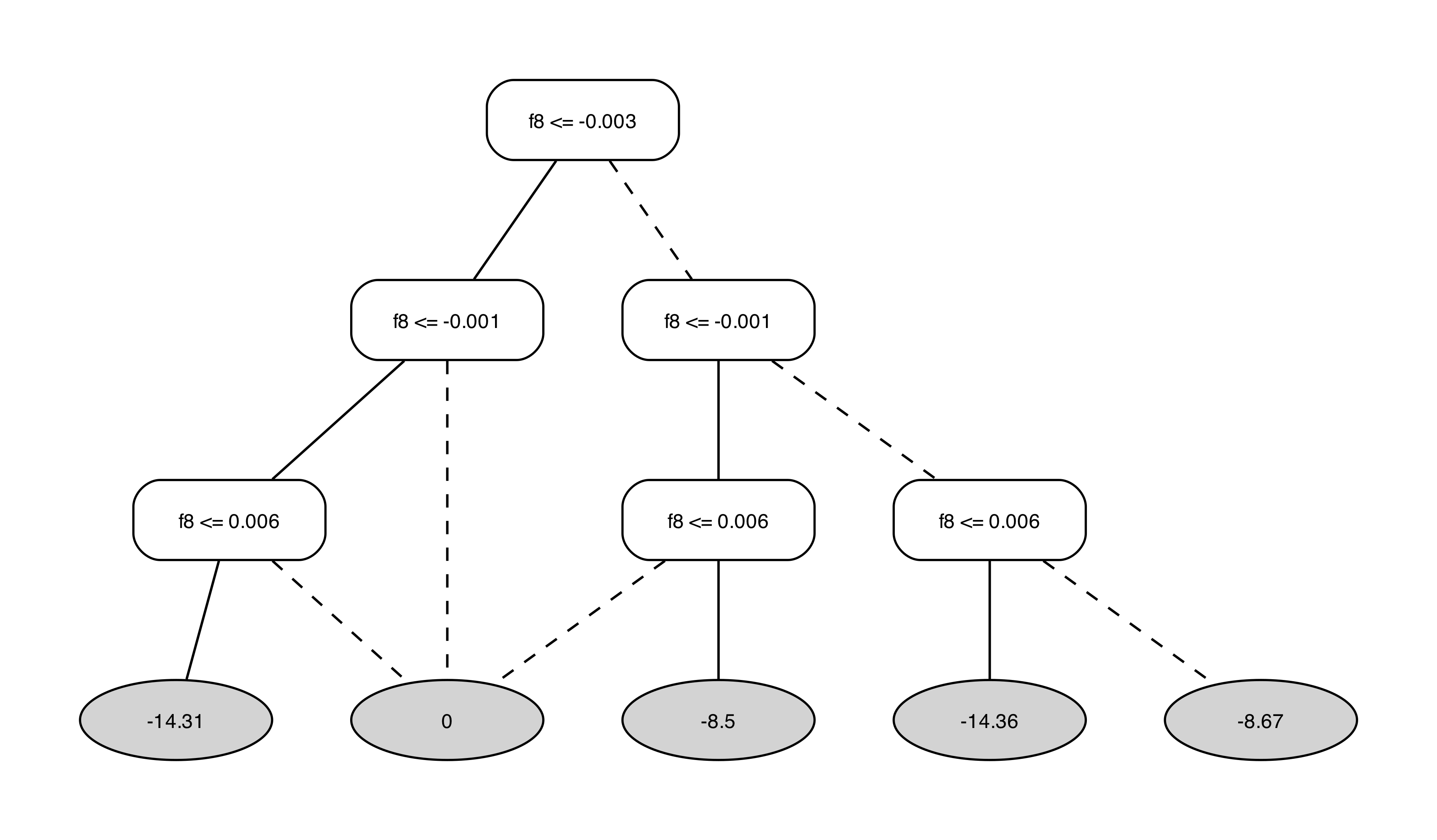}
    \caption{Subproblem 0}
\end{subfigure}
\hfill
\begin{subfigure}{0.45\textwidth}
    \centering
    \includegraphics[width=\textwidth]{example_run_figs/subproblem_1/intermediate_adds/diff_sum_after_tree_2.png}
    \caption{Subproblem 1}
\end{subfigure}

\begin{subfigure}{0.45\textwidth}
    \centering
    \includegraphics[width=\textwidth]{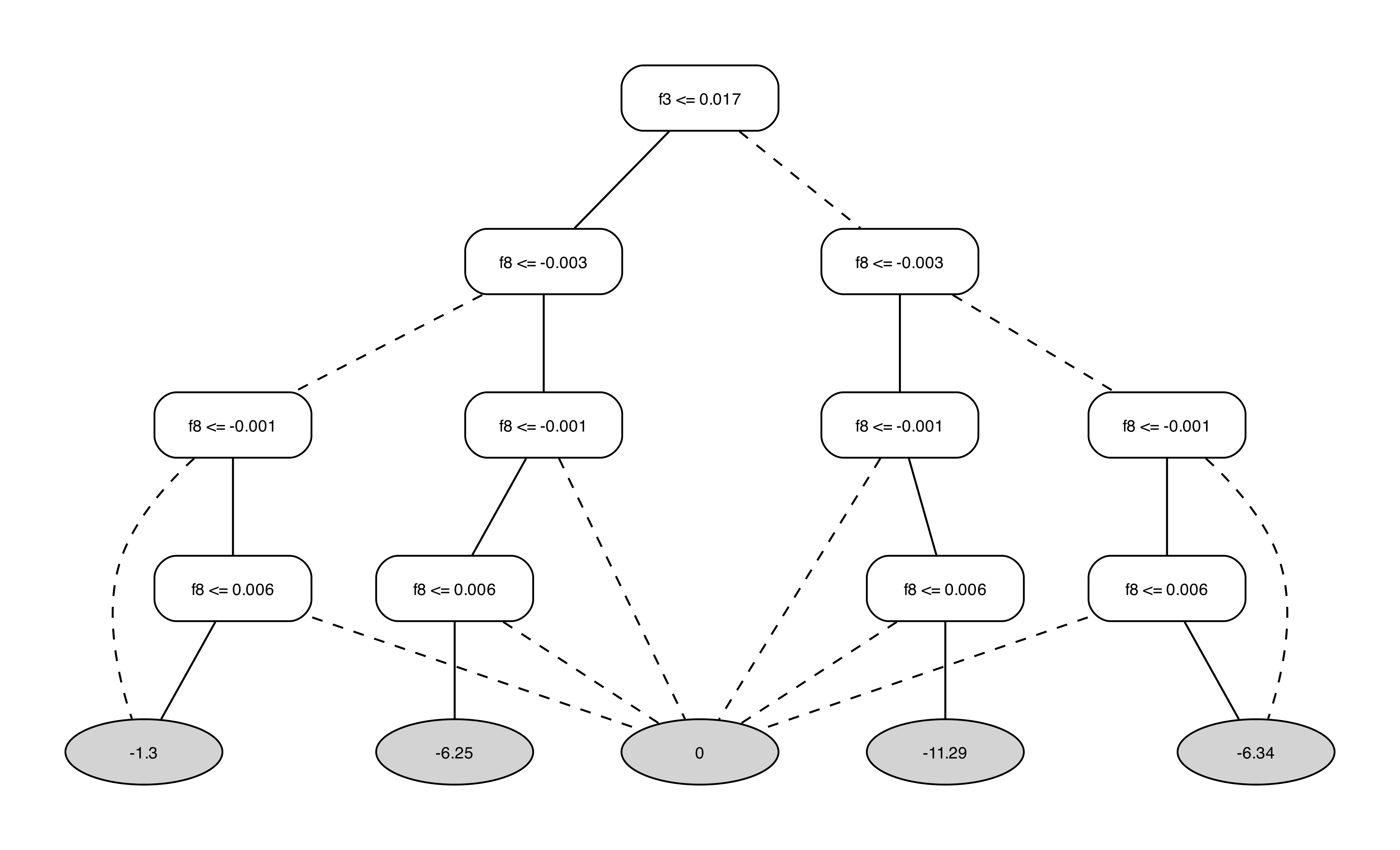}
    \caption{Subproblem 2}
\end{subfigure}
\hfill
\begin{subfigure}{0.45\textwidth}
    \centering
    \includegraphics[width=\textwidth]{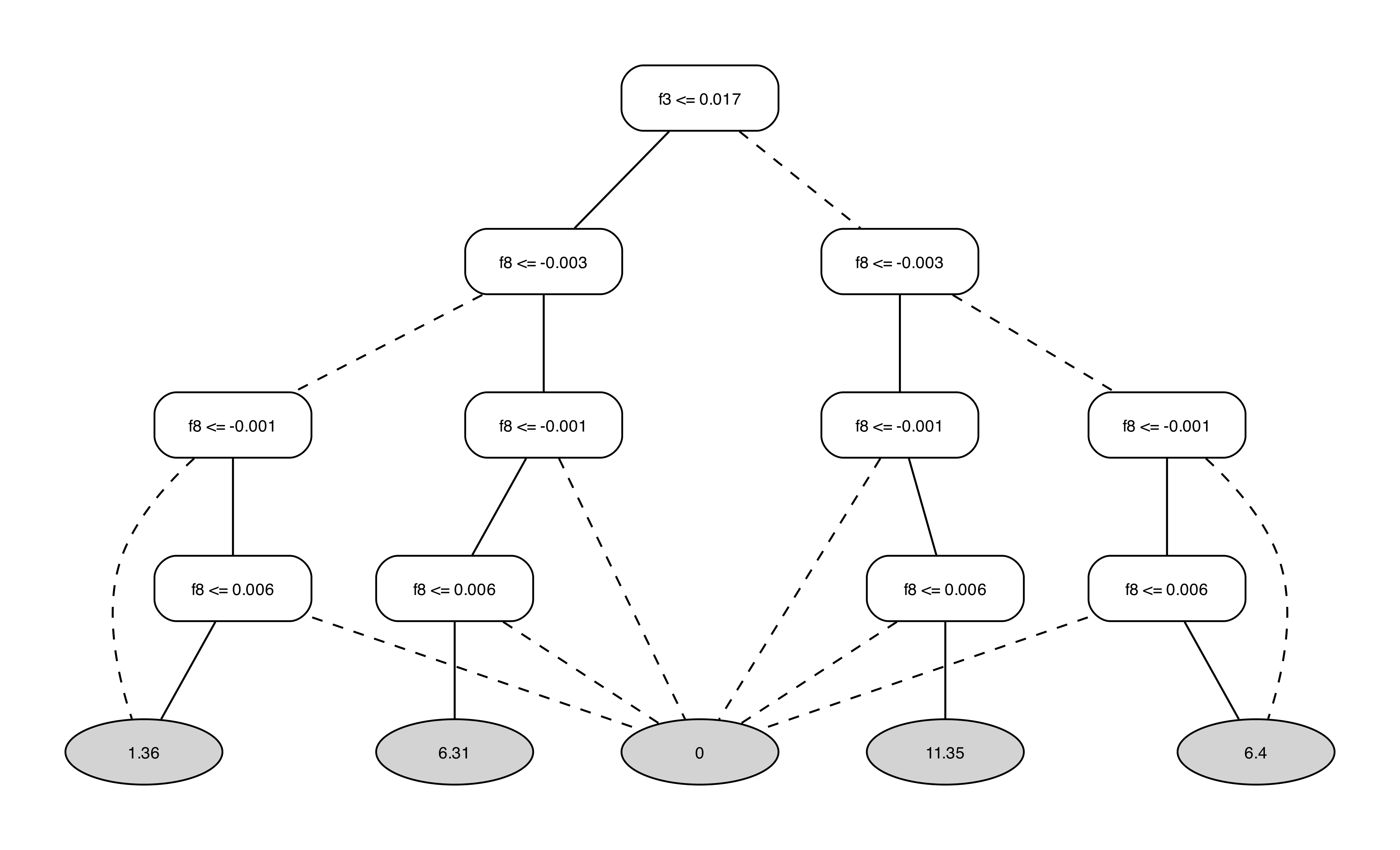}
    \caption{Subproblem 3}
\end{subfigure}

\begin{subfigure}{0.35\textwidth}
    \centering
    \includegraphics[width=\textwidth]{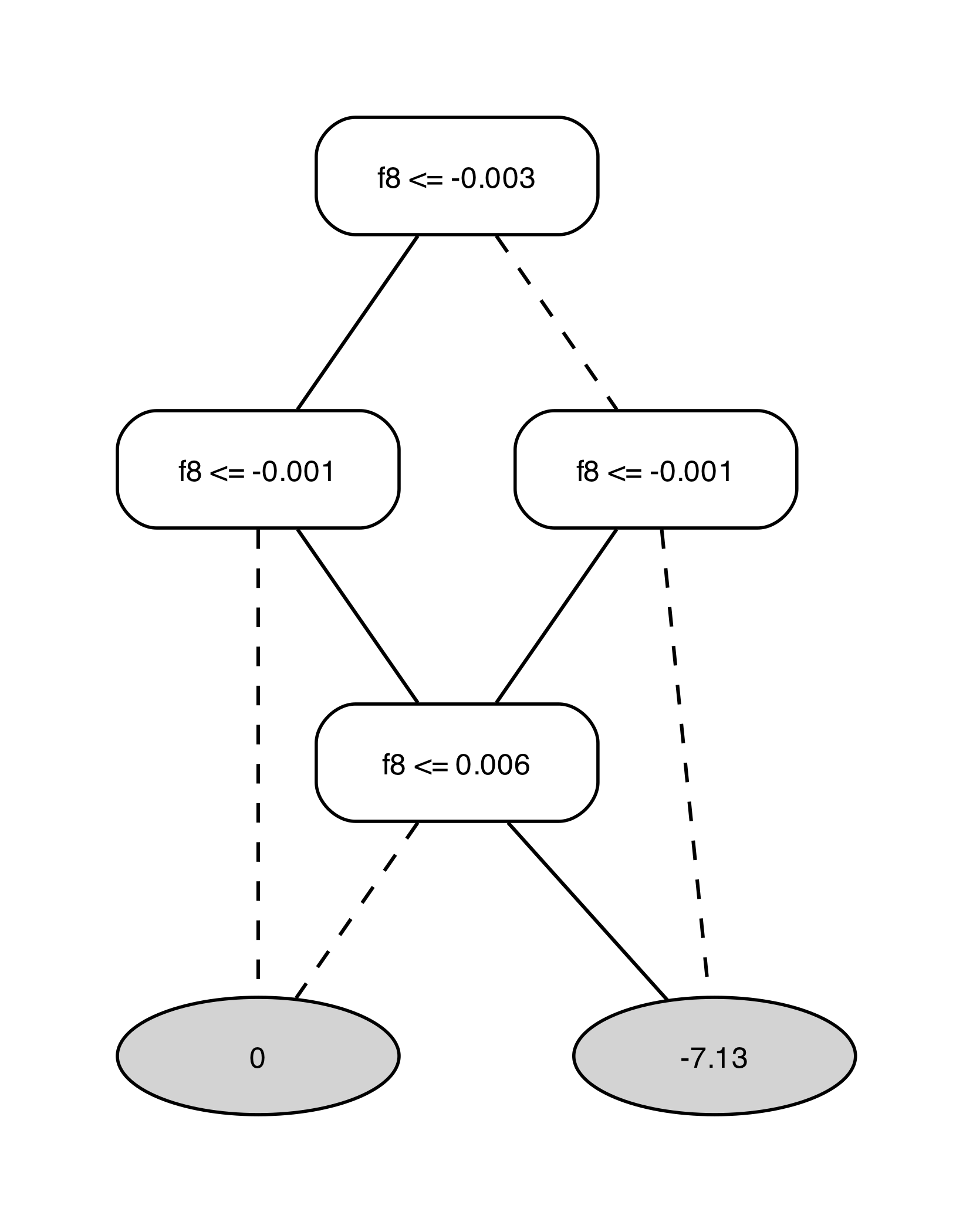}
    \caption{Subproblem 4}
\end{subfigure}
\hfill
\begin{subfigure}{0.35\textwidth}
    \centering
    \includegraphics[width=\textwidth]{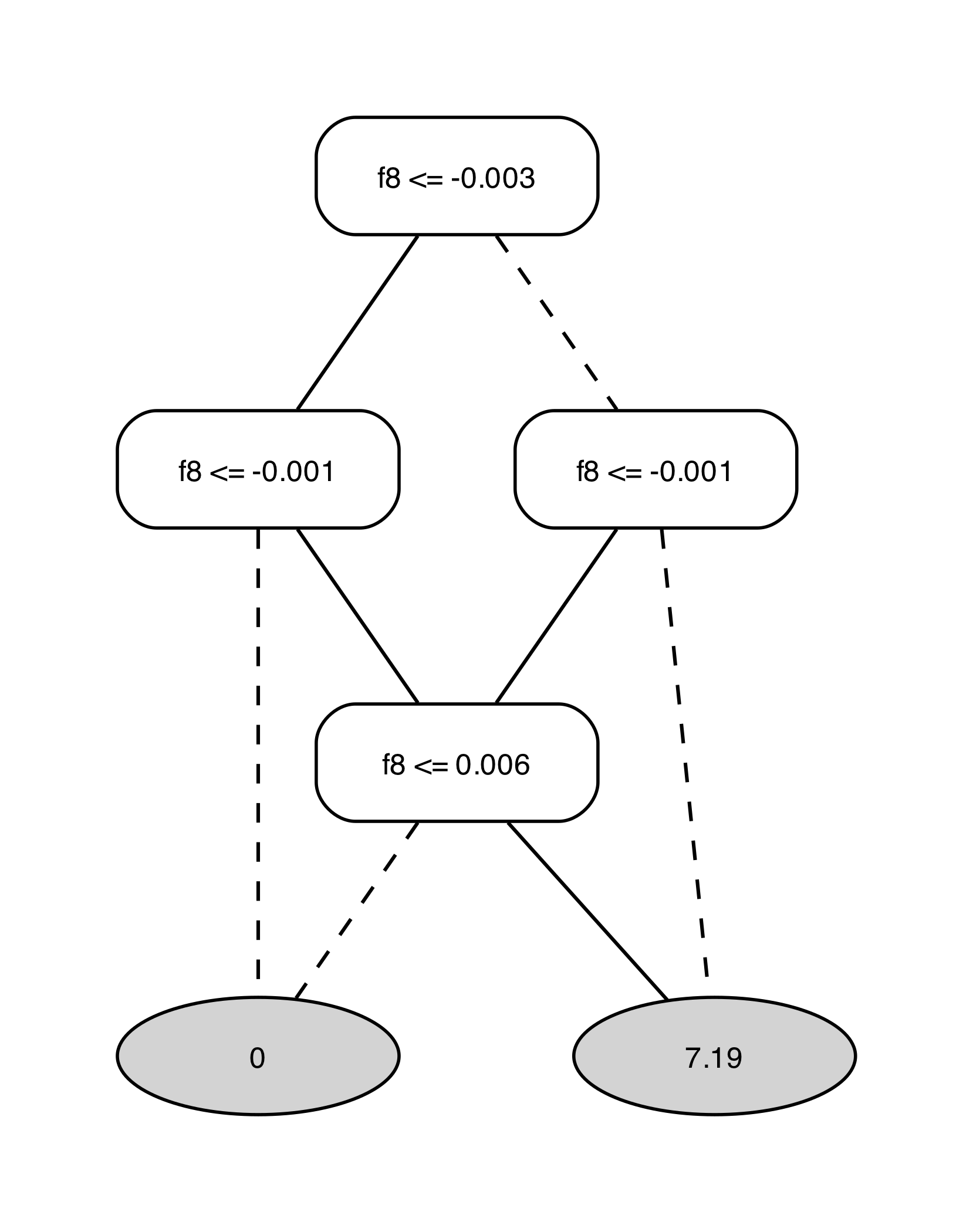}
    \caption{Subproblem 5}
\end{subfigure}
\caption[]{Final $\DiffSum$ ADDs for subproblems 0--5. These graphs represent the accumulated ensemble prediction difference for each mask pair before applying the gap threshold.}
\label{fig:final_adds}

\end{figure}

We then transform these ADDs into a boolean representation (BDDs) by applying the gap threshold $G$. 
Terminal nodes in the ADD are checked: if value $> G$, the path maps to $1$ (sensitive).
The resulting BDDs are shown in Figure~\ref{fig:final_bdds}.

It is important to recognize that a single satisfying assignment in this resulting BDD corresponds to \emph{two} distinct 
sensitive regions in the original input space—one for each mask involved in the subproblem pair.
Since the decomposition process generates subproblems for adjacent mask pairs, the resulting sets of sensitive regions 
frequently overlap.
Figure~\ref{fig:final_bdds} demonstrates this phenomenon (e.g., between Subproblem 1 and Subproblem 3, which share mask $m_2$).

\begin{figure}[!htbp]
\centering
\begin{subfigure}{0.32\textwidth}
    \includegraphics[width=\textwidth]{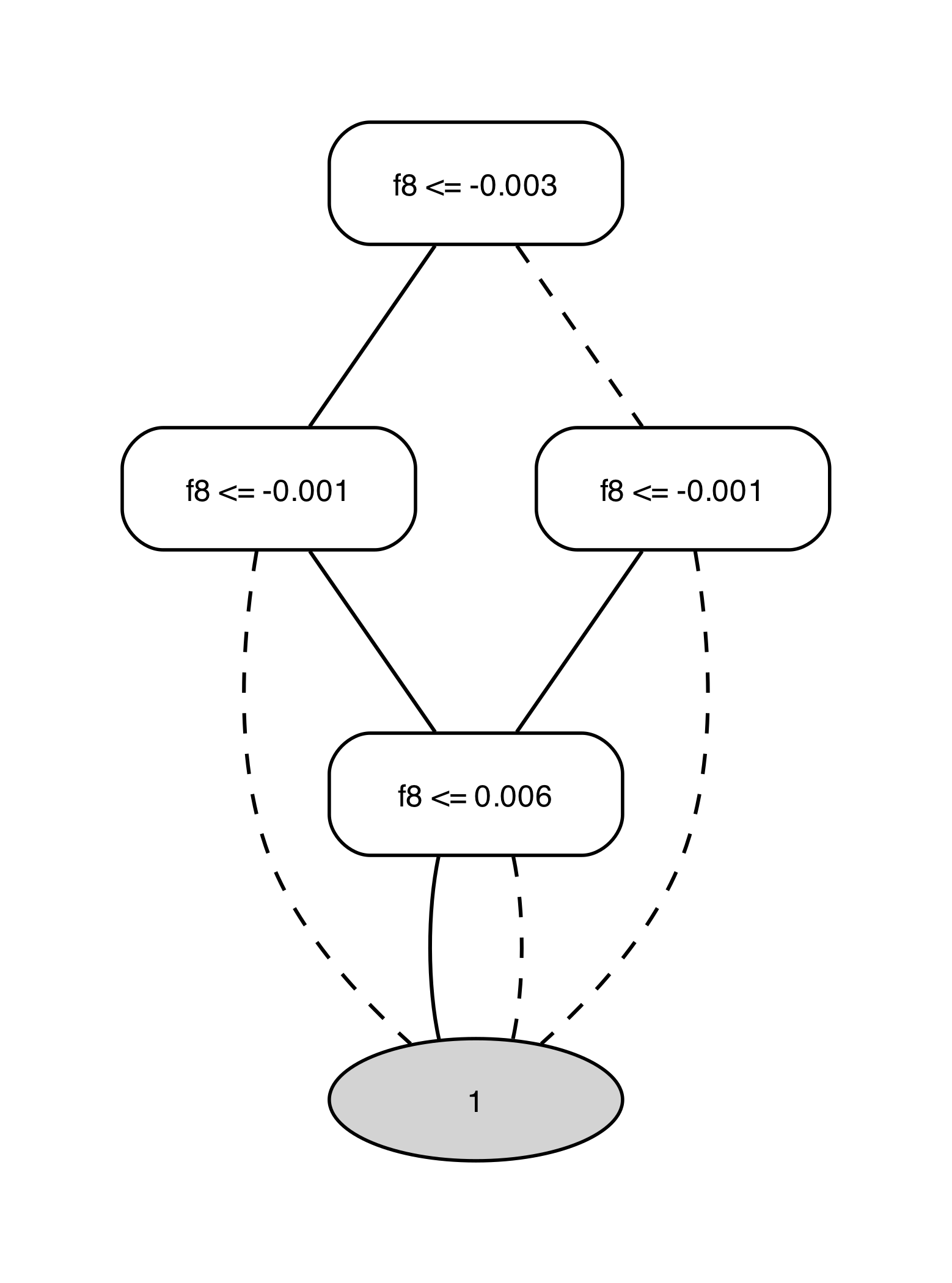}
    \caption{Subproblem 1}
\end{subfigure}
\hfill
\begin{subfigure}{0.32\textwidth}
    \includegraphics[width=\textwidth]{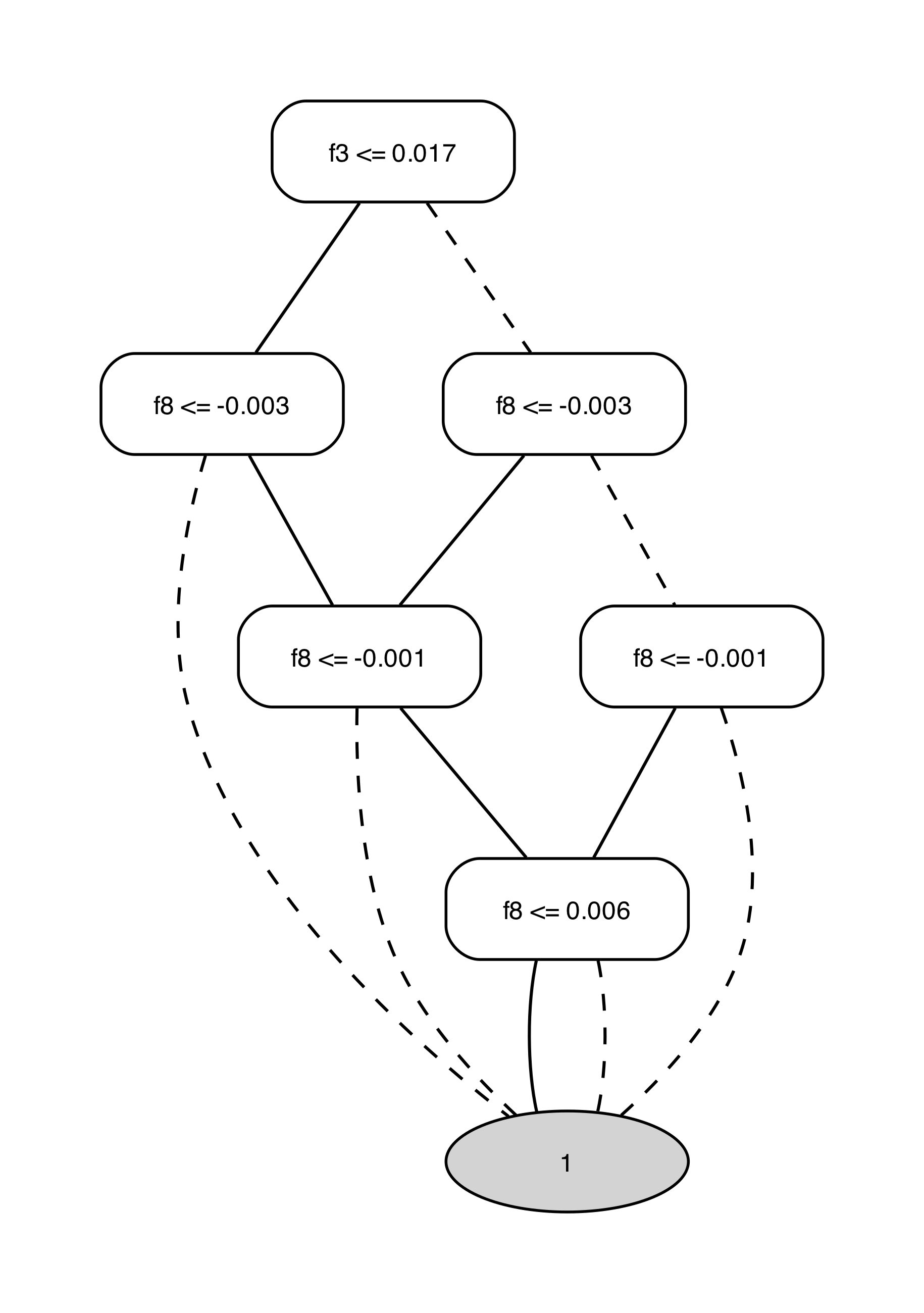}
    \caption{Subproblem 3}
\end{subfigure}
\begin{subfigure}{0.32\textwidth}
    \includegraphics[width=\textwidth]{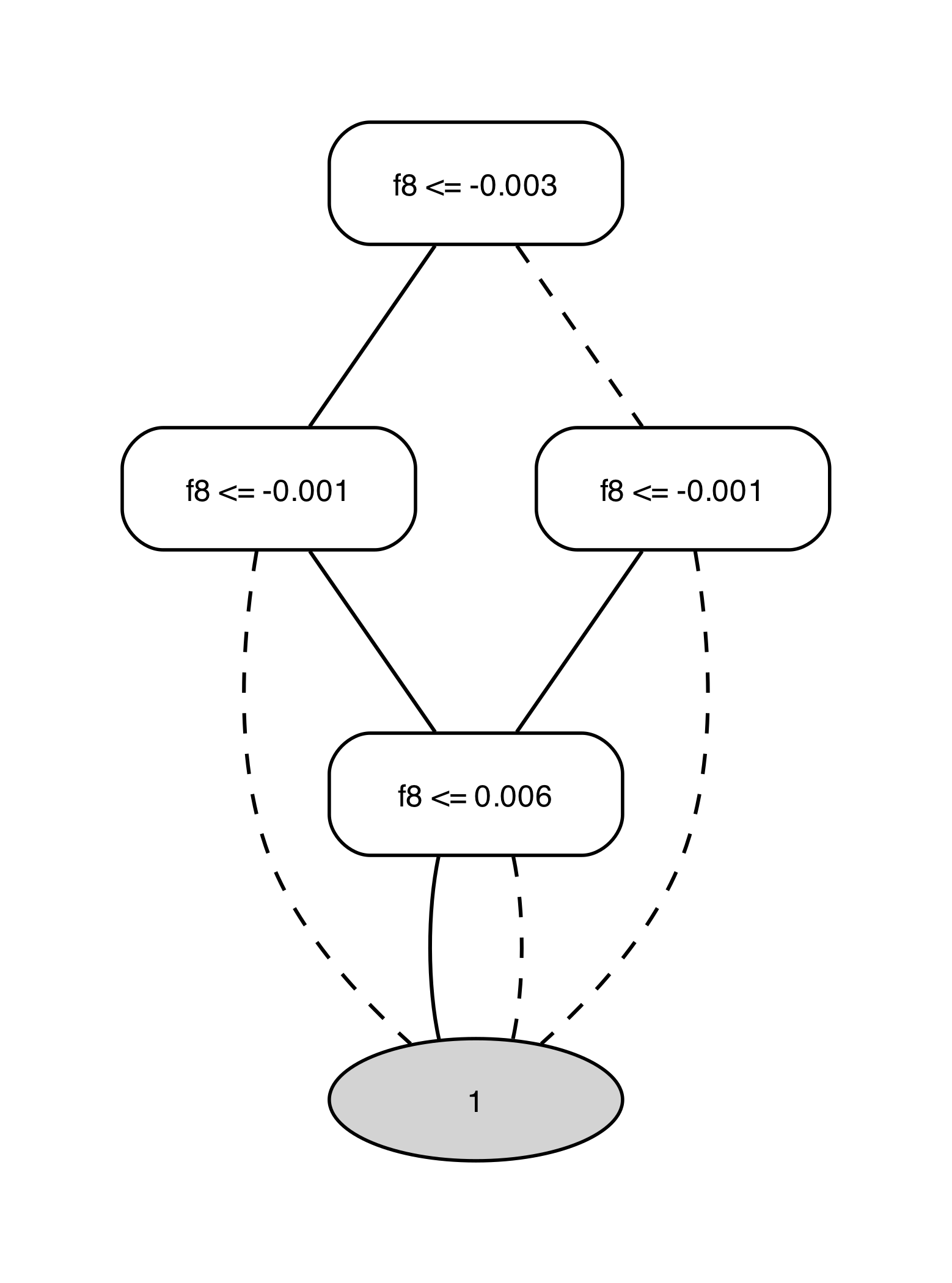}
    \caption{Subproblem 5}
\end{subfigure}
\caption{Final BDDs for subproblems 1, 3 and 5, where the ADD did not evaluate to 0 upon applying the threshold.
The rest of the ADDs resulted in BDDs that evaluate to 0 as all the leaf values were below the gap threshold $G=0.1$.
Each BDD represents the sensitive regions found for a specific pair of bitmasks. Overlaps between these BDDs 
necessitate the use of the Pepin counting technique.}
\label{fig:final_bdds}
\end{figure}

To address the overlap, $\xcount$ utilizes the sampling-based Pepin counting technique (detailed in Section~\ref{sec:algorithm}).
Conceptually, this involves a global solution set (referred to as $X$ in the algorithm) where satisfying assignments from the BDDs of different subproblems are aggregated.
The algorithm statistically estimates the size of the union of these sets to produce the final count of \emph{unique} sensitive regions.

\end{document}